\newcommand{\algname}{\texttt{OPPO+ }}
\newcommand{\xmark}{\ding{55}}
\title{\LARGE A Theoretical Analysis of Optimistic Proximal Policy Optimization in Linear Markov Decision Processes
}
\author{Han Zhong\thanks{Peking University. Email: \texttt{hanzhong@stu.pku.edu.cn}} 
\qquad \qquad \qquad Tong Zhang\thanks{The Hong Kong University of Science and Technology. Email: \texttt{tongzhang@tongzhang-ml.org}}   
}
\date{}
\begin{document}
\maketitle

\begin{abstract}
    The proximal policy optimization (PPO) algorithm stands as one of the most prosperous methods in the field of reinforcement learning (RL). Despite its success, the theoretical understanding of PPO remains deficient. Specifically, it is unclear whether PPO or its optimistic variants can effectively solve linear Markov decision processes (MDPs), which are arguably the simplest models in RL with function approximation. 
    To bridge this gap, we propose an optimistic variant of PPO for episodic adversarial linear MDPs with full-information feedback, and establish a $\tilde{\mathcal{O}}(d^{3/4}H^2K^{3/4})$ regret for it. Here $d$ is the ambient dimension of linear MDPs, $H$ is the length of each episode, and $K$ is the number of episodes. Compared with existing policy-based algorithms, we achieve the state-of-the-art regret bound in both stochastic linear MDPs and adversarial linear MDPs with full information. Additionally, our algorithm design features a novel multi-batched updating mechanism and the theoretical analysis utilizes a new covering number argument of value and policy classes, which might be of independent interest. 
\end{abstract}

\section{Introduction}

Reinforcement learning (RL) \citep{sutton2018reinforcement} is a prominent approach to solving sequential decision making problems. Its tremendous successes \citep{kober2013reinforcement,silver2016mastering,silver2017mastering,brown2019superhuman}  can be attributed, in large part, to the advent of deep learning \citep{lecun2015deep} and the development of powerful deep RL algorithms \citep{mnih2015human,schulman2015trust,schulman2017proximal,haarnoja2018soft}. Among these algorithms, the proximal policy optimization (PPO) \citep{schulman2017proximal} stands out as a particularly significant approach. Indeed, it continues to play a pivotal role in recent advancements in large language models \citep{ouyang2022training}.

Motivated by the remarkable empirical success of PPO, numerous studies seek to provide theoretical justification for its effectiveness. In particular, \citet{cai2020provably} develop an optimistic variant of the PPO algorithm in adversarial linear mixture MDPs with full-information feedback \citep{ayoub2020model,modi2020sample}, where the transition kernel is a linear combination of several base models. Theoretically, they show that the optimistic variant of PPO is capable of tackling problems with large state spaces by establishing a sublinear regret that is independent of the size of the state space. Building upon this work, \citet{he2022near} study the same setting and refine the regret bound derived by \citet{cai2020provably} using the weighted regression technique \citep{zhou2021nearly}.
However, the algorithms in \citet{cai2020provably,he2022near} and other algorithms for linear mixture MDPs \citep{ayoub2020model,zhou2021nearly} are implemented in a model-based manner and require an integration of the individual base model, which can be computationally expensive or even intractable in general. Another arguably simplest RL model involving function approximation is linear MDP \citep{yang2019sample,jin2020provably}, which assumes that the reward functions and transition kernel enjoy a low-rank representation. For this model, several works \citep{jin2020provably,hu2022nearly,agarwal2022vo,he2022nearly} propose value-based algorithms that directly approximate the value function and provide regret guarantees. To demonstrate the efficiency of PPO in linear MDPs from a theoretical perspective, one potential approach is to extend the results of \citet{cai2020provably,he2022near} to linear MDPs. However, this extension poses significant challenges due to certain technical issues that are unique to linear MDPs. See \S \ref{sec:challenge:novelty} for a detailed description. 


\begin{table*}[t]
    \centering\resizebox{\columnwidth}{!}{
    \begin{tabular}{ | c | c | c | c | c | }
    \hline
    & Sto. + Bandit
    & Adv. + Full-infor. & Adv. + Bandit & Regret \\ 
    \hline
   \citep{zanette2021cautiously} & \checkmark & \xmark  & \xmark & $\tilde{\cO}( d^{3/4}H^{13/4}K^{3/4})$ \\
    \hline
    \citep{dai2023refined}   & \checkmark & \checkmark & \checkmark &  $\tilde{\cO}(d^{2/3} A^{1/9} H^{20/9} K^{8/9}) $ \\
    \hline
    \citep{sherman2023improved}   & \checkmark & \checkmark &  \checkmark &  $\tilde{\cO}(dH^2K^{6/7})$  \\
    \hline
    Our Work   & \checkmark &  \checkmark & \xmark & $\tilde{\cO}(d^{3/4}H^2K^{3/4})$   \\
    \hline
    \end{tabular}
    }
    \caption{ A comparison with closely related works on policy optimization for linear MDPs. Here ``Sto.'' and ``Adv.'' represent stochastic rewards and adversarial rewards, respectively. Additionally, ``Bandit'' and ``Full-infor.'' signify bandit feedback and full-information feedback. We remark that \citet{zanette2021cautiously} do not consider the regret minimization problem and the regret reported in the table is implied by their sample complexity. We will compare the sample complexity provided in \citet{zanette2021cautiously} and the complexity implied by our regret in Remark~\ref{remark:sample:complexity}. 
    }
    \label{table:comp}
    \end{table*}

In this paper, we address this technical challenge and prove that the optimistic variant of PPO is provably efficient for stochastic linear MDPs and even adversarial linear MDPs with full-information feedback. Our contributions are summarized below.

\begin{itemize}
    \item In terms of algorithm design, we propose a new algorithm \texttt{OPPO+} (Algorithm~\ref{alg:oppo}), an optimistic variant of PPO, for adversarial linear MDPs with full-information feedback. Our algorithm features two novel algorithm designs including a \emph{multi-batched updating mechanism} and a \emph{policy evaluation step via average rewards}.
    \item Theoretically, we establish a $\tilde{\cO}(d^{3/4}H^2K^{3/4})$ regret for \texttt{OPPO+}, where $d$ is the ambient dimension of linear MDPs, $H$ is the horizon, and $K$ is the number of episodes. To achieve this result, we employ two new techniques. Firstly, we adopt a novel covering number argument for the value and policy classes, as explicated in \S \ref{pf:lem:self:normalized}. Secondly, in Lemma~\ref{lem:bellman:error}, we meticulously analyze the drift between adjacent policies to control the error arising from the policy evaluation step using average rewards.
    \item Compared with existing policy optimization algorithms, our algorithm achieves a better regret guarantee for both stochastic linear MDPs and adversarial linear MDPs with full-information feedback (to our best knowledge). See Table~\ref{table:comp} for a detailed comparison. 
\end{itemize}

In summary, our work provides a new theoretical justification for PPO in linear MDPs. To illustrate our theory more, we highlight the challenges and our novelties in \S \ref{sec:challenge:novelty}.

\subsection{Challenges and Our Novelties} \label{sec:challenge:novelty}

\paragraph{Challenge 1: Covering Number of Value Function Class.} In the analysis of linear MDPs (see Lemma \ref{lem:self:normalized} or Lemma B.3 in \citet{jin2020provably}), we need to calculate the covering number of $\cV_h^k$, which is the function class of the estimated value function at the $h$-th step of the $k$-th episode and takes the following form:
\$ 
\cV_h^k = \{ V(\cdot) = \la Q_h^k(\cdot, \cdot), \pi_h^k(\cdot \mid \cdot) \ra_{\cA}  \},
\$  
where $Q_h^k$ and $\pi_h^k$ are estimated Q-function and policy at the $h$-th step of the $k$-th episode, respectively. For value-based algorithms (e.g., LSVI-UCB in \citet{jin2020provably}), $\pi_h^k$ is the greedy policy with respect to $Q_h^k$. Then we have
\$
\cV_h^k = \{ V(\cdot) = \max_{a} Q_h^k(\cdot, a)  \}.
\$ 
Since $\max_a$ is a contraction map, it suffices to calculate $\cN(\cQ_h^k)$, where $\cQ_h^k$ is the function class of $Q_h^k$ and $\cN(\cdot)$ denotes the covering number. By a standard covering number argument (Lemma~\ref{lem:cover:q} or Lemma D.6 in \citet{jin2020provably}), we can show that $\log \cN(\cQ_h^k) \le \tilde{\cO}(d^2)$. However, for policy-based algorithms such as PPO, $\pi_h^k$ is a stochastic policy, which makes the log-covering number $\log \cN(\cV_h^k)$ may have a polynomial dependency on the size of action space $\cA$ (log-covering number of $|\cA|$-dimensional probability distributions is at the order of $|\cA|$). We also remark that linear mixture MDPs are more amenable to theoretical analysis compared to linear MDPs, as they do not necessitate the calculation of the covering number of $\cV_h^k$ \citep{ayoub2020model,cai2020provably}. As a result, the proof presented by \citet{cai2020provably} is valid for linear mixture MDPs, but cannot be extended to linear MDPs.

\paragraph{Novelty 1: Multi-batched Updating and A New Covering Number Argument.}
Our key observation is that if we improve the policy like PPO (see \eqref{eq:update} or \citet{schulman2017proximal,cai2020provably}), $\pi_h^k$ admits a softmax form, i.e.,
\$ 
\pi_h^k(\cdot \mid \cdot) \propto \exp\Big( \sum_{i = 1}^{l} Q_h^{k_i}(\cdot, \cdot) \Big).
\$ 
 Here $\{k_i\}_{i=1}^l$ is a sequence of episodes, where $k_i \leq k$ for all $i \in [l]$, denoting the episodes in which our algorithm performs policy optimization prior to the $k$-th episode. By a technical lemma (Lemma~\ref{lem:smooth:policy}), we can show that
\$
\log\cN(\text{class of } \pi_h^k) \lesssim \sum_{i = 1}^l \log \cN(\cQ_h^{k_i}) \lesssim \tilde{\cO}(l \cdot d^2).
\$ 
If we perform the policy optimization in each episode like \citet{cai2020provably,shani2020optimistic}, $l$ may linear in $K$ and the final regret bound becomes vacuous. Motivated by this, we use a multi-batched updating scheme. In specific, \algname divides the whole learning process into several batches and only updates policies at the beginning of each batch. See Figure~\ref{fig:alg} for visualization. For example, if the number of batches is $K^{1/2}$ (i.e., each batch consists of consecutive $K^{1/2}$ episodes), we have $l \le K^{1/2}$ and the final regret is at the order of $\tilde{\cO}(K^{3/4})$. Here we assume $K^{1/2}$ is a positive integer for simplicity. See \S \ref{pf:lem:self:normalized} for details.

\begin{figure}[t]
    \centering
    \includegraphics[width=0.8\textwidth]{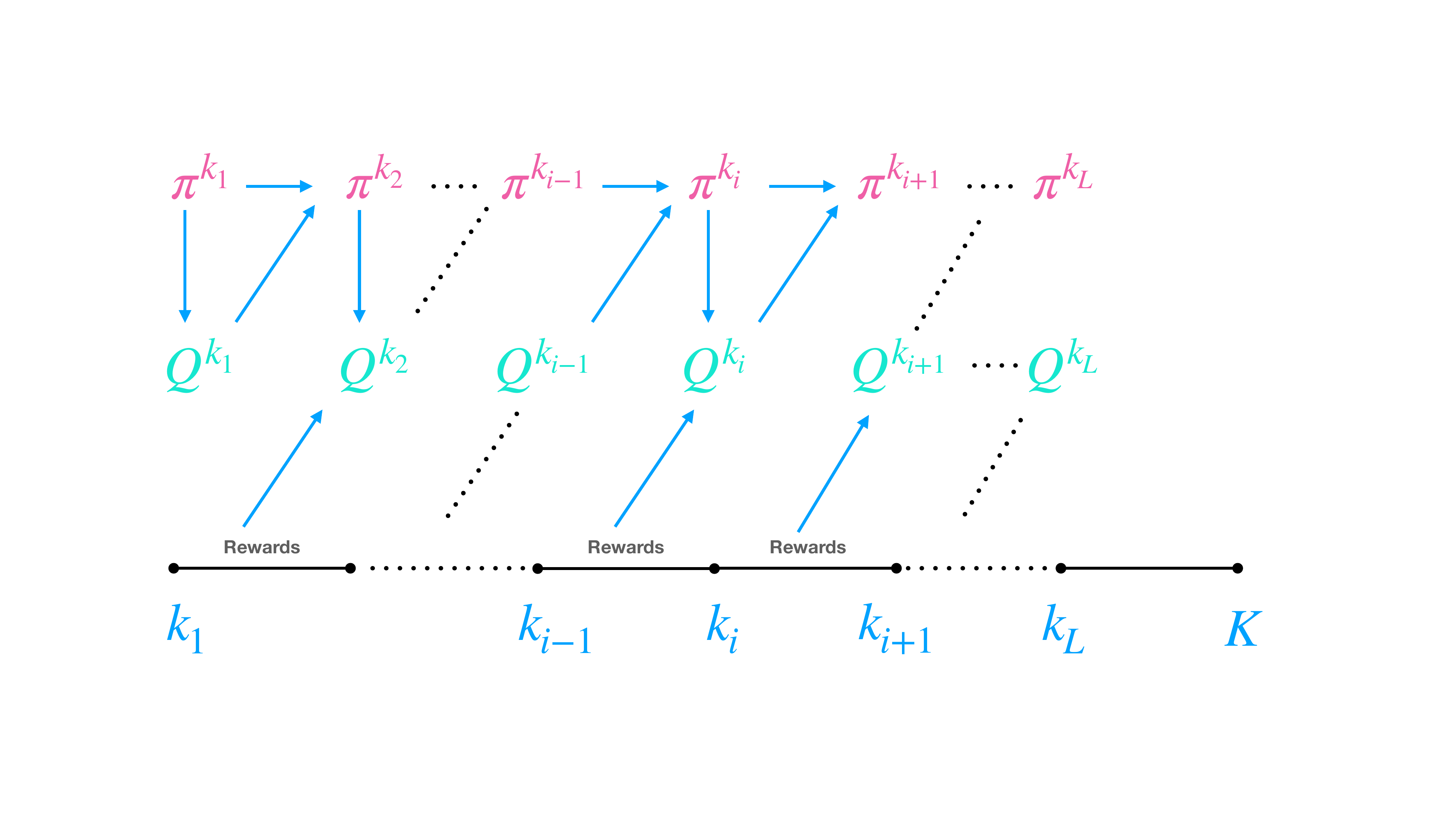}
    \caption{Algorithm diagram. The entire $K$ episodes are partitioned into $L=K/B$ batches, with each batch containing $B$ consecutive episodes. The policy/value updating only occurs at the beginning of each batch. The policy evaluation uses the reward function in the last batch.}
    \label{fig:alg}
\end{figure}

\paragraph{Challenge 2: Adversarial Rewards.}

Compared with previous value-based algorithms \citep[e.g.,][]{jin2020provably}, one superiority of optimistic PPO is that they can learn adversarial MDPs with full-information feedback \citep{cai2020provably,shani2020optimistic}. In \citet{cai2020provably,shani2020optimistic}, the policy evaluation step is that
\$
Q_h^k = r_h^k + \hat{\PP}_h^k V_{h+1}^k + \text{bonus function}, \quad \forall h \in [H], \quad V_{h+1}^k = 0,
\$ 
where $r_h^k$ is the adversarial reward function and $\hat{\PP}_h^k V_{h+1}^k$ is the estimator of the expected next step value of $V_{h+1}^k$. This policy evaluation step is invalid if we use the multi-batched updating. Consider the following case, if the number of batches is $K^{1/2}$ and 
\$
r_h^k(\cdot, \cdot) = \left\{ 
    \begin{array}{cc}
    0 & k \in \{i K^{1/2} + 1\}_{i = 0}^{K^{1/2}} \\
   \mathrm{arbitrary}   & \text{otherwise} 
\end{array} \right. .
\$
Then the algorithm only uses zero rewards to find the optimal policy in hindsight with respect to arbitrary adversarial rewards, which is obviously impossible.

\paragraph{Novelty 2: Policy Evaluation via Average Reward and Smoothness Analysis.}

To tackle the above challenge, we adopt the following policy evaluation step at the beginning of each batch
\$
Q_h^k = \bar{r}_h^k + \hat{\PP}_h^k V_{h+1}^k + \text{bonus function}, \quad \forall h \in [H], \quad V_{h+1}^k = 0,
\$ 
where $\bar{r}_h^k$ the average reward of the last batch:
\$
\bar{r}_h^k = \frac{\sum \text{(reward functions of the last batch)}}{\text{batch size}} .
\$ 
Let $\pi^{k_i}$ denote the policy executed in the $i$-th batch. Intuitively, $\pi_{k_{i+1}}$ is the desired policy within $i$-th batch since its calculation only uses the rewards in the first $(i - 1)$ batches (cf. Figure~\ref{fig:alg}). Hence, compared with \citet{cai2020provably,shani2020optimistic}, we need to handle the gap between the performance of $\pi^{k_{i+1}}$ and $\pi^{k_{i}}$ in the $i$-th batch. Fortunately, this error can be controlled due to the ``smoothness'' of policies in adjacent batches. See Lemma~\ref{lem:bellman:error} for details.

\subsection{Related Works}

\paragraph{Policy Optimization Algorithms.}
The seminal work of \citet{schulman2017proximal} proposes the PPO algorithm, and a line of following works seeks to provide theoretical guarantees for it. In particular, \citet{cai2020provably} proposes the optimistic PPO (OPPO) algorithm for adversarial linear mixture MDPs and establishes regret $\tilde{\cO}(d\sqrt{H^4K})$ for it. Then, \citet{he2022near} improve the regret to $\tilde{\cO}(d\sqrt{H^3K})$ by the weighted regression technique \citep{zhou2021nearly}. Besides their works on linear mixture MDPs, \citet{shani2020optimistic,wu2022nearly} provide fine-grained analysis of optimistic variants of PPO in the tabular case. The works of \citet{fei2020dynamic,zhong2021optimistic} show that optimistic variants of PPO can solve non-stationary MDPs. However, none of these works show that the optimistic variant of PPO is provably efficient for linear MDPs.

There is another line of works \citep{agarwal2020pc,feng2021provably,zanette2021cautiously} proposes optimistic policy optimization algorithms based on the natural policy gradient (NPG) algorithm \citep{kakade2001natural} and the policy-cover technique. But their works are limited to the stochastic linear MDPs, while our work can tackle adversarial rewards. Compared with their results for stochastic linear MDPs, our work can achieve a better regret and compatible sample complexity. See Table~\ref{table:comp} and Remark~\ref{remark:sample:complexity} for a detailed comparison. Several recent works \citep{neu2021online,luo2021policy,kong2023improved,dai2023refined,sherman2023improved} study the more challenging problem of learning adversarial linear MDPs with only bandit feedback, which is beyond the scope of our work. Without access to exploratory policies or even known transitions, their regret is at least $\tilde{\cO}(K^{6/7})$ \citep{sherman2023improved}, while our work achieves a better $\tilde{\cO}(K^{3/4})$ regret with the full-information feedback assumption.

\paragraph{RL with Linear Function Approximation.}

Our work is related to previous works proposing value-based algorithms for linear MDPs \citep{yang2019sample,jin2020provably}. The work of \citet{yang2019sample} develops the first sample efficient algorithm for linear MDPs with a generative model. Then \citet{jin2020provably} proposes the first provably efficient algorithms for linear MDPs in the online setting. The results of \citet{jin2020provably} are later improved by 
\citep{wagenmaker2022first,hu2022nearly,he2022nearly,agarwal2022vo}. In particular, \citet{agarwal2022vo,he2022nearly} show that the nearly minimax optimal regret $\tilde{\cO}(d\sqrt{H^3K})$ is achievable in stochastic linear MDPs. Compared with these value-based algorithms, our work can tackle the more challenging adversarial linear MDPs with full-information feedback. 

There is another line of works \citep{ayoub2020model,modi2020sample,cai2020provably,zhang2021improved,he2022near,zhou2021provably,zhou2021nearly,zhou2022computationally} studying linear mixture MDPs, which is another model of RL with linear function approximation. It can be shown that linear MDPs and linear mixture MDPs are incompatible in the sense that neither model is a special case of the other. Among these works, \citet{zhou2021nearly,zhou2022computationally} establishes nearly minimax regret $\tilde{\cO}(d\sqrt{H^3K})$ for stochastic linear mixture MDPs. Our work is more related to \citet{cai2020provably,he2022near} on adversarial linear mixture MDPs with full-information feedback. We have remarked that it is nontrivial extending their results to linear MDPs.

\section{Preliminaries}

\paragraph{Notations.} We use $\NN^+$ to denote the set of positive integers. For any $H \in \NN^+$, we denote $[H] = \{1, 2, \ldots, H\}$. For any $H \in \NN^+$ and $x \in \RR$, we use the notation $\min\{x, H\}^+ = \min\{H, \max\{0, x\} \}$. Besides, we denote by $\Delta(\cA)$ the set of probability distributions on the set $\cA$. For any two distributions $P$ and $Q$ over samples $a \in \cA$, we denote $\mathrm{KL}(P \| Q) = \EE_{a \sim P} [\log \mathrm{d} P(a)/ \mathrm{d} Q(a)]$.

\paragraph{Episodic Adversarial MDPs.}

We consider an episodic MDP $\cM$, which is denoted by a tuple 
\$
(\cS, \cA, H, K, \{r_h^k\}_{(k, h) \in [K] \times [H]}, \cP = \{\cP_h\}_{h \in [H]} ),
\$
where $\cS$ is the state space, $\cA$ is the action space, $H$ is the length of each episode, $K$ is the number of episodes, $r_h^k: \cS \times \cA \mapsto [0, 1]$ is the deterministic\footnote{This assumption is without loss of generality since our subsequent results are ready to be extended to the stochastic reward case.} reward function at the $h$-th step of $k$-th episode, $\cP_h$ is the transition kernel with $\cP_h(s' \mid s, a)$ being the transition probability for state $s$ to transfer to the next state $s'$ given action $a$ at the $h$-th step. We consider the \emph{adversarial} MDPs with \emph{full-information feedback}, which means that the reward $\{r_h^k\}_{h \in [H]}$ is adversarially chosen by the environment at the beginning of the $k$-th episode and revealed to the learner after the $k$-th episode.

A policy $\pi = \{\pi_h\}_{h \in [H]}$ is a collection of $H$ functions, where $\pi_h : \cS \mapsto \Delta(\cA)$ is a function that maps a state to a distribution over action space at step $h$. For any policy $\pi$ and reward function $\{r_h^k\}_{h \in [H]}$, we define the value function $V_h^{\pi, k}: \cS \mapsto \RR$ and Q-function $Q_h^{\pi, k}: \cS \times \cA \mapsto \RR$ as
\$
V_h^{\pi, k}(x) = \EE_{\pi} \bigg[ \sum_{h' = h}^H r_{h'}^k(x_{h'}, a_{h'}) \,\bigg|\, x_h = x \bigg], \quad  Q_h^{\pi, k}(x, a) = \EE_{\pi} \bigg[ \sum_{h' = h}^H r_{h'}^k(x_{h'}, a_{h'}) \,\bigg|\, x_h = x, a_h = a \bigg],
\$ 
for any $(x, a, k, h) \in \cS \times \cA \times [K] \times [H]$. Here the expectation $\EE_{\pi}[\cdot]$ is taken with respect to the randomness of the trajectory induced by policy $\pi$ and transition kernel $\cP$. It is well-known that the value function and Q-function satisfy the following Bellman equation for any $(x, a) \in \cS \times \cA$,
\# \label{eq:bellman}
V_h^{\pi, k}(x) = \la Q_h^{\pi, k}(x, \cdot), \pi_h(\cdot \mid x)\ra_{\cA}, \quad  Q_h^{\pi, k}(x, a) = r_h^k(x, a) + (\PP_h V_{h+1}^{\pi, k})(x, a),
\# 
where $\la \cdot, \cdot \ra_{\cA}$ denotes the inner product over the action space $\cA$ and we will omit the subscript when it is clear from the context. Here $\PP_h$ is the operator defined as 
\# \label{eq:operator}
(\PP_h V)(x, a) = \EE_{x' \sim \cP_h(\cdot \mid x, a)} [V(x')]
\#  
for any $V: \cS \mapsto \RR$.

\paragraph{Interaction Process and Learning Objective.}
We consider the online setting, where the learner improves her performance by interacting with the environment repeatedly. The learning process consists of $K$ episodes and each episode starts from a fixed initial state $x_1$\footnote{Our subsequent analysis can be generalized to the case where the initial state is chosen from a fixed distribution across all episodes.}. At the beginning of the $k$-th episode, the environment adversarially chooses reward functions $\{r_h^k\}_{h\in[H]}$, which can depend on previous $(k-1)$ trajectories. Then the agent determines a policy $\pi^k$ and receives the initial state $x_1^k = x_1$. At each step $h \in [H]$, the agent receives the state $x_h^k$, chooses an action $a_h^k \sim \pi_h^k(\cdot \mid x_h^k)$, receives the reward function $r_h^k$, and transits to the next state $x_{h+1}^k$. The $k$-th episode ends after $H$ steps.

We evaluate the performance of an online algorithm by the notion of \emph{regret} \citep{cesa2006prediction}, which is defined as the value difference between the executed policies and the optimal policy in hindsight: 
 \$ 
\mathrm{Regret}(K) = \max_{\pi} \sum_{k = 1}^K \big( V_1^{\pi, k}(x_1) - V_1^{\pi^k, k}(x_1) \big). 
\$ 
For simplicity, we denote the optimal policy in hindsight by $\pi^*$, i.e., $\pi^* = \argmax_{\pi} \sum_{k = 1}^K V_1^{\pi, k}(x_1)$. 

\paragraph{Linear MDPs.}
We focus on the linear MDPs \citep{yang2019sample,jin2020provably}, where the transition kernels are linear in a known feature map.
\begin{definition}[Linear MDP] \label{def:linear:mdp}
     We say an MDP $(\cS, \cA, H, K, \{r_h^k\}_{(k, h) \in [K] \times [H]}, \cP = \{\cP_h\}_{h \in [H]})$ is a linear MDP if there exists a known feature $\phi: \cS \times \cA \mapsto \RR^d$ such that for any $(x, a, k, h) \in \cS \times \cA \times [K] \times [H]$, we have
     \$
     \cP_h(x' \mid x, a) = \phi(x, a)^\top \mu_h(x'),
     \$ 
     where $\mu_h = (\mu_h^{(1)}, \ldots, \mu_h^{(d)})$ are $d$ unknown signed measures over $\cS$ satisfying $\|\mu_h(\cS)\|_2 \le \sqrt{d}$. 
\end{definition}

Since we have access to the full-information feedback, we do not assume the reward functions are linear in the feature map $\phi$ like \citet{jin2020provably}. We also remark that the adversarial linear MDP with full-information feedback is a more challenging problem than the stochastic linear MDP with bandit feedback studied in \citet{jin2020provably}. In fact, for stochastic linear MDPs, we can assume the reward functions are known without loss of generality since learning the linear transition kernel is more difficult than the linear reward.

\section{Algorithm} \label{sec:alg}

In this section, we propose a new algorithm \algname to solve adversarial linear MDPs with full-information feedback. The pseudocode is given in Algorithm~\ref{alg:oppo}. In what follows,
we highlight the key steps of the proposed algorithm.

\begin{algorithm}[H]
\caption{OPPO+} \label{alg:oppo}
\begin{algorithmic}[1] 
\REQUIRE Batch size $B \in \NN^+$, regularization parameter $\lambda > 0$, and confidence radius $\beta > 0$.
\STATE Initialize $\{Q^0_h\}$, $\{r_h^k\}_{-B \le k \le 0}$  as zero functions and $\{\pi^0_h\}$ as uniform distributions on $\cA$, $\forall h \in [H]$.

\STATE Let $L = K/B$, $i = 1$, and $k_i = (i - 1) \cdot B + 1$ for $1 \le i \in [L]$. 
\FOR{episode $k=1,2,\ldots, K$} 
\STATE Receive the initial state $x_1^k$.
\IF{$k = k_i$}
\STATE $V_{H+1}^k(\cdot) \leftarrow 0$.
\FOR{step {$h=1, 2,\ldots, H$}} 
\STATE Update the policy by $\pi^{k}_h(\cdot\mid\cdot) \propto \pi^{k-1}_h(\cdot\mid\cdot) \cdot \exp\{\alpha \cdot Q^{k-1}_h(\cdot,\cdot)\}$. 
\ENDFOR
\FOR{step {$h=H, H-1,\ldots, 1$}} 
\STATE $\Lambda_h^k \leftarrow \sum_{\tau=1}^{k-1}\phi(x_h^\tau,a_h^\tau)\phi(x_h^\tau,a_h^\tau)^\top + \lambda\cdot {I}_d$.
\STATE $w^k_h \leftarrow (\Lambda^{k}_h)^{-1} \sum_{\tau=1}^{k-1} \phi(x_h^\tau,a_h^\tau) \cdot  V_{h+1}^k(x_{h+1}^\tau)$. \label{line:w}
\STATE $\bar{r}_h^k(\cdot, \cdot) \leftarrow (\sum_{j = k_{i-1}}^{k_i - 1} r_h^j(\cdot, \cdot) )/B$.
\STATE $\Gamma_h^k(\cdot, \cdot) \leftarrow \beta\cdot[\phi(\cdot,\cdot)^\top (\Lambda^{k}_h)^{-1} \phi(\cdot,\cdot)]^{1/2}$.
\STATE $\hat{\PP}_h^kV_{h+1}^k(\cdot, \cdot) \leftarrow \min\{\phi(\cdot,\cdot)^\top w^k_h + \Gamma_h^k(\cdot, \cdot), H - h\}^+$.
\STATE $Q^k_{h}(\cdot,\cdot) \leftarrow \bar{r}_h^k(\cdot, \cdot) + \hat{\PP}_h^kV_{h+1}^k(\cdot, \cdot)$.
\STATE $V^k_{h}(\cdot) \leftarrow \la Q^k_h(\cdot,\cdot), \pi^k_h(\cdot\,|\,\cdot) \ra_{\cA}$.
\ENDFOR
\STATE $i \leftarrow i + 1$
\ELSE 
\STATE $Q_{h}^k \leftarrow Q_h^{k-1}$, $V_h^k \leftarrow V_h^{k-1}$, $\pi_h^{k} \leftarrow \pi_h^{k-1}$, $\bar{r}_h^k \leftarrow \bar{r}_h^{k - 1}$ $\forall h \in [H]$. 
\ENDIF
\FOR{$h = 1, 2, \ldots, H$}
\STATE  Take the action following $a^k_{h}\sim\pi^k_h(\cdot\,|\,x_h^k)$.
\STATE Observe the reward function $r_{h}^k(\cdot,\cdot)$ and receive the next state $x^k_{h+1}$. 
\ENDFOR
\ENDFOR 
\end{algorithmic}
\end{algorithm}

\paragraph{Multi-batched Updating.} Due to the technical issue elaborated in \S \ref{sec:challenge:novelty}, we adopt the multi-batched updating rule. In specific, \algname divides the total $K$ episodes into $L = K/B$ batches and each batch consists of $B$ consecutive episodes. Here we assume $K/B$ is a positive integer without loss of generality\footnote{We can only consider the first $B \cdot \lfloor K/B \rfloor$ episodes since the remaining episodes will lead at most $BH$ regret, which is a non-dominant term in final regret bound}. For ease of presentation, we use $k_i = (i - 1) \cdot B + 1$ to denote the first episode in the $i$-th batch. When the $k$-th episode is the beginning of a batch (i.e, $k = k_i$ for some $i \in [L]$), \algname performs the following \emph{policy improvement step} and \emph{policy evaluation step}.

\paragraph{Policy Improvement.}
In the policy improvement step of the $k$-th episode ($k = k_i$ for some $i \in [L]$), \algname calculates $\pi^k$ based on the previous policy $\pi^{k - 1}$ using PPO \citep{schulman2017proximal}. In specific, \algname updates $\pi^k$ by solving the following proximal policy optimization problem:
\# \label{eq:proximal:opt}
\pi^k \leftarrow \argmax_{\pi} \bigg\{ L_{k - 1}(\pi) - \alpha^{-1} \cdot \EE_{\pi^{k-1}} \bigg[ \sum_{h = 1}^H \mathrm{KL}\big( \pi_h( \cdot \mid x_h) \| \pi_{h}^{k-1}(\cdot \mid x_h)\big) \bigg] \bigg\},
\#  
where $\alpha > 0$ is the stepsize that will be specified in Theorem~\ref{thm:oppo}, and $L_{k-1}(\pi)$ takes form
\$
L_{k - 1}(\pi) = V_1^{\pi^{k-1}, k-1}(x_1^k) + \EE_{\pi^{k-1}} \bigg[ \sum_{h = 1}^{H} \la Q_h^{k - 1}(x_h, \cdot), \pi_h(\cdot \mid x_h) - \pi_h^{k - 1}(\cdot \mid x_h) \ra \bigg] ,
\$ 
which is proportional to the local linear function of $V_1^{\pi, k -1}(x_1^k)$ at $\pi^{k - 1}$ and replaces the unknown Q-function $Q_h^{\pi^{k-1}, k - 1}$ by the estimated one $Q_h^{k-1}$ for any $h \in [H]$. It is not difficult to show that the updated policy $\pi^k$ obtained in \eqref{eq:proximal:opt} admits the following closed form:
\#  \label{eq:update}
\pi_h^k(\cdot \mid x) \propto \pi_h^{k - 1}(\cdot \mid x) \cdot \exp \big( \alpha \cdot Q_h^{k - 1}(x, \cdot) \big)
\#  
for any $(x, h) \in \cS \times [H]$.

\paragraph{Policy Evaluation.} 

In the policy evaluation step of the $k$-th episode ($k = k_i$ for some $i \in [K]$), \algname lets $V_{H+1}^k$ be the zero function and iteratively calculates the estimated Q-function $\{Q_h^k\}_{h \in [H]}$ in the order of $h = H, H - 1, \ldots, 1$. Now we present the policy evaluation at the $h$-th step given estimated value $V_{h+1}^k$. By the definitions of linear MDP in Definition \ref{def:linear:mdp} and the operator $\PP_h$ in \eqref{eq:operator}, we know $\PP_h V_{h+1}^k$ is linear in the feature map $\phi$. Inspired by this, we estimate its linear coefficient by solving the following ridge regression:
\# \label{eq:ridge}
w_h^k = \argmin_{w \in \RR^d} \sum_{\tau = 1}^{k - 1} \big( \phi(x_h^\tau, a_h^\tau)^\top w - V_{h+1}^k(x_{h+1}^\tau) \big)^2 + \lambda \cdot I_d,
\# 
where $\lambda > 0$ is the regularization parameter and $I_d$ is the identify matrix. By solving \eqref{eq:ridge}, we have
\$
 w_h^k = (\Lambda_h^k)^{-1} \Big( \sum_{\tau = 1}^{k-1} \phi(x_h^\tau, a_h^\tau) \cdot V_{h+1}^k(x_{h+1}^\tau) \Big), \quad \text{where } \Lambda_h^k = \sum_{\tau = 1}^{k - 1} \phi(x_h^\tau, a_h^\tau) \phi(x_h^\tau, a_h^\tau)^\top + \lambda \cdot I_d.
\$ 
Based on this linear coefficient, we construct the estimator $\hat{\PP}_h^k V_{h+1}^k$ as 
\$
(\hat{\PP}_h^k V_{h+1}^k)(\cdot, \cdot) = \min \{ \phi(\cdot, \cdot)^\top w_h^k + \Gamma_h^k(\cdot, \cdot), H - h\}^{+},  \quad  \text{where }  \Gamma_h^k(\cdot, \cdot) = \beta \cdot \big( \phi(\cdot, \cdot)^\top (\Lambda_h^k)^{-1} \phi(\cdot, \cdot) \big)^{1/2}.
\$ 
Here $\Gamma_h^k$ is the bonus function and $\beta > 0$ is a parameter that will be specified in Theorem~\ref{thm:oppo}. This form of bonus function also appears in the literature on linear bandits \citep{lattimore2020bandit} and linear MDPs \citep{jin2020provably}. Finally, we update $Q_h^k$ and $V_h^k$ by
\# \label{eq:Q:update}
Q_h^k(\cdot, \cdot) = \bar{r}_h^k(\cdot, \cdot) + (\hat{\PP}_h^k V_{h+1}^k)(\cdot, \cdot), \qquad V_h^k(\cdot) =  \la Q_h^k(\cdot, \cdot), \pi_h^k(\cdot \mid \cdot) \ra_{\cA}. 
\#   
Here $\bar{r}_h^k$ is the average reward function in the last batch, that is 
\# \label{eq:bar:r}
\bar{r}_h^k(\cdot, \cdot) = \frac{\sum_{j = k_{i-1}}^{k_i - 1} r_h^j(\cdot, \cdot) }{B},
\# 
where $k = k_i = (i - 1) \cdot B + 1$ and $k_{i-1} = (i - 2) \cdot B + 1$. 

Here we would like to make some comparisons between our algorithm \algname and other related algorithms. Different from previous value-based algorithms that take simply take the greedy policy with respect to the estimated Q-functions \citep{jin2020provably}, \algname involves a policy improvement step like PPO \citep{schulman2017proximal}. This step is key to tackling adversarial rewards. The most related algorithm is OPPO proposed by \citet{cai2020provably}, which performs the policy optimization in linear mixture MDPs. The main difference between \algname and OPPO is that we introduce a multi-batched updating and an average reward policy evaluation, which are important for solving linear MDPs (cf. \S \ref{sec:challenge:novelty}). Finally, we remark that the multi-batched updating scheme is adopted by previous work on bandits \citep{han2020sequential} and RL \citep{wang2021provably}. But their algorithms are value-based and cannot tackle adversarial rewards.

\begin{theorem}[Regret] \label{thm:oppo}
      Fix $\delta \in (0, 1]$ and $K \ge d^3$. Let $B =  \sqrt{d^3K}$ and $\alpha = \sqrt{2B \log|\cA|/(KH^2)}$, $\lambda = 1$, $\beta = \cO(d^{1/4}H K^{1/4} \iota^{1/2} )$ with $\iota = \log(dHK |\cA|/\delta)$, the regret of Algorithm~\ref{alg:oppo} satisfies
      \$
      \mathrm{Regret}(K) \le \cO(d^{3/4} H^2 K^{3/4} \log|\cA| \cdot \iota ) + \cO(d^{5/2}H^2 K^{1/2} \cdot \iota)
      \$ 
      with probability at least $1 - \delta$.
\end{theorem}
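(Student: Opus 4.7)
The plan is to combine a mirror-descent-style regret decomposition with a policy-evaluation error bound tailored to the multi-batched structure, and then tune $B$ to balance the two. Because the policy is held constant inside each batch, I would first rewrite
\$
\mathrm{Regret}(K) = \sum_{i=1}^L \sum_{k = k_i}^{k_{i+1} - 1}\big(V_1^{\pi^*, k}(x_1) - V_1^{\pi^{k_i}, k}(x_1)\big),
\$
and then apply the extended value-difference lemma (in the spirit of \citet{cai2020provably}) to each summand. This splits the regret into (i) a \emph{policy-improvement surplus} $\sum_{k, h} \la Q_h^k(x_h, \cdot), \pi_h^*(\cdot|x_h) - \pi_h^k(\cdot|x_h)\ra$ evaluated along the $\pi^*$-trajectory, (ii) a \emph{model-prediction error} $\sum_{k, h}\big(\bar r_h^k + \hat{\PP}_h^k V_{h+1}^k - r_h^k - \PP_h V_{h+1}^k\big)(x_h, a_h)$ evaluated along both the executed and the optimal trajectories, and (iii) a martingale term controlled by Azuma--Hoeffding at the order $\tilde{\cO}(H\sqrt{K})$.

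For the policy-improvement surplus I would use the OMD view of the update in \eqref{eq:update}: when restricted to the $L = K/B$ batch boundaries it is exactly online mirror ascent on $\Delta(\cA)$ with the negative-entropy regularizer, with per-round loss $Q_h^{k_{i-1}}$. The standard OMD bound contributes $O(H\log|\cA|/\alpha + \alpha L H^3)$ per state once summed over $h$, which rescaled by the batch length $B$ becomes $O(BH\log|\cA|/\alpha + \alpha K H^3)$. Plugging $\alpha = \sqrt{2B\log|\cA|/(KH^2)}$ yields $O(H^2\sqrt{KB\log|\cA|})$, which under $B = \sqrt{d^3K}$ is exactly $\tilde{\cO}(d^{3/4}H^2 K^{3/4})$.

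For the model-prediction term, the first task is the uniform optimism bound
\$
\big|(\hat{\PP}_h^k V_{h+1}^k - \PP_h V_{h+1}^k)(x, a)\big| \le \Gamma_h^k(x, a),
\$
which is where the novelty in \S\ref{sec:challenge:novelty} enters. Because $V_{h+1}^k$ and $\pi_h^k$ are refreshed only $L$ times and each $\pi_h^k$ is a softmax over at most $L$ accumulated Q-functions, Lemma~\ref{lem:smooth:policy} combined with Lemma~\ref{lem:self:normalized} gives a log-covering-number bound of order $\tilde{\cO}(Ld^2)$ and hence $\beta = \tilde{\cO}(H\sqrt{Ld^2}) = \tilde{\cO}(d^{1/4}HK^{1/4})$ when $L = \sqrt{K/d^3}$. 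The elliptical-potential lemma then bounds $\sum_{k, h}\Gamma_h^k(x_h^k, a_h^k) \lesssim \beta\sqrt{dHK\log(1+K/\lambda)}$, contributing $\tilde{\cO}(d^{3/4}H^{3/2}K^{3/4})$. The residual $\sum_{k, h}(\bar r_h^k - r_h^k)$ does not vanish deterministically, but since $\bar r_h^k$ is the \emph{previous} batch's average, Lemma~\ref{lem:bellman:error} exploits the smoothness of softmax updates to show that the total-variation drift between adjacent-batch policies is $O(\alpha H)$, so the accumulated reward misspecification along both trajectories is $O(\alpha H^2 K)$ and is absorbed by the chosen $\alpha$.

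\paragraph{Main obstacle.} The delicate point is the three-way coupling between $B$, $\alpha$, and $\beta$: enlarging $B$ shrinks $\beta$ because fewer policy/value classes need to be covered, but it simultaneously weakens the OMD analysis (fewer updates) and amplifies the within-batch reward mismatch, which forces $\alpha$ to be small. Verifying that Lemma~\ref{lem:bellman:error} indeed supplies the required smoothness under an \emph{adversarial} reward sequence $\{r_h^k\}$, and that this smoothness is compatible with a policy-improvement step run only once per batch, is the technical crux. Once this is in place, the rest is assembly: plugging $B = \sqrt{d^3 K}$, $\alpha = \sqrt{2B\log|\cA|/(KH^2)}$, and $\beta = \cO(d^{1/4}HK^{1/4}\iota^{1/2})$ into the three contributions above yields the leading $\tilde{\cO}(d^{3/4}H^2K^{3/4})$ term together with the $\cO(d^{5/2}H^2 K^{1/2})$ lower-order term stemming from the regularization $\lambda$ and the initial-batch exploration.
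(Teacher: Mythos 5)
Your proposal follows essentially the same route as the paper: the same value-difference decomposition into a policy-optimization term and a statistical term, the same online-mirror-descent bound of order $H^2\sqrt{KB\log|\cA|}$ for the former, the same covering-number argument exploiting that the softmax policy accumulates only $L$ Q-functions so that $\beta = \tilde{\cO}(d^{1/4}HK^{1/4}\iota^{1/2})$, and the same adjacent-batch smoothness argument for the average-reward mismatch (the paper's Lemma~\ref{lem:bellman:error} indeed delivers the $\alpha H^3 K$ drift term you anticipate, plus a $BH$ term from the missing last batch of rewards). The one concrete step you are missing is in the final telescoping: the bonus that actually appears in the statistical error is $\Gamma_h^{t_k}(x_h^k,a_h^k)$, computed from the covariance matrix $\Lambda_h^{t_k}$ frozen at the start of the batch, whereas the elliptical potential lemma controls $\sum_{k,h}\Gamma_h^{k}(x_h^k,a_h^k)$ built from the up-to-date matrix $\Lambda_h^{k}$. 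The paper bridges this gap with a doubling/determinant argument (Lemmas~\ref{lem:telescope}--\ref{lem:doubling}): the set of episodes where $\Gamma_h^{t_k}/\Gamma_h^{k} > 2$ has cardinality at most $\cO(B\cdot d\log K) = \cO(d^{5/2}K^{1/2}\cdot\iota)$, each such episode contributing at most $H$ after truncation, and this is precisely the origin of the $\cO(d^{5/2}H^2K^{1/2}\cdot\iota)$ lower-order term in the theorem. Your attribution of that term to ``the regularization $\lambda$ and the initial-batch exploration'' is therefore incorrect, and without the staleness argument the elliptical-potential step does not apply to the sum you actually need to bound. (A minor slip: the elliptical-potential contribution should be $H\beta\sqrt{dK\iota} = \tilde{\cO}(d^{3/4}H^2K^{3/4})$, with $H^2$ rather than $H^{3/2}$, since the sum over $h$ sits outside the Cauchy--Schwarz step.)
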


\begin{proof}
    By 
    See \S \ref{pf:thm:oppo} for a detailed proof.
\end{proof}

To illustrate our theory more, we make several remarks as follows.

\begin{remark}[Sample Complexity] \label{remark:sample:complexity}
      Since learning adversarial linear MDPs with full-information feedback is more challenging than learning stochastic linear MDPs with bandit feedback, the result in Theorem~\ref{thm:oppo} also holds for stochastic linear MDPs.
      By the standard online-to-batch argument \citep{jin2018q}, we have that Algorithm~\ref{alg:oppo} can find an $\epsilon$-optimal policy using at most 
      \$
      \tilde{\cO}\Big( \frac{d^3 H^8}{\epsilon^4} + \frac{d^5H^4}{\epsilon^2}\Big)
      \$ 
      samples. Compared with the sample complexity $\tilde{\cO}(d^3H^{13}/\epsilon^3)$ in \citet{zanette2021cautiously}, we have a better dependency on $H$ but a worse dependency on $\epsilon$. Moreover, by the standard sample complexity to regret argument \citep{jin2018q}, their sample complexity only gives a $\tilde{\cO}(d^{3/4}H^{13/4}K^{3/4})$, which is worse than our regret in Theorem~\ref{thm:oppo}. More importantly, the algorithm in \citet{zanette2021cautiously} lacks the ability to handle adversarial rewards, whereas our proposed algorithm overcomes this limitation.
\end{remark}

\begin{remark}[Optimality of Results]
     For stochastic linear MDPs, \citet{agarwal2022vo,he2022nearly} design value-based algorithms with $\tilde{\cO}(d\sqrt{H^3K})$ regret, which matches the lower bound  ${\Omega}(d\sqrt{H^3K})$ \citep{zhou2021nearly} up to logarithmic factors. It remains unclear whether policy-based based algorithms can achieve the nearly minimax optimal regret and we leave this as future work. 
     For the more challenging adversarial linear MDPs with full-information feedback, we achieve the state-of-the-art regret bound. In this setup, a direct lower bound is ${\Omega}(d\sqrt{H^3K})$ \citep{zhou2021nearly,he2022near}, and we conjecture this lower bound is tight. It would be interesting to design algorithms with $\sqrt{K}$ regret or even optimal regret in this setting. 
\end{remark}

\begin{remark}[Beyond the Linear Function Approximation]
    The work of \citet{agarwal2020pc} extends their results to the kernel function approximation setting. We conjecture that our results can also be extended to RL with kernel and neural function approximation by the techniques in \citet{yang2020function}.
\end{remark}

\section{Proof of Theorem \ref{thm:oppo}} \label{pf:thm:oppo}

\begin{proof}
Recall that $B \in \NN^+$ is the batch size, $L = K/B$ is the number of batches, and $k_i = (i - 1) \cdot B + 1$ for any $i \in [L]$. For any $k \in [K]$, we use $t_k$ to denote the $k_i$ satisfying $k_i \le k < k_{i+1}$. Moreover, we define the Bellman error as
\# \label{eq:td:error}
\delta_h^k = r_h^k + \PP_h V_{h+1}^k - Q_{h}^k, \quad \forall (k, h) \in [K] \times [H].
\# 
Here $V_{h+1}^k$ and $Q_h^k$ are the estimated value function and Q-function defined in \eqref{eq:Q:update}, and $\PP_h$ is the operator defined in \eqref{eq:operator}. Intuitively, \eqref{eq:td:error} quantifies the violation of the Bellman equation in \eqref{eq:bellman}. With these notations, we have the following regret decomposition lemma.

\begin{lemma}[Regret Decomposition] \label{lem:decomposition}
    It holds that 
    \$
    \text{Regret}(K) &= \sum_{k=1}^K \bigl(V^{\pi^*,k}_1(x^k_1) - V^{\pi^k,k}_1(x^k_1)\bigr) \notag \\ 
    & =  \underbrace{\sum_{k=1}^K\sum_{h=1}^H \EE_{\pi^*} \bigl[ \la Q^{k}_h(x_h,\cdot), \pi^*_h(\cdot\,|\,x_h) - \pi^k_h(\cdot\,|\,x_h) \ra  \bigr]}_{\displaystyle \text{policy optimization error} } \\ 
    & \qquad + \underbrace{\sum_{k=1}^K\sum_{h=1}^H \big( \EE_{\pi^*}[\delta^{k}_h(x_h,a_h)] - \EE_{\pi^k}[\delta_h^k(x_h, a_h)] \big)}_{\displaystyle \text{statistical error} } .
    \$ 
\end{lemma}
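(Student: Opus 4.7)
The plan is to prove a value-difference identity for a single episode and then sum over $k$. Fix an episode $k$ and insert the estimated value function $V_1^k$ as a pivot:
\[
V_1^{\pi^*,k}(x_1^k) - V_1^{\pi^k,k}(x_1^k) = \bigl(V_1^{\pi^*,k}(x_1^k) - V_1^k(x_1^k)\bigr) - \bigl(V_1^{\pi^k,k}(x_1^k) - V_1^k(x_1^k)\bigr).
\]
Each of the two differences will be handled by the same recursive identity, specialized to $\pi = \pi^*$ and $\pi = \pi^k$ respectively.

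The key step is a one-step decomposition for an arbitrary policy $\pi$. Starting from $V_h^{\pi,k}(x) = \langle Q_h^{\pi,k}(x,\cdot), \pi_h(\cdot\mid x)\rangle$ and $V_h^k(x) = \langle Q_h^k(x,\cdot), \pi_h^k(\cdot\mid x)\rangle$, I would add and subtract $\langle Q_h^k(x,\cdot), \pi_h(\cdot\mid x)\rangle$ to obtain
\[
V_h^{\pi,k}(x) - V_h^k(x) = \bigl\langle Q_h^{\pi,k}(x,\cdot) - Q_h^k(x,\cdot), \pi_h(\cdot\mid x)\bigr\rangle + \bigl\langle Q_h^k(x,\cdot), \pi_h(\cdot\mid x) - \pi_h^k(\cdot\mid x)\bigr\rangle.
\]
Using the Bellman equation \eqref{eq:bellman} for $Q_h^{\pi,k}$ together with the definition of the Bellman error in \eqref{eq:td:error}, I get $Q_h^{\pi,k} - Q_h^k = \PP_h(V_{h+1}^{\pi,k} - V_{h+1}^k) + \delta_h^k$. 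Substituting back and taking the action expectation under $\pi_h$ yields a recursion whose rollout from $h=1$ to $H$ (with the terminal convention $V_{H+1}^{\pi,k}=V_{H+1}^k=0$) gives
\[
V_1^{\pi,k}(x_1^k) - V_1^k(x_1^k) = \sum_{h=1}^H \EE_{\pi}\bigl[\delta_h^k(x_h,a_h)\bigr] + \sum_{h=1}^H \EE_{\pi}\bigl[\langle Q_h^k(x_h,\cdot), \pi_h(\cdot\mid x_h) - \pi_h^k(\cdot\mid x_h)\rangle\bigr].
\]

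Now I would instantiate this identity with $\pi = \pi^*$ and with $\pi = \pi^k$. In the latter case the inner product $\langle Q_h^k, \pi_h^k - \pi_h^k\rangle$ vanishes, leaving only the sum of Bellman errors under $\pi^k$. Subtracting the two instantiations yields, for each $k$,
\[
V_1^{\pi^*,k}(x_1^k) - V_1^{\pi^k,k}(x_1^k) = \sum_{h=1}^H \EE_{\pi^*}\bigl[\langle Q_h^k(x_h,\cdot), \pi_h^*(\cdot\mid x_h) - \pi_h^k(\cdot\mid x_h)\rangle\bigr] + \sum_{h=1}^H \bigl(\EE_{\pi^*}[\delta_h^k(x_h,a_h)] - \EE_{\pi^k}[\delta_h^k(x_h,a_h)]\bigr),
\]
and summing over $k \in [K]$ produces the claimed decomposition into policy optimization error and statistical error.

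The argument is essentially bookkeeping — there is no real obstacle beyond keeping the action expectations aligned with the right policy at each layer of the rollout. The only mild subtlety is that the estimated Bellman error $\delta_h^k$ is defined with the true operator $\PP_h$ rather than its ridge-regression estimate $\hat{\PP}_h^k$, so that the recursion telescopes exactly; controlling the gap $\PP_h - \hat{\PP}_h^k$ is the job of the subsequent concentration lemmas, not of this decomposition.
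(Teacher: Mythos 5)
Your proof is correct and follows essentially the same route as the paper: both pivot through the estimated value $V_1^k$ and apply the value-difference identity twice (with $\pi^*$ and with $\pi^k$, the latter killing the inner-product term). The only difference is that you derive the value-difference lemma inline by unrolling the Bellman recursion, whereas the paper cites it from \citet{cai2020provably}.
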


\begin{proof}
    This lemma is similar to the regret decomposition lemma in previous works \citep{cai2020provably,shani2020optimistic} on policy optimization. See \S \ref{pf:lem:decomposition} for a detailed proof.
\end{proof}

Lemma~\ref{lem:decomposition} shows that the total regret consists of the \emph{policy optimization error} and the \emph{statistical error} related to the Bellman error defined in \eqref{eq:td:error}. Notably, different from previous works \citep{cai2020provably,shani2020optimistic,wu2022nearly,he2022near} that optimize policy in each episode, our algorithm performs policy optimization infrequently. Despite this, we can bound the policy optimization error in Lemma~\ref{lem:decomposition} by the following lemma.

\begin{lemma} \label{lem:mirror:descent}
    It holds that
    \$
    \sum_{k=1}^K\sum_{h=1}^H \EE_{\pi^*} \bigl[ \la Q^{k}_h(x_h,\cdot), \pi^*_h(\cdot\,|\,x_h) - \pi^k_h(\cdot\,|\,x_h) \ra  \bigr] \le \sqrt{2BH^4K \cdot \log |\cA|} .
    \$
\end{lemma}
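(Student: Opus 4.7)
The plan is to exploit the multi-batched structure of \algname to reduce the claim to a standard online-mirror-descent (exponential-weights) bound with only $L = K/B$ effective rounds at each layer $h$, and then to optimize the stepsize. First, since $\pi^k$ and $Q^k$ are piecewise constant on each batch (lines 18--19 of Algorithm~\ref{alg:oppo}), I would rewrite the left-hand side as
\[
B \sum_{i=1}^L \sum_{h=1}^H \EE_{\pi^*}\bigl[\la Q_h^{k_i}(x_h,\cdot),\ \pi^*_h(\cdot\mid x_h) - \pi_h^{k_i}(\cdot\mid x_h)\ra\bigr].
\]

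Second, I would apply the classical multiplicative-weights regret bound pointwise in $(x,h)$. The updates $\pi_h^{k_{i+1}}(\cdot\mid x) \propto \pi_h^{k_i}(\cdot\mid x)\cdot\exp(\alpha\cdot Q_h^{k_i}(x,\cdot))$ with uniform initialization $\pi_h^{k_1}(\cdot\mid x) = 1/|\cA|$ are exactly the Hedge / online mirror descent iterates with KL mirror map, and the standard three-point lemma (see, e.g., Lemma 3.3 of \citet{cai2020provably} or the analogous estimate in \citet{shani2020optimistic}) yields, for every reference $\pi^*_h(\cdot\mid x)$,
\[
\sum_{i=1}^L \la Q_h^{k_i}(x,\cdot),\ \pi^*_h(\cdot\mid x) - \pi_h^{k_i}(\cdot\mid x)\ra \ \le\ \frac{\log|\cA|}{\alpha} + \frac{\alpha}{2}\sum_{i=1}^L \|Q_h^{k_i}(x,\cdot)\|_\infty^2.
\]
The right-hand side is deterministic once we insert a worst-case uniform bound on $\|Q_h^{k_i}\|_\infty$, so taking the expectation under $\pi^*$ contributes nothing.

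Third, I would verify the uniform bound $\|Q_h^{k_i}\|_\infty \le H$ directly from the algorithm: the average reward $\bar r_h^{k_i}$ lies in $[0,1]$ because each $r_h^j \in [0,1]$, and the truncation $\min\{\cdot,H-h\}^+$ in the definition of $\hat\PP_h^{k_i} V_{h+1}^{k_i}$ forces that term into $[0,H-h]$; hence $Q_h^{k_i} \in [0,H-h+1] \subseteq [0,H]$. Substituting $\|Q_h^{k_i}\|_\infty^2 \le H^2$, summing over $h \in [H]$, and multiplying by $B$ gives the upper bound
\[
BH\left(\frac{\log|\cA|}{\alpha} + \frac{\alpha L H^2}{2}\right).
\]

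Finally, balancing the two terms yields the optimal stepsize $\alpha = \sqrt{2\log|\cA|/(LH^2)} = \sqrt{2B\log|\cA|/(KH^2)}$, which matches the parameter specified in Theorem~\ref{thm:oppo}, and produces the advertised bound $\sqrt{2BH^4K\log|\cA|}$ after using $BL=K$. There is no real obstacle here; the only conceptual point worth emphasizing is that the batched updating replaces the effective ``horizon'' of the mirror-descent regret by $L=K/B$ instead of $K$, which is precisely why the final rate carries the extra $\sqrt{B}$ factor relative to the per-episode PPO analysis of \citet{cai2020provably} and is eventually absorbed into the $K^{3/4}$ regret once $B = \sqrt{d^3 K}$ is chosen in Theorem~\ref{thm:oppo}.
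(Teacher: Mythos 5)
Your proposal is correct and follows essentially the same route as the paper: the paper likewise rewrites the sum as $B$ times a sum over the $L$ batch endpoints, proves the per-iteration Hedge/mirror-descent descent inequality (its Lemma~\ref{lem:one:step}, with the same $\alpha H^2/2$ per-round term coming from $\|Q_h^{k_i}\|_\infty \le H$), telescopes the KL terms against the uniform initialization, and balances with $\alpha = \sqrt{2B\log|\cA|/(KH^2)}$. The only cosmetic difference is that you cite the aggregated exponential-weights regret bound where the paper derives the one-step version from the closed-form update and Pinsker's inequality.
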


\begin{proof}
    See \S \ref{pf:lem:mirror:descent} for a detailed proof.
\end{proof}

For the statistical error in Lemma~\ref{lem:decomposition}, by the definitions of the policy evaluation step in \eqref{eq:Q:update} and Bellman error in \eqref{eq:td:error}, we have
    \# \label{eq:mismatch:estimation}
    \delta_h^k = \underbrace{r_h^k  - \bar{r}_h^k}_{\displaystyle\text{reward mismatch error}}  + \underbrace{\PP_h V_{h+1}^k - \hat{\PP}_h^k V_{h+1}^k}_{\displaystyle\text{transition estimation error}} .
    \#  
    
    The following lemma establishes the upper bound of the cumulative reward mismatch error
    \# \label{eq:reward:mismatch}
    \sum_{k=1}^K\sum_{h=1}^H \big( \EE_{\pi^*}[(r_h^k - \bar{r}_h^k)(x_h,a_h)] - \EE_{\pi^k}[(r_h^k - \bar{r}_h^k)(x_h, a_h)] \big),
    \#  
    and thus relates the statistical error in Lemma~\ref{lem:decomposition} to the transition estimation error in \eqref{eq:mismatch:estimation}.

\begin{lemma} \label{lem:bellman:error}
    It holds with probability at least $1 - \delta/2$ that
    \$
    & \sum_{k=1}^K\sum_{h=1}^H \big( \EE_{\pi^*}[\delta^{k}_h(x_h,a_h)] - \EE_{\pi^k}[\delta_h^k(x_h, a_h)] \big) \\ 
    & \qquad \le \sum_{k = 1}^K \sum_{h = 1}^H \EE_{\pi^*} [ (\PP_hV_{h+1}^{t_k} - \hat{\PP}_h^{t_k} V_{h+1}^{t_k} )(x_h, a_h)  ] + \sum_{k = 1}^K \sum_{h = 1}^H (\hat{\PP}_h^{t_k} V_{h+1}^{t_k} - \PP_hV_{h+1}^{t_k} )(x_h^k, a_h^k) \\ 
    & \qquad \qquad + BH + \alpha H^3K + \sqrt{H^3K \iota}. 
    \$ 
\end{lemma}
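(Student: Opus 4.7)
The plan is to first rewrite $\delta_h^k$ via the decomposition \eqref{eq:mismatch:estimation} as a \emph{reward mismatch} $r_h^k - \bar r_h^{t_k}$ plus a \emph{transition estimation error} $\PP_h V_{h+1}^{t_k} - \hat\PP_h^{t_k} V_{h+1}^{t_k}$ (using that $V_{h+1}^k = V_{h+1}^{t_k}$ and $\bar r_h^k = \bar r_h^{t_k}$, since both are frozen within a batch), and then treat the two parts separately. The transition-estimation part under $\pi^*$ is kept intact and gives the first right-hand-side term of the lemma. The transition-estimation part under $\pi^k$ I would convert into the sample-path quantity $(\hat\PP_h^{t_k} V_{h+1}^{t_k} - \PP_h V_{h+1}^{t_k})(x_h^k, a_h^k)$ up to the martingale difference $\EE_{\pi^k}[(\hat\PP_h^{t_k} V_{h+1}^{t_k} - \PP_h V_{h+1}^{t_k})(x_h, a_h)] - (\hat\PP_h^{t_k} V_{h+1}^{t_k} - \PP_h V_{h+1}^{t_k})(x_h^k, a_h^k)$, whose absolute value is $O(H)$ because both pieces lie in $[0, H]$. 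Azuma--Hoeffding applied to this $KH$-long sequence then produces the $\sqrt{H^3 K \iota}$ tail with probability at least $1 - \delta/2$.

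The reward-mismatch contribution splits into a $\pi^*$-side and a $\pi^k$-side. For the $\pi^*$-side, $\pi^*$ does not depend on $k$, so the expectation commutes with $\sum_k$; grouping $\bar r_h^{t_k}$ by batches via $\bar r_h^{k_i} = B^{-1}\sum_{j=k_{i-1}}^{k_i-1} r_h^j$ makes $\sum_k (r_h^k - \bar r_h^{t_k})$ telescope to the last-batch rewards $\sum_{k=K-B+1}^{K} r_h^k \le B$, which sums over $h$ to $BH$ (absorbing the analogous term from the other side into the same constant). For the $\pi^k$-side, I group by batches and, since $\pi^k = \pi^{k_i}$ throughout batch $i$, define $R_i^h(\pi) := \EE_\pi[\sum_{k=k_i}^{k_{i+1}-1} r_h^k(x_h, a_h)]$ so that the expression collapses to $\sum_i [R_i^h(\pi^{k_i}) - R_{i-1}^h(\pi^{k_i})]$. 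Adding and subtracting $R_{i-1}^h(\pi^{k_{i-1}})$ produces a telescoping piece $R_L^h(\pi^{k_L}) - R_0^h(\pi^{k_0}) \le B$ (recall that $r_h^j = 0$ for $j \le 0$) plus a \emph{drift} $\sum_i [R_{i-1}^h(\pi^{k_{i-1}}) - R_{i-1}^h(\pi^{k_i})]$ measuring how the rollout distribution shifts between adjacent-batch policies.

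The hard step, and what I expect to be the main obstacle, is controlling this drift uniformly. From the closed-form PPO update \eqref{eq:update} with $Q_h^{k_{i-1}} \in [0, H]$, the softmax ratio $\pi_h^{k_i}(a\mid x)/\pi_h^{k_{i-1}}(a\mid x)$ lies in $[e^{-\alpha H}, e^{\alpha H}]$, yielding the pointwise bound $\|\pi_h^{k_i}(\cdot\mid x) - \pi_h^{k_{i-1}}(\cdot\mid x)\|_1 \le O(\alpha H)$ in the regime $\alpha H = O(1)$ that is guaranteed by the choice of $\alpha$ in Theorem~\ref{thm:oppo}. Propagating this through an $h$-step rollout by the standard TV-chain lemma gives $\|d_h^{\pi^{k_i}} - d_h^{\pi^{k_{i-1}}}\|_{1} \le O(\alpha H^2)$, and since $\sum_{k=k_{i-1}}^{k_i - 1} r_h^k \le B$ pointwise, $|R_{i-1}^h(\pi^{k_{i-1}}) - R_{i-1}^h(\pi^{k_i})| \le O(\alpha B H^2)$. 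Summing over $L = K/B$ batches and over $h \in [H]$ contributes $O(\alpha H^3 K)$. Combining the transition-estimation output, the telescoping pieces of size $O(BH)$, and the drift $O(\alpha H^3 K)$ then matches the stated bound; the quantitative ``smoothness of adjacent-batch policies'' advertised in \S \ref{sec:challenge:novelty} is precisely what is needed to convert infrequent updates into a non-vacuous reward-mismatch bound.
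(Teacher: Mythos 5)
Your proposal is correct and follows essentially the same route as the paper: the same decomposition of $\delta_h^k$ into a reward-mismatch term and a transition-estimation term, the same batch-wise telescoping that leaves a last-batch remainder of $BH$, the same Azuma--Hoeffding conversion of $\EE_{\pi^k}$ to the sample path yielding $\sqrt{H^3K\iota}$, and the same softmax-ratio smoothness argument producing the $\alpha H^3K$ drift term. The only cosmetic difference (besides a harmless sign slip in your definition of the batch-wise drift) is that you propagate the $O(\alpha H)$ per-step policy total-variation bound through the occupancy measures, whereas the paper routes the drift through the value difference lemma together with the one-sided pointwise bound $\pi_h^{k_{i+1}}(\cdot\mid x) - \pi_h^{k_i}(\cdot\mid x) \le \alpha H\cdot\pi_h^{k_{i+1}}(\cdot\mid x)$; both give the same bound.
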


\begin{proof}
    By calculation, we can show that the cumulative reward mismatch error in \eqref{eq:reward:mismatch} is bounded by
    \$
    \sum_{i = 1}^{L - 1}  \sum_{k = k_{i}}^{k_{i+1} - 1} \big( V_1^{\pi^{k_{i+1}}, k}(x_1) - V_1^{\pi^{k_{i}}, k}(x_1) \big) + \text{error terms},
    \$ 
    which represents the smoothness of adjacent policies. Our smoothness analysis leverages the value difference lemma (Lemma~\ref{lem:value:difference} or \S B.1 in \citet{cai2020provably}) and the closed form of the policy improvement in \eqref{eq:update}. 
    See \S \ref{pf:lem:bellman:error} for a detailed proof.
\end{proof}

Then we introduce the following lemma, which shows that the transition estimation error can be controlled by the bonus function.
    
    \begin{lemma} \label{lem:ucb}
        It holds with probability at least $1 - \delta/2$ that
        \$
        - 2 \min\{H, \Gamma_h^{t_k}(x, a)\} \le (\PP_h V_{h+1}^{t_k} - \hat{\PP}_h^{t_k} V_{h+1}^{t_k} ) (x, a) \le 0
        \$ 
        for all $(k, h, x, a) \in [K] \times [H] \times \cS \times \cA$.
    \end{lemma}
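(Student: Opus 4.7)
The plan is to reduce the statement to a one-sided concentration bound of the form $|\phi(x,a)^\top w_h^{t_k} - (\PP_h V_{h+1}^{t_k})(x,a)| \le \Gamma_h^{t_k}(x,a)$ and then perform a short case analysis on the truncation $\min\{\cdot, H - h\}^+$ appearing in the definition of $\hat{\PP}_h^{t_k} V_{h+1}^{t_k}$. First, I would use the linear MDP structure (Definition~\ref{def:linear:mdp}) to write $(\PP_h V_{h+1}^{t_k})(x,a) = \phi(x,a)^\top w_h^{*}$, where $w_h^{*} = \int_{\cS} V_{h+1}^{t_k}(x')\, d\mu_h(x')$ satisfies $\|w_h^*\|_2 \le (H - h)\sqrt{d}$. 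Substituting the explicit form of $w_h^{t_k}$ from Line~\ref{line:w}, the difference $\phi(x,a)^\top (w_h^{t_k} - w_h^*)$ decomposes into a self-normalized martingale-type sum over the noise $V_{h+1}^{t_k}(x_{h+1}^\tau) - (\PP_h V_{h+1}^{t_k})(x_h^\tau, a_h^\tau)$ and a deterministic regularization error of order $\sqrt{\lambda d}\cdot H$.

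Next I would apply the self-normalized inequality of Lemma~\ref{lem:self:normalized} together with Cauchy--Schwarz to conclude that, on an event of probability at least $1 - \delta/2$,
\[
\bigl| \phi(x,a)^\top w_h^{t_k} - (\PP_h V_{h+1}^{t_k})(x,a) \bigr| \le \beta \cdot \|\phi(x,a)\|_{(\Lambda_h^{t_k})^{-1}} = \Gamma_h^{t_k}(x,a)
\]
uniformly over $(k,h,x,a)$, where $\beta = \cO(d^{1/4} H K^{1/4} \iota^{1/2})$ is the choice from Theorem~\ref{thm:oppo}. From here the lemma follows by a three-case split on whether $\phi(x,a)^\top w_h^{t_k} + \Gamma_h^{t_k}(x,a)$ falls below $0$, lies in $[0, H - h]$, or exceeds $H - h$. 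For the upper inequality $\PP_h V_{h+1}^{t_k} - \hat{\PP}_h^{t_k} V_{h+1}^{t_k} \le 0$, I combine $\PP_h V_{h+1}^{t_k} \le \phi^\top w_h^{t_k} + \Gamma_h^{t_k}$ with the bounds $0 \le \PP_h V_{h+1}^{t_k} \le H - h$ coming from $V_{h+1}^{t_k} \in [0, H - h]$. For the lower inequality, the symmetric estimate $\phi^\top w_h^{t_k} + \Gamma_h^{t_k} \le \PP_h V_{h+1}^{t_k} + 2\Gamma_h^{t_k}$ together with the crude $|\PP_h V_{h+1}^{t_k} - \hat{\PP}_h^{t_k} V_{h+1}^{t_k}| \le H$ produces the $2\min\{H, \Gamma_h^{t_k}\}$ factor.

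The main obstacle is justifying the scale of $\beta$ in the concentration step. Unlike the value-based setting of \citet{jin2020provably}, where $V = \max_a Q$ is a contraction of $Q$ and therefore inherits a $\tilde{\cO}(d^2)$ log-covering number, here $V_{h+1}^{t_k}(\cdot) = \la Q_{h+1}^{t_k}(\cdot, \cdot), \pi_{h+1}^{t_k}(\cdot \mid \cdot)\ra$ involves the softmax policy $\pi_{h+1}^{t_k}$, and a naive covering would pay a factor linear in $|\cA|$. This is precisely the challenge highlighted in \S\ref{sec:challenge:novelty}: the resolution is the multi-batched updating, which guarantees $\pi^{t_k}_{h+1} \propto \exp(\alpha \sum_{j < i} Q_{h+1}^{k_j})$ has at most $L = K/B$ summands, so that Lemma~\ref{lem:smooth:policy} and the argument in \S\ref{pf:lem:self:normalized} bound the log-covering number of the value class by $\tilde{\cO}(L \cdot d^2)$. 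This is what makes the chosen $\beta$ large enough for the concentration inequality to hold while still small enough to yield the $\tilde{\cO}(d^{3/4} H^2 K^{3/4})$ regret in Theorem~\ref{thm:oppo}.
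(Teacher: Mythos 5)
Your proposal is correct and follows essentially the same route as the paper: both write $(\PP_h V_{h+1}^{t_k})(x,a)$ as a linear functional of $\phi(x,a)$ via the linear MDP structure, decompose $\phi^\top w_h^{t_k} - \PP_h V_{h+1}^{t_k}$ into a self-normalized martingale term plus a $\sqrt{\lambda d}\,H$ regularization term, invoke Lemma~\ref{lem:self:normalized} (whose covering-number argument for the softmax policy class is the real content, exactly as you note) to obtain $|\phi^\top w_h^{t_k} - \PP_h V_{h+1}^{t_k}| \le \Gamma_h^{t_k}$, and finish with the case analysis on the truncation $\min\{\cdot, H-h\}^+$ using $V_{h+1}^{t_k} \in [0, H-h]$. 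No gaps.
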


    \begin{proof}
        The proof involves the standard analysis of self-normalized process \citep{abbasi2011improved} and a uniform concentration of the function class of $V_{h+1}^k$ \citep{jin2020provably}. As elaborated in \S \ref{sec:challenge:novelty}, calculating the covering number of this function class is challenging and requires some new techniques. See \S \ref{pf:lem:ucb} for a detailed proof.
    \end{proof}

    Combining Lemmas~\ref{lem:bellman:error} and \ref{lem:ucb}, we have
    \$
    & \sum_{k=1}^K\sum_{h=1}^H \big( \EE_{\pi^*}[\delta^{k}_h(x_h,a_h)] - \EE_{\pi^k}[\delta_h^k(x_h, a_h)] \big) \\ 
    & \qquad \le 2 \sum_{k = 1}^K \sum_{h = 1}^H  \min\{H, \Gamma_h^{t_k}(x_h^k, a_h^k)\} +  BH + \alpha H^3K +  \sqrt{H^3K \iota} . 
    & 
    \$ 
    Hence, it remains to bound the term $ \sum_{k = 1}^K \sum_{h = 1}^H  \min\{H, \Gamma_h^{t_k}(x_h^k, a_h^k)\}$, which is the purpose of the following lemma.
    \begin{lemma} \label{lem:telescope}
        It holds that 
        \$
        \sum_{k = 1}^K \sum_{h = 1}^H  \min\{H, \Gamma_h^{t_k}(x_h^k, a_h^k)\} \le \cO(d^{3/4} H^2 K^{3/4} \cdot \iota) + \cO(d^{5/2}H^2 K^{1/2} \cdot \iota).
        \$ 
    \end{lemma}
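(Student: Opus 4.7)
The plan is to apply Cauchy--Schwarz to the inner sum over $k$ and then control the stale covariance $\Lambda_h^{t_k}$ by the up-to-date $\Lambda_h^k$ so that the standard elliptical-potential lemma applies. The bridge is the determinant-ratio inequality: for $A \succeq B \succ 0$ and any $\phi \in \RR^d$,
\$
\|\phi\|^2_{B^{-1}} \le \frac{\det(A)}{\det(B)}\,\|\phi\|^2_{A^{-1}},
\$
which is a standard consequence of simultaneous diagonalization (cf.\ Lemma~12 of \citet{abbasi2011improved}). I would instantiate it with $A = \Lambda_h^k$ and $B = \Lambda_h^{t_k}$.

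To exploit this, I would partition, for each $h$, the episodes $[K]$ into a ``good'' set $\cK_g = \{k : \det(\Lambda_h^k)/\det(\Lambda_h^{t_k}) \le 2\}$ and a ``bad'' set $\cK_b = [K] \setminus \cK_g$. On $\cK_g$ the inequality above yields $\|\phi_h^k\|_{(\Lambda_h^{t_k})^{-1}} \le \sqrt{2}\,\|\phi_h^k\|_{(\Lambda_h^k)^{-1}}$, so by Cauchy--Schwarz and the standard elliptical-potential lemma (e.g.\ Lemma~11 of \citet{abbasi2011improved}),
\$
\sum_{k \in \cK_g} \min\{H, \Gamma_h^{t_k}(x_h^k,a_h^k)\} \le \sqrt{2}\,\beta\sqrt{K\cdot \sum_{k=1}^K\|\phi_h^k\|^2_{(\Lambda_h^k)^{-1}}} \le \cO\!\bigl(\beta\sqrt{dK\iota}\bigr).
\$
Summing over $h \in [H]$ and substituting $\beta = \cO(d^{1/4}HK^{1/4}\iota^{1/2})$ produces the main term $\cO(d^{3/4}H^2K^{3/4}\iota)$.

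For $\cK_b$ I would simply use the trivial bound $\min\{H,\Gamma_h^{t_k}\} \le H$ and count $|\cK_b|$. Within a single batch the ratio $\det(\Lambda_h^k)/\det(\Lambda_h^{k_i})$ is monotone in $k$, so a batch contains bad episodes only if $\det(\Lambda_h^{k_{i+1}})/\det(\Lambda_h^{k_i}) > 2$. Telescoping the log-ratios and using $\det(\Lambda_h^{K+1})/\det(\Lambda_h^1) \le (1+K/d)^d$, the number of such ``doubling'' batches is at most $\log_2(\det(\Lambda_h^{K+1})/\det(\Lambda_h^1)) = \cO(d\iota)$. Since each batch contains $B$ episodes, $|\cK_b| \le \cO(dB\iota)$, and the bad contribution, summed over $h$, is at most $\cO(dBH^2\iota) = \cO(d^{5/2}H^2K^{1/2}\iota)$ after plugging in $B = \sqrt{d^3K}$. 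Combining the two contributions yields the claimed bound.

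The main obstacle is precisely the staleness of $\Lambda_h^{t_k}$: applying Cauchy--Schwarz and the elliptical-potential lemma to the stale matrix directly inflates the sum by a factor of roughly $B$ and would give an $\Omega(K)$ bound, which is vacuous. The delicate split above---exploiting that only $\cO(d\iota)$ batches can double the determinant while on the remaining episodes the stale and fresh quadratic forms agree up to a factor of $2$---is what trades this factor $B$ against the log-determinant budget and recovers a sublinear contribution from the bonus term.
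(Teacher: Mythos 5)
Your proposal is correct and follows essentially the same route as the paper: the paper likewise splits episodes into a "bad" set where the stale bonus exceeds twice the fresh one (controlled via the determinant-ratio lemma and a count of $\cO(d\iota)$ doubling batches, each contributing $B$ episodes) and a "good" set handled by Cauchy--Schwarz plus the elliptical potential lemma. The only cosmetic difference is that you define the good set directly by the determinant ratio $\det(\Lambda_h^k)/\det(\Lambda_h^{t_k})\le 2$ while the paper defines it by the ratio $\Gamma_h^{t_k}/\Gamma_h^k\le 2$; the two are interchangeable via the same Lemma~12 of \citet{abbasi2011improved}.
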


    \begin{proof}
    For all $k \in [K]$, let $\Gamma_h^k = \beta \cdot (\phi^\top (\Lambda_h^k)^{-1} \phi)^{1/2}$ with $\Lambda_h^k = \sum_{\tau=1}^{k-1} \phi(x_h^\tau, a_h^\tau) \phi(x_h^\tau, a_h^\tau)^\top + \lambda \cdot I_d$. By a doubling trick, we can prove that 
        \$
        \sum_{k = 1}^K \sum_{h = 1}^H  \min\{H, \Gamma_h^{t_k}(x_h^k, a_h^k)\} \lesssim  \sum_{k = 1}^K \sum_{h = 1}^H  \min\{H, \Gamma_h^{k}(x_h^k, a_h^k)\} + \text{error terms},
        \$ 
        which can be further bounded by elliptical potential lemma (Lemma \ref{lem:elliptical} or Lemma 11 in \citet{abbasi2011improved}). See \S \ref{pf:lem:telescope} for a detailed proof.
    \end{proof}

    Finally, putting Lemmas~\ref{lem:decomposition}-\ref{lem:telescope} together, we conclude the proof of Theorem~\ref{thm:oppo}.
\end{proof}

\section{Conclusion}

In this paper, we advance the theoretical study of PPO in stochastic linear MDPs and even adversarial linear MDPs with full-information feedback. We propose a novel algorithm, namely \texttt{OPPO+}, which exhibits a state-of-the-art regret bound as compared to the prior policy optimization algorithms. Our work paves the way for future research in multiple directions. For instance, a significant open research question is to investigate whether policy-based algorithms can achieve the minimax regret in stochastic linear MDPs like previous value-based algorithms \citep{agarwal2022vo,he2022near}. Additionally, an interesting direction is to derive $\sqrt{K}$-regret bounds for adversarial linear MDPs with full-information or even bandit feedback.

\section*{Acknowledgements}

This work was done during HZ's visit to HKUST. The authors would like to thank Miao Lu, Haipeng Luo, Tianhao Wu, and Wei Xiong for helpful discussions and feedback.

\bibliographystyle{ims}
\bibliography{graphbib}

\newpage
\appendix

\section{Missing Proofs of Main Theorem}

\subsection{Proof of Lemma~\ref{lem:decomposition}} \label{pf:lem:decomposition}

\begin{proof}

Our proof relies on the following value difference lemma in \citet{cai2020provably}.
\begin{lemma}[Value Difference Lemma] \label{lem:value:difference}
    Let $\pi = \{\pi_h\}_{h \in [H]}$ and $\pi' = \{\pi'_h\}_{h \in [H]}$ be two policies and $\bar{Q} = \{\bar{Q}_h: \cS \times \cA \mapsto \RR\}_{h \in [H]}$ be any Q-functions. Moreover, for any $h \in [H]$, we define value function $\bar{V}_h : \cS \mapsto \RR$ by letting $\bar{V}_h(x) = \la \bar{Q}_h(x, \cdot), \pi_h(\cdot \mid x) \ra$. Then for any $k \in [K]$ we have
    \$
    & \bar{V}_1(x_1) - V_1^{\pi', k}(x_1) \\ 
    & \qquad = \sum_{h = 1}^H \EE_{\pi'} [ \la \bar{Q}_h(x_h, \cdot), \pi_h( \cdot \mid x_h) - \pi'_h(\cdot \mid x_h) \ra ] + \sum_{h = 1}^{H} \EE_{\pi'} [ \bar{Q}_h(x_h, a_h) - (r_h^k + \PP_h \bar{V}_{h+1}) (x_h, a_h) ].
    \$ 
\end{lemma}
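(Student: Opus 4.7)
The plan is to establish the identity by a one-step decomposition followed by a telescoping sum in the horizon index $h$. For each $h$, I would introduce the auxiliary gap $D_h(x) = \bar V_h(x) - V_h^{\pi',k}(x)$ and track its expectation $\EE_{\pi'}[D_h(x_h)]$ along the trajectory generated by $\pi'$ and $\cP$. With the standard boundary convention $\bar V_{H+1} \equiv V_{H+1}^{\pi',k} \equiv 0$, we get $D_{H+1} \equiv 0$; and since $x_1$ is deterministic, $\EE_{\pi'}[D_1(x_1)] = \bar V_1(x_1) - V_1^{\pi',k}(x_1)$, so the whole lemma reduces to producing a clean one-step recursion for $D_h$.

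To derive this recursion, first I would rewrite $\bar V_h(x_h) = \la \bar Q_h(x_h, \cdot), \pi_h(\cdot \mid x_h) \ra$ by adding and subtracting $\la \bar Q_h(x_h, \cdot), \pi'_h(\cdot \mid x_h) \ra$, which gives $\bar V_h(x_h) = \la \bar Q_h(x_h, \cdot), \pi_h(\cdot \mid x_h) - \pi'_h(\cdot \mid x_h) \ra + \EE_{a_h \sim \pi'_h(\cdot \mid x_h)}[\bar Q_h(x_h, a_h)]$. For the true value, the Bellman equation \eqref{eq:bellman} yields $V_h^{\pi',k}(x_h) = \EE_{a_h \sim \pi'_h(\cdot \mid x_h)}[r_h^k(x_h, a_h) + (\PP_h V_{h+1}^{\pi',k})(x_h, a_h)]$. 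Subtracting and then performing a second add-and-subtract of the auxiliary quantity $\EE_{a_h \sim \pi'_h}[(\PP_h \bar V_{h+1})(x_h, a_h)]$ splits $D_h(x_h)$ into three pieces: the policy-mismatch inner product $\la \bar Q_h(x_h,\cdot), \pi_h(\cdot\mid x_h) - \pi'_h(\cdot\mid x_h) \ra$, the Bellman-residual term $\bar Q_h - (r_h^k + \PP_h \bar V_{h+1})$ evaluated under $a_h \sim \pi'_h$, and a leftover transition term $\EE_{a_h \sim \pi'_h}[(\PP_h(\bar V_{h+1} - V_{h+1}^{\pi',k}))(x_h, a_h)]$.

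Taking $\EE_{\pi'}$ on both sides and invoking the tower property collapses the leftover transition term to $\EE_{\pi'}[D_{h+1}(x_{h+1})]$, producing the one-step identity
\[
\EE_{\pi'}[D_h(x_h)] - \EE_{\pi'}[D_{h+1}(x_{h+1})] = \EE_{\pi'}\!\bigl[\la \bar Q_h(x_h,\cdot), \pi_h(\cdot\mid x_h) - \pi'_h(\cdot\mid x_h)\ra\bigr] + \EE_{\pi'}\!\bigl[\bar Q_h(x_h,a_h) - (r_h^k + \PP_h \bar V_{h+1})(x_h,a_h)\bigr].
\]
Summing this from $h=1$ to $h=H$ telescopes the left-hand side to $\bar V_1(x_1) - V_1^{\pi',k}(x_1)$, which is precisely the claimed identity.

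There is no real technical obstacle here: the statement is a purely algebraic identity that requires no optimality or admissibility assumption on $\bar Q$, $\bar V$, or on the relationship between $\pi$ and $\pi'$. The one step that deserves care is the symmetric add-and-subtract of $\PP_h \bar V_{h+1}$, since it is what allows the transition-based leftover to pair cleanly with the next-level gap $D_{h+1}$ via the tower property; this is where a careless attempt would typically pick up a sign error or average under the wrong policy. Past this bookkeeping step, every equality is immediate, which is why the lemma is conventionally invoked as a ready-made tool in the policy-optimization literature.
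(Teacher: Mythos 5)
Your proof is correct: the telescoping recursion for the gap $D_h = \bar V_h - V_h^{\pi',k}$, obtained by the two add-and-subtract steps and the tower property, is exactly the standard argument, and it is the same one used in the reference (\S B.1 of Cai et al.) that the paper cites in lieu of giving its own proof. No issues.
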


\begin{proof}
    See \S B.1 in \citet{cai2020provably} for a detailed proof. 
\end{proof}

Back to our proof, for any $k \in [K]$, we have
\# \label{eq:1121}
V_1^{\pi^*, k}(x_1) - V_1^{\pi^k, k}(x_1) = \underbrace{V_1^{\pi^*, k}(x_1) - V_1^{k}(x_1)}_{\displaystyle\mathrm{(i)}} + \underbrace{V_1^{k}(x_1) - V_1^{\pi^k, k}(x_1)}_{\displaystyle\mathrm{(ii)}} .
\# 
Applying Lemma~\ref{lem:value:difference} with $\pi = \pi^k$, $\pi' = \pi^*$, and $\bar{Q} = \{Q_h^k\}_{h \in [H]}$, we have
\# \label{eq:1122}
{\displaystyle\mathrm{(i)}} = \sum_{k=1}^K\sum_{h=1}^H \EE_{\pi^*} \bigl[ \la Q^{k}_h(x_h,\cdot), \pi^*_h(\cdot\,|\,x_h) - \pi^k_h(\cdot\,|\,x_h) \ra  \bigr] + \sum_{k=1}^K\sum_{h=1}^H  \EE_{\pi^*}[\delta^{k}_h(x_h,a_h)] ,
\# 
where $\delta_h^k = r_h^k + Q_h^k - \PP_h V_{h+1}^k$ is the Bellman error defined in \eqref{eq:td:error}. Similarly, applying Lemma~\ref{lem:value:difference} with $\pi = \pi' = \pi^k$ and $\bar{Q} = \{Q_h^k\}_{h \in [H]}$, we obtain
\# \label{eq:1123}
{\displaystyle\mathrm{(ii)}} = - \sum_{k=1}^K\sum_{h=1}^H  \EE_{\pi^k}[\delta^{k}_h(x_h,a_h)] .
\# 
Plugging \eqref{eq:1122} and \eqref{eq:1123} into \eqref{eq:1121} and then taking summation across $k \in [K]$, we have
\$
\text{Regret}(K) &= \sum_{k=1}^K \bigl(V^{\pi^*,k}_1(x^k_1) - V^{\pi^k,k}_1(x^k_1)\bigr) \notag \\ 
    & =  \sum_{k=1}^K\sum_{h=1}^H \EE_{\pi^*} \bigl[ \la Q^{k}_h(x_h,\cdot), \pi^*_h(\cdot\,|\,x_h) - \pi^k_h(\cdot\,|\,x_h) \ra  \bigr] \\ 
    & \qquad + \sum_{k=1}^K\sum_{h=1}^H \big( \EE_{\pi^*}[\delta^{k}_h(x_h,a_h)] - \EE_{\pi^k}[\delta_h^k(x_h, a_h)] \big) ,
\$ 
which concludes the proof of Lemma~\ref{lem:decomposition}.
\end{proof}

\subsection{Proof of Lemma~\ref{lem:mirror:descent}} \label{pf:lem:mirror:descent}

\begin{proof}
   Recall that $B \in \NN^+$ is the batch size, $k_i = (i-1) \cdot B + 1$ and $L = K/B$.  Fix $h \in [H]$. By the multi-batched updating rule, we have
    \# \label{eq:1111}
    & \sum_{k=1}^K \EE_{\pi^*} \bigl[ \la Q^{k}_h(x_h,\cdot), \pi^*_h(\cdot\,|\,x_h) - \pi^k_h(\cdot\,|\,x_h) \ra  \bigr]  = B \sum_{i=1}^{L }  \EE_{\pi^*} \bigl[ \la Q^{k_i}_h(x_h,\cdot), \pi^*_h(\cdot\,|\,x_h) - \pi^{k_i}_h(\cdot\,|\,x_h) \ra  \bigr] .
    \#
    To derive the upper bound of \eqref{eq:1111}, we need the following lemma.
    \begin{lemma} \label{lem:one:step}
    For any $(i, h, x_h) \in [L] \times [H] \times \cS$, it holds that 
    \$
    & \la Q^{k_i}_h(x_h,\cdot), \pi^*_h(\cdot\,|\,x_h) - \pi^{k_i}_h(\cdot\,|\,x_h) \ra \\ 
    & \qquad \le \frac{\alpha H^2}{2} +  \frac{ \mathrm{KL}\bigl( \pi_h^*(\cdot \mid x_h) \,\big\|\, \pi_h^{k_i}(\cdot \mid x_h)\bigr) - \mathrm{KL}\bigl( \pi_h^*(\cdot \mid x_h)\,\big\|\, \pi_h^{k_{i+1}}(\cdot \mid x_h)\bigr) }{\alpha}.
    \$ 
    \end{lemma}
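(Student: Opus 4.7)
The plan is the standard one-step mirror-descent / exponential-weights argument, leveraging the closed form of the policy update. A first observation is that between two batch boundaries $k_i$ and $k_{i+1}$ the quantities $\pi^{\cdot}_h$ and $Q^{\cdot}_h$ are frozen (see the ``else'' branch of Algorithm~\ref{alg:oppo}), so the update in Line~8 at episode $k_{i+1}$ reduces exactly to
\[
\pi^{k_{i+1}}_h(\cdot \mid x) \propto \pi^{k_i}_h(\cdot \mid x) \cdot \exp\bigl(\alpha \cdot Q^{k_i}_h(x, \cdot)\bigr).
\]
Writing $Z^{k_i}_h(x_h)$ for the normalization constant, taking logs gives the key identity $\alpha \cdot Q^{k_i}_h(x_h, a) = \log \pi^{k_{i+1}}_h(a \mid x_h) - \log \pi^{k_i}_h(a \mid x_h) + \log Z^{k_i}_h(x_h)$.

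I would then take the inner product of the above with $\pi^*_h(\cdot \mid x_h) - \pi^{k_i}_h(\cdot \mid x_h)$ over $\cA$. Because both $\pi^*_h(\cdot \mid x_h)$ and $\pi^{k_i}_h(\cdot \mid x_h)$ sum to one, the action-independent term $\log Z^{k_i}_h(x_h)$ cancels, leaving
\[
\la Q^{k_i}_h(x_h,\cdot), \pi^*_h(\cdot\mid x_h) - \pi^{k_i}_h(\cdot\mid x_h)\ra = \alpha^{-1} \la \log \pi^{k_{i+1}}_h - \log \pi^{k_i}_h,\, \pi^*_h - \pi^{k_i}_h \ra.
\]
I would then split the right-hand side into two pieces. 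The pairing with $\pi^*_h$ is exactly $\mathrm{KL}(\pi^*_h \| \pi^{k_i}_h) - \mathrm{KL}(\pi^*_h \| \pi^{k_{i+1}}_h)$ by the telescoping algebraic identity for KL, and the pairing with $-\pi^{k_i}_h$ is $\mathrm{KL}(\pi^{k_i}_h \| \pi^{k_{i+1}}_h) \ge 0$. Putting these together yields
\[
\la Q^{k_i}_h(x_h,\cdot),\, \pi^*_h - \pi^{k_i}_h\ra = \alpha^{-1}\bigl[\mathrm{KL}(\pi^*_h\|\pi^{k_i}_h) - \mathrm{KL}(\pi^*_h\|\pi^{k_{i+1}}_h)\bigr] + \alpha^{-1}\,\mathrm{KL}(\pi^{k_i}_h\|\pi^{k_{i+1}}_h).
\]

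All that remains is a one-step stability bound of the form $\mathrm{KL}(\pi^{k_i}_h \| \pi^{k_{i+1}}_h) \le \alpha^2 H^2 / 2$. Using the identity $\mathrm{KL}(\pi^{k_i}_h \| \pi^{k_{i+1}}_h) = \log Z^{k_i}_h(x_h) - \alpha \la Q^{k_i}_h(x_h, \cdot), \pi^{k_i}_h \ra$, and noting that the truncation in Line~15 together with $\bar{r}_h^k \in [0, 1]$ forces $Q^{k_i}_h(x_h, \cdot) \in [0, H]$, I would apply Hoeffding's lemma to $\log \EE_{a \sim \pi^{k_i}_h}\bigl[\exp(\alpha Q^{k_i}_h(x_h, a))\bigr] - \alpha \EE_{a \sim \pi^{k_i}_h}[Q^{k_i}_h(x_h, a)]$ to obtain a bound of $\alpha^2 H^2 / 8 \le \alpha^2 H^2 / 2$. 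Dividing by $\alpha$ produces the $\alpha H^2 / 2$ term in the statement. I do not expect genuine obstacles here: the entire argument is standard mirror-descent potential algebra, and the one subtlety worth flagging is that this lemma applies only at the batch boundaries $k = k_i$, which is precisely what the multi-batched updating ensures so that the closed-form exponential-weights recursion above is valid.
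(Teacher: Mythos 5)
Your proof is correct, and it takes a genuinely (if mildly) different route from the paper's. Both arguments start from the closed-form recursion $\pi_h^{k_{i+1}}(\cdot \mid x_h) \propto \pi_h^{k_i}(\cdot \mid x_h)\exp\big(\alpha Q_h^{k_i}(x_h,\cdot)\big)$, and you are right to flag that the freezing of $\pi$ and $Q$ inside each batch is exactly what makes this hold between consecutive batch boundaries. The difference is in how the inner product is expanded. The paper pairs $\alpha Q_h^{k_i}(x_h,\cdot)$ against $\pi_h^*(\cdot\mid x_h) - \pi_h^{k_{i+1}}(\cdot\mid x_h)$, obtaining the three-KL identity with $-\mathrm{KL}\big(\pi_h^{k_{i+1}} \| \pi_h^{k_i}\big)$, and then corrects back to $\pi_h^* - \pi_h^{k_i}$ by Cauchy--Schwarz (costing $\alpha H \|\pi_h^{k_i}-\pi_h^{k_{i+1}}\|_1$), Pinsker's inequality, and the elementary bound $\max_{y}\{-y^2/2+\alpha H y\}=\alpha^2H^2/2$. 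You instead pair directly against $\pi_h^* - \pi_h^{k_i}$, which yields the exact identity
\[
\alpha\,\la Q_h^{k_i}(x_h,\cdot),\,\pi_h^*(\cdot\mid x_h)-\pi_h^{k_i}(\cdot\mid x_h)\ra = \mathrm{KL}\big(\pi_h^*\|\pi_h^{k_i}\big)-\mathrm{KL}\big(\pi_h^*\|\pi_h^{k_{i+1}}\big)+\mathrm{KL}\big(\pi_h^{k_i}\|\pi_h^{k_{i+1}}\big),
\]
and then you bound the drift term via $\mathrm{KL}\big(\pi_h^{k_i}\|\pi_h^{k_{i+1}}\big)=\log\EE_{a\sim\pi_h^{k_i}}\big[e^{\alpha Q_h^{k_i}(x_h,a)}\big]-\alpha\,\EE_{a\sim\pi_h^{k_i}}\big[Q_h^{k_i}(x_h,a)\big]$ and Hoeffding's lemma, using $Q_h^{k_i}\in[0,H]$ (which indeed follows from $\bar r_h^k\in[0,1]$ and the truncation at $H-h$). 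This is slightly shorter than the paper's chain and gives the sharper constant $\alpha^2H^2/8$ in place of $\alpha^2H^2/2$, though this does not change any downstream rate. One cosmetic remark: your observation that the drift term is nonnegative is beside the point --- it enters with a positive sign, so only its upper bound matters --- but you supply that bound immediately afterwards, so nothing is missing.
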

    \begin{proof}
        By the updating rule in \eqref{eq:update}, we have      
        \# \label{eq:9935}
        \pi_h^{k_{i+1}}(\cdot \mid x_h) = \frac{\pi_h^{k_i}(\cdot \mid x_h) \cdot \exp\{\alpha Q_h^{k_i}(x_h, \cdot)\}}{\sum_{a \in \cA} \pi_h^{k_i}(a \mid x_h) \cdot \exp\{\alpha Q_h^{k_i}(x_h, a)\}} .
        \#  
        For ease of presentation, we denote $\Upsilon = \sum_{a \in \cA} \pi_h^{k_i}(a \mid x_h) \cdot \exp\{\alpha Q_h^{k_i}(x_h, a)\}$. Then we have
        \# \label{eq:9936}
        & \la \alpha Q^{k_i}_h(x_h,\cdot), \pi^*_h(\cdot\,|\,x_h) - \pi^{k_{i+1}}_h(\cdot\,|\,x_h) \ra \notag \\ 
        & \qquad = \la \log \Upsilon + \log \pi_h^{k_{i+1}}(\cdot \mid x_h) - \log \pi_h^{k_i}(\cdot \mid x_h), \pi^*_h(\cdot \mid x_h) - \pi^{k_{i+1}}_h(\cdot \mid x_h) \ra \notag \\ 
        & \qquad = \la \log \pi_h^{k_{i+1}}(\cdot \mid x_h) - \log \pi_h^{k_i}(\cdot \mid x_h), \pi^*_h(\cdot \mid x_h) - \pi^{k_{i+1}}_h(\cdot \mid x_h) \ra
        \#  
        where the first equality uses \eqref{eq:9935}, and the second equality follows from the fact that $\sum_{a \in \cA} \pi^*_h(a \mid x_h) - \pi^{k_{i+1}}_h(a \mid x_h) = 0$. Rearranging \eqref{eq:9936} gives that 
        \# \label{eq:9937}
        \eqref{eq:9936} & = \Big\la \log \Big( \frac{\pi_h^{*}(\cdot \mid x_h)}{\pi_h^{k_i}(\cdot \mid x_h)} \Big), \pi_h^{*}(\cdot \mid x_h) \Big\ra  - \Big\la \log \Big( \frac{\pi_h^{*}(\cdot \mid x_h)}{\pi_h^{k_{i+1}}(\cdot \mid x_h)} \Big), \pi_h^{*}(\cdot \mid x_h) \Big\ra  \notag \\ 
        & \qquad -  \Big\la \log \Big( \frac{\pi_h^{k_{i+1}}(\cdot \mid x_h)}{\pi_h^{k_{i}}(\cdot \mid x_h)} \Big), \pi_h^{k_{i+1}}(\cdot \mid x_h) \Big\ra \notag \\ 
        & = \mathrm{KL}\bigl( \pi_h^*(\cdot \mid x_h) \,\big\|\, \pi_h^{k_i}(\cdot \mid x_h)\bigr) - \mathrm{KL}\bigl( \pi_h^*(\cdot \mid x_h) \,\big\|\, \pi_h^{k_{i+1}}(\cdot \mid x_h)\bigr) - \mathrm{KL}\bigl( \pi_h^{k_{i+1}}(\cdot \mid x_h) \,\big\|\, \pi_h^{k_i}(\cdot \mid x_h)\bigr) .
        \#  
        Furthermore, we have 
        \# \label{eq:9938}
        & \la \alpha Q^{k_i}_h(x_h,\cdot), \pi^*_h(\cdot\,|\,x_h) - \pi^{k_{i}}_h(\cdot\,|\,x_h) \ra  \notag \\ 
        & \qquad = \la \alpha Q^{k_i}_h(x_h,\cdot), \pi^*_h(\cdot\,|\,x_h) - \pi^{k_{i+1}}_h(\cdot\,|\,x_h) \ra  - \la \alpha Q^{k_i}_h(x_h,\cdot), \pi^{k_i}_h(\cdot\,|\,x_h) - \pi^{k_{i+1}}_h(\cdot\,|\,x_h) \ra \notag \\ 
        & \qquad \le \mathrm{KL}\bigl( \pi_h^*(\cdot \mid x_h) \,\big\|\, \pi_h^{k_i}(\cdot \mid x_h)\bigr) - \mathrm{KL}\bigl( \pi_h^*(\cdot \mid x_h) \,\big\|\, \pi_h^{k_{i+1}}(\cdot \mid x_h)\bigr) \notag \\ 
        & \qquad \quad - \mathrm{KL}\bigl( \pi_h^{k_{i+1}}(\cdot \mid x_h) \,\big\|\, \pi_h^{k_i}(\cdot \mid x_h)\bigr) + \alpha H \cdot \| \pi^{k_i}_h(\cdot\,|\,x_h) - \pi^{k_{i+1}}_h(\cdot\,|\,x_h) \|_1 , 
        \#  
        where the last inequality uses \eqref{eq:9937}, Cauchy-Schwarz inequality, and the fact that $\|Q_h^{k_i}\|_{\infty} \le H$. By Pinsker's inequality, we have $\mathrm{KL}( \pi_h^{k_{i+1}}(\cdot \mid x_h) \,\|\, \pi_h^{k_i}(\cdot \mid x_h)) \ge \|  \pi_h^{k_{i+1}}(\cdot \mid x_h) - \pi_h^{k_i}(\cdot \mid x_h) \|_1^2/2$. Together with \eqref{eq:9938}, we obtain that 
        \# \label{eq:9939}
        & \la \alpha Q^{k_i}_h(x_h,\cdot), \pi^*_h(\cdot\,|\,x_h) - \pi^{k_{i}}_h(\cdot\,|\,x_h) \ra  \notag \\ 
        & \qquad \le \mathrm{KL}\bigl( \pi_h^*(\cdot \mid x_h) \,\big\|\, \pi_h^{k_i}(\cdot \mid x_h)\bigr) - \mathrm{KL}\bigl( \pi_h^*(\cdot \mid x_h) \,\big\|\, \pi_h^{k_{i+1}}(\cdot \mid x_h)\bigr) \notag \\ 
        & \qquad \quad - \|  \pi_h^{k_{i+1}}(\cdot \mid x_h) - \pi_h^{k_i}(\cdot \mid x_h) \|_1^2/2 + \alpha H \cdot \| \pi^{k_i}_h(\cdot\,|\,x_h) - \pi^{k_{i+1}}_h(\cdot\,|\,x_h) \|_1 \notag \\ 
        & \qquad \le \mathrm{KL}\bigl( \pi_h^*(\cdot \mid x_h) \,\big\|\, \pi_h^{k_i}(\cdot \mid x_h)\bigr) - \mathrm{KL}\bigl( \pi_h^*(\cdot \mid x_h) \,\big\|\, \pi_h^{k_{i+1}}(\cdot \mid x_h)\bigr) + \alpha^2H^2/2,
        \#  
        where the last inequality uses the fact that $\max_{y \in \RR} \{-y^2/2 + \alpha H \cdot y \} = \alpha^2H^2/2$. Rearranging \eqref{eq:9939} concludes the proof of Lemma~\ref{lem:one:step}.
    \end{proof}
    By Lemma~\ref{lem:one:step}, we further have
    \# \label{eq:1112}
     \eqref{eq:1111} & \le B\sum_{i=1}^{ L }  \Big( \frac{\alpha H^2}{2}  + \frac{\EE_{\pi^*}\big[\mathrm{KL}\bigl( \pi_h^*(\cdot \mid x_h) \,\big\|\, \pi_h^{k_i}(\cdot \mid x_h)\bigr) - \mathrm{KL}\bigl( \pi_h^*(\cdot \mid x_h)\,\big\|\, \pi_h^{k_{i+1}}(\cdot \mid x_h)\bigr)\big]}{\alpha} \Big) \notag \\ 
    &  = B \cdot \Big( \frac{\alpha H^2  K }{2B} + \frac{ \EE_{\pi^*} \big[\mathrm{KL}\bigl( \pi_h^*(\cdot \mid x_h) \,\big\|\, \pi_h^{k_1}(\cdot \mid x_h)\bigr) - \mathrm{KL}\bigl( \pi_h^*(\cdot \mid x_h) \,\big\|\, \pi_h^{k_{ L  + 1}}(\cdot \mid x_h)\bigr) \big]}{\alpha} \Big) \notag \\
    & \le \frac{\alpha H^2  K }{2} + \frac{B \cdot \log|\cA|}{\alpha} ,
    \# 
     where the equality uses the fact that $L = K/B$, and the last inequality uses the non-negativity of KL-divergence and the fact that $\pi_h^{k_1} = \pi_h^1$ is the uniform policy. Combining \eqref{eq:1111}, \eqref{eq:1112}, and $\alpha = \sqrt{2B \log|\cA|/(KH^2)}$, we obtain for all $h \in [H]$:
     \# \label{eq:1113}
     \sum_{k=1}^K \EE_{\pi^*} \bigl[ \la Q^{k}_h(x_h,\cdot), \pi^*_h(\cdot\,|\,x_h) - \pi^k_h(\cdot\,|\,x_h) \ra  \bigr] \le \sqrt{2BH^2K \cdot \log |\cA|} . 
     \# 
     Telescoping \eqref{eq:1113} across $h \in [H]$ concludes the proof of Lemma~\ref{lem:mirror:descent}.
\end{proof}

\subsection{Proof of Lemma \ref{lem:bellman:error}} \label{pf:lem:bellman:error}

\begin{proof}
    Recall that $B \in \NN^+$ is the batch size, $k_i = (i-1) \cdot B + 1$ and $L = K/B$. Fix $h \in [H]$. We have
    \# \label{eq:4331}
    \sum_{k = 1}^K \EE_{\pi^*}[\delta^{k}_h(x_h,a_h)] &= \sum_{k = 1}^K \EE_{\pi^*} [r_h^k(s_h, a_h) + \PP_hV_{h+1}^k(x_h, a_h) - Q_h^k(x_h, a_h) ] \notag \\ 
    & = \sum_{k = 1}^K \EE_{\pi^*} [r_h^k(s_h, a_h) + \PP_hV_{h+1}^{t_k}(x_h, a_h) - Q_h^{t_k}(x_h, a_h) ] \\
    & = \sum_{k = 1}^K \EE_{\pi^*} [r_h^k(s_h, a_h)  - \bar{r}_h^{t_k}(x_h, a_h) ] + \sum_{k = 1}^K \EE_{\pi^*} [ (\PP_hV_{h+1}^{t_k} - \hat{\PP}_h^{t_k} V_{h+1}^{t_k} )(x_h, a_h)  ], \notag 
    \#  
    where the first equality uses the definition of $t_k$ and the updating rule, and the second equality uses the definition of  $Q_h^k = \bar{r}_h^k + \hat{\PP}_h^k V_{h+1}^k$ in \eqref{eq:Q:update}. 
    Furthermore, we have
    \# \label{eq:4332}
    \sum_{k = 1}^{K}  \EE_{\pi^*} [r_h^k(x_h, a_h)  - \bar{r}_h^{t_k}(x_h, a_h)  ] &= \sum_{k = 1}^{K}  \EE_{\pi^*} [r_h^k(x_h, a_h) ] - B \sum_{i = 1}^{L}  \EE_{\pi^*} [\bar{r}_h^{k_i}(x_h, a_h)] \notag \\ 
    & = \sum_{k = 1}^{K}  \EE_{\pi^*} [r_h^k(x_h, a_h) ] -  \sum_{i = 1}^{L}   \sum_{k = k_{i-1}}^{k_i - 1} \EE_{\pi^*} [r_h^{k}(x_h, a_h)]  \notag \\ 
    & = \sum_{k = K - B + 1}^K \EE_{\pi^*} [r_h^k(x_h, a_h) ] \le B,
    \#  
    where the first equality uses the updating rule, the second equality follows from  the definition of $\bar{r}_h^k$ in \eqref{eq:bar:r}, and the last inequality is obtained by the fact that $\|r_h^k\|_{\infty} \le 1$ for any $(k, h) \in [K] \times [H]$. Combining \eqref{eq:4331} and \eqref{eq:4332} and then taking summation across $h \in [H]$, we obtain that 
    \# \label{eq:43322}
    \sum_{k = 1}^K \sum_{h = 1}^H \EE_{\pi^*}[\delta^{k}_h(x_h,a_h)] \le \sum_{k = 1}^K \sum_{h = 1}^H \EE_{\pi^*} [ (\PP_hV_{h+1}^{t_k} - \hat{\PP}_h^{t_k} V_{h+1}^{t_k} )(x_h, a_h)  ] + BH. 
    \# 

    On the other hand, similar to the derivation of \eqref{eq:4331}, we have
    \# \label{eq:4333}
    - \sum_{k = 1}^K \EE_{\pi^k}[\delta^{k}_h(x_h,a_h)]  = \sum_{k = 1}^K \EE_{\pi^{t_k}} [ \bar{r}_h^{t_k}(x_h, a_h) - r_h^k(s_h, a_h) ] + \sum_{k = 1}^K \EE_{\pi^k} [ (\hat{\PP}_h^{t_k} V_{h+1}^{t_k} - \PP_hV_{h+1}^{t_k} )(x_h, a_h) ] .
    \#  
    For the first term of \eqref{eq:4333}, by the updating rule and calculation, we have
    \# \label{eq:4334}
     \sum_{k = 1}^K \EE_{\pi^{t_k}} [ \bar{r}_h^{t_k}(x_h, a_h) - r_h^k(s_h, a_h) ] &= B \sum_{i = 1}^{L}  \EE_{\pi^{k_i}} [ \bar{r}_h^{k_i}(x_h, a_h)] 
      - \sum_{i = 1}^{L}  \sum_{k = k_{i}}^{k_{i+1} - 1}  \EE_{\pi^{k_i}}[r_h^k(x_h, a_h) ] \notag \\
      & = \sum_{i = 1}^{L} \sum_{k = k_{i-1}}^{k_i - 1} \EE_{\pi^{k_i}} [r_h^{k}(x_h, a_h)]  - \sum_{i = 1}^{L}  \sum_{k = k_{i}}^{k_{i+1} - 1} \EE_{\pi^{k_i}}[r_h^k(x_h, a_h) ] \notag \\ 
      & \le \sum_{i = 1}^{L - 1}  \sum_{k = k_{i}}^{k_{i+1} - 1} \big( \EE_{\pi^{k_{i+1}}}[r_h^k(x_h, a_h) ] - \EE_{\pi^{k_{i}}}[r_h^k(x_h, a_h) ] \big) ,
    \# 
    where the last inequality uses the fact that $ - \sum_{k = k_{L}}^{k_{L + 1} - 1} \EE_{\pi^{k_{L}}}[r_h^k(x_h, a_h)] \le 0$.
    Summing over $h \in [H]$ in \eqref{eq:4334} gives that
    \# \label{eq:4335}
     & \sum_{k = 1}^K \sum_{h = 1}^H \EE_{\pi^{t_k}} [ \bar{r}_h^{t_k}(x_h, a_h) - r_h^k(s_h, a_h) ] \notag \\ 
     & \qquad \le \sum_{i = 1}^{L - 1}  \sum_{k = k_{i}}^{k_{i+1} - 1} \big( V_1^{\pi^{k_{i+1}}, k}(x_1) - V_1^{\pi^{k_{i}}, k}(x_1) \big) \notag \\
     & \qquad = \sum_{i = 1}^{L - 1}  \sum_{k = k_{i}}^{k_{i+1} - 1} \sum_{h = 1}^H \EE_{\pi^{k_{i+1}}} [ \la Q_h^{\pi^{k_{i}}, k}(\cdot \mid x_h),   \pi_h^{k_{i+1}}(\cdot \mid x_h) - \pi_h^{k_i}(\cdot \mid x_h)\ra ] ,
    \# 
    where the last inequality uses the value difference lemma (Lemma~\ref{lem:value:difference}).
    By the policy updating rule in \eqref{eq:update}, we have
    \$
    \pi_h^{k_{i+1}}(\cdot \mid x_h)  = \frac{\pi_h^{k_i}(\cdot \mid x_h) \cdot \exp\big(\alpha Q_h^{k_i}(x_h, \cdot) \big) }{\sum_{a \in \cA}\pi_h^{k_i}(a \mid x_h) \cdot \exp\big( \alpha Q_h^{k_i}(x_h, a) \big)}.
    \$
    for any $x_h \in \cS$, which implies that
     \$
    \frac{\pi_h^{k_i}(\cdot \mid x_h)}{\pi_h^{k_{i+1}}(\cdot \mid x_h)} &=  \frac{\sum_{a \in \cA}\pi_h^{k_i}(a \mid x_h) \cdot \exp\big( \alpha Q_h^{k_i}(x_h, a) \big)}{ \exp\big(\alpha Q_h^{k_i}(x_h, \cdot) \big) } \\ 
    & \ge \frac{\sum_{a \in \cA}\pi_h^{k_i}(a \mid x_h) }{\exp(\alpha H)}  = \exp(-\alpha H) \ge 1 - \alpha H,
    \$ 
    where the first inequality follows the fact that $0 \le Q_h^{k_i}(\cdot, \cdot) \le H$, and the last inequality uses the basic inequality $\exp(y) \ge 1 + y$ for all $y \in \RR$. Together with 
    \$ 
     \pi_h^{k_{i+1}}(\cdot \mid x_h) - \pi_h^{k_i}(\cdot \mid x_h) = \pi_h^{k_{i+1}}(\cdot \mid x_h) \cdot \Big( 1 - \frac{\pi_h^{k_i}(\cdot \mid x_h)}{\pi_h^{k_{i+1}}(\cdot \mid x_h)} \Big),
    \$ 
    we further obtain  
    \# \label{eq:4336}
    \pi_h^{k_{i+1}}(\cdot \mid x_h) - \pi_h^{k_i}(\cdot \mid x_h) \le \alpha H \cdot \pi_h^{k_{i+1}}(\cdot \mid x_h)
    \# 
    for any $x_h \in \cS$. Plugging \eqref{eq:4336} into \eqref{eq:4335} gives that 
    \# \label{eq:4337}
    \eqref{eq:4335} \le \alpha H \sum_{i = 1}^{L - 1}  \sum_{k = k_{i}}^{k_{i+1} - 1} \sum_{h = 1}^H \EE_{\pi^{k_{i+1}}} [ \la Q_h^{\pi^{k_{i}}, k}(x_h, \cdot), \pi_h^{k_{i+1}}(\cdot \mid x_h) \ra ] \le \alpha H^3 K,
    \# 
    where the last inequality follows from the fact that $Q_h^{\pi^{k_{i}}, k}(\cdot, \cdot) \le H$. Plugging \eqref{eq:4335} and \eqref{eq:4337} into \eqref{eq:4333}, we have
    \# \label{eq:4338}
    - \sum_{k = 1}^K \sum_{h = 1}^H \EE_{\pi^k}[\delta^{k}_h(x_h,a_h)]  & \le  \sum_{k = 1}^K \sum_{h  = 1}^H \EE_{\pi^k} [ (\hat{\PP}_h^{t_k} V_{h+1}^{t_k} - \PP_hV_{h+1}^{t_k} )(x_h, a_h) ] + \alpha H^3 K \notag \\
    & \le \sum_{k = 1}^K \sum_{h  = 1}^H  (\hat{\PP}_h^{t_k} V_{h+1}^{t_k} - \PP_hV_{h+1}^{t_k} )(x_h^k, a_h^k)  + \alpha H^3 K + \sqrt{H^3K \iota},
    \# 
    where the last inequality uses Azuma-Hoeffding inequality.
    Putting \eqref{eq:43322} and \eqref{eq:4338} together, we conclude the proof of Lemma~\ref{lem:bellman:error}.
   \end{proof}

\subsection{Proof of Lemma \ref{lem:ucb}} \label{pf:lem:ucb}

\begin{proof}
    Recall that 
    \# \label{eq:1331}
    (\hat{\PP}_h^{t_k} V_{h+1}^{t_k})(\cdot, \cdot) = \min\{\phi(\cdot, \cdot)^\top w_h^{t_k} + \Gamma_h^{t_k}(\cdot, \cdot), H - h\}^+, 
    \#  
    where $w_h^{t_k}$ and $\Gamma_h^{t_k}$ take form
    \# \label{eq:1332}
     w_h^{t_k} = (\Lambda^{t_k}_h)^{-1} \Big( \sum_{\tau=1}^{t_k-1} \phi(x_h^\tau,a_h^\tau) \cdot  V_{h+1}^{t_k}(x_{h+1}^\tau) \Big) , \quad 
     \Gamma_h^{t_k}(\cdot, \cdot) = \beta \cdot \sqrt{\phi(\cdot, \cdot)^\top (\Lambda_h^{t_k})^{-1} \phi(\cdot, \cdot) }.
    \#  
    Here $\Lambda_h^{t_k}$ is the covariance matrix:
    \# \label{eq:1333}
    \Lambda_h^{t_k} = \sum_{\tau = 1}^{t_k - 1} \phi(x_h^\tau, a_h^\tau) \phi(x_h^\tau, a_h^\tau)^\top + \lambda \cdot I_d.
    \#  
    Back to our proof, for any $(x, a) \in \cS \times \cA$, we have
    \# \label{eq:1334}
    (\PP_h V_{h+1}^{t_k})(x, a) & = \phi(x, a)^\top \la \mu_h, V_{h+1}^{t_k} \ra_{\cS}  \notag  \\ 
    & = \phi(x, a)^\top (\Lambda_h^{t_k} )^{-1} \Lambda_h^{t_k}  \la \mu_h, V_{h+1}^{t_k} \ra_{\cS} \notag \\ 
    & = \phi(x, a)^\top (\Lambda_h^{t_k} )^{-1} \Big( \sum_{\tau = 1}^{t_k - 1} \phi(x_h^\tau, a_h^\tau) \phi(x_h^\tau, a_h^\tau)^\top \la \mu_h, V_{h+1}^{t_k} \ra_{\cS} + \lambda \cdot \la \mu_h, V_{h+1}^{t_k} \ra_{\cS} \Big) \notag \\ 
    & = \phi(x, a)^\top (\Lambda_h^{t_k} )^{-1} \Big( \sum_{\tau = 1}^{t_k - 1} \phi(x_h^\tau, a_h^\tau) \cdot (\PP_h V_{h+1}^{t_k})(x_h^\tau, a_h^\tau) + \lambda \cdot \la \mu_h, V_{h+1}^{t_k} \ra_{\cS} \Big),
    \# 
    where the first and the last equality follows from the definition of linear MDP (Definition~\ref{def:linear:mdp}), and the third equality uses \eqref{eq:1333}. Putting \eqref{eq:1332} and \eqref{eq:1334} together, we have 
    \begin{equation}
    \begin{aligned} \label{eq:1335}
    &\phi(x, a)^\top w_h^{t_k} -   \PP_h V_{h+1}^{t_k}(x, a) \\ 
    & \qquad  = \underbrace{ \phi(x, a)^\top (\Lambda_h^{t_k} )^{-1} \Big( \sum_{\tau = 1}^{t_k - 1} \phi(x_h^\tau, a_h^\tau) \cdot \big( V_{h+1}^{t_k}(x_{h+1}^\tau) - (\PP_h V_{h+1}^{t_k})(x_h^\tau, a_h^\tau) \big)\Big) }_{(\star)} \\ 
    & \qquad \qquad - \underbrace{\lambda \cdot \phi(x, a)^\top (\Lambda_h^k)^{-1} \la \mu_h, V_{h+1}^{t_k} \ra_{\cS}}_{(\star \star)} .
    \end{aligned}
    \end{equation}
    For Term $(\star)$ in \eqref{eq:1335}, by Cauchy-Schwarz inequality, we have
    \# \label{eq:1336}
    (\star) &\le \sqrt{\phi(x, a)^\top (\Lambda_h^{t_k})^{-1} \phi(x, a)} \cdot \bigg\|  \sum_{\tau = 1}^{t_k - 1} \phi(x_h^\tau, a_h^\tau) \cdot \big( V_{h+1}^{t_k}(x_{h+1}^\tau) - (\PP_h V_{h+1}^{t_k})(x_h^\tau, a_h^\tau) \big) \bigg\|_{(\Lambda_h^{t_k})^{-1}} \notag \\ 
    & \le \cO(d^{1/4}H K^{1/4} \iota^{1/2} ) \cdot \sqrt{\phi(x, a)^\top (\Lambda_h^{t_k})^{-1} \phi(x, a)} ,
    \# 
    where the last inequality follows from the following lemma.
    \begin{lemma} \label{lem:self:normalized}
        Fix $\delta \in (0, 1]$. It holds for all $(k, h) \in [K] \times [H]$ that
        \$
        \bigg\|  \sum_{\tau = 1}^{t_k - 1} \phi(x_h^\tau, a_h^\tau) \cdot \big( V_{h+1}^{t_k}(x_{h+1}^\tau) - (\PP_h V_{h+1}^{t_k})(x_h^\tau, a_h^\tau) \big) \bigg\|_{(\Lambda_h^{t_k})^{-1}} \le \cO(d^{1/4}H K^{1/4} \iota^{1/2} ).
        \$ 
    \end{lemma}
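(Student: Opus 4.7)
The plan is to follow the self-normalized martingale argument of \citet{abbasi2011improved}, applied uniformly over an $\epsilon$-cover of the function class that contains $V_{h+1}^{t_k}$. The key measurability observation is that $t_k$ is the first episode of the batch containing $k$, so $V_{h+1}^{t_k}$ is built exclusively from data up to episode $t_k - 1$ and is $\mathcal{F}_{t_k-1}$-measurable. For any \emph{fixed} $V$ with $\|V\|_\infty \le H$, the noise sequence $\eta_\tau = V(x_{h+1}^\tau) - (\PP_h V)(x_h^\tau, a_h^\tau)$ is then a bounded martingale difference sequence, to which the self-normalized inequality applies; a union bound over an $\epsilon$-cover $\mathcal{C}_\epsilon$ of the value class plus a discretization term of order $\epsilon K /\sqrt{\lambda}$ yields, analogously to Lemma~D.4 in \citet{jin2020provably}, a uniform bound of order $H\sqrt{d\iota + \log|\mathcal{C}_\epsilon|}$ with probability at least $1 - \delta/(2HK)$. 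A final union bound over $(k,h) \in [K]\times[H]$ then delivers the claim modulo estimating $\log|\mathcal{C}_\epsilon|$.

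The central task is therefore to cover the value class. I write $V_{h+1}^{t_k}(\cdot) = \la Q_{h+1}^{t_k}(\cdot,\cdot), \pi_{h+1}^{t_k}(\cdot|\cdot)\ra$ and cover the pair $(Q,\pi)$ via the elementary bound $|\la Q,\pi\ra - \la \tilde Q,\tilde\pi\ra| \le \|Q-\tilde Q\|_\infty + H\cdot \sup_x \|\pi(\cdot|x) - \tilde\pi(\cdot|x)\|_1$. The $Q$-function is parameterized by the ridge weight $w_{h+1}^{t_k}$, the covariance matrix $\Lambda_{h+1}^{t_k}$, and the bonus coefficient $\beta$ (with the adversarial batch-averaged reward $\bar r$ held fixed relative to the past data); Lemma~\ref{lem:cover:q} then gives $\log\cN(\cQ_{h+1}) \lesssim d^2 \iota$. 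The policy $\pi_{h+1}^{t_k}$ is a softmax of the logits $\alpha \sum_{j=1}^{l} Q_{h+1}^{k_j}$ for some $l \le L := K/B$, so Lemma~\ref{lem:smooth:policy} transforms a sup-norm perturbation of the logits into an $\ell_1$-perturbation of the resulting softmax distribution, reducing the policy covering to covering each of the $l$ summands; this yields $\log\cN(\text{policy class}) \lesssim L d^2 \iota$. Summing the two contributions, $\log|\mathcal{C}_\epsilon| \lesssim L d^2 \iota$.

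Plugging this into the self-normalized inequality and tuning $\epsilon = dH/K$, the resulting bound is of order $H\sqrt{Ld^2\iota}$, the lower-order $H\sqrt{d\iota}$ term being negligible for $K \ge d^3$. With the prescribed $B = \sqrt{d^3 K}$, we have $L \le \sqrt{K/d^3}$ and hence $Ld^2 \le d^{1/2} K^{1/2}$, giving $H\sqrt{Ld^2\iota} \le \cO(d^{1/4}HK^{1/4}\iota^{1/2})$, as required. The main obstacle---and precisely the technical novelty advertised in Section~\ref{sec:challenge:novelty}---is the policy covering: a naive covering of $|\cA|$-dimensional distributions over actions would introduce a polynomial $|\cA|$ factor, and a per-episode policy update (as in \citet{cai2020provably,shani2020optimistic}) would force $l = K$, producing a bound of order $\tilde{\cO}(Hd\sqrt{K})$ per step and rendering the final regret vacuous. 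The multi-batched schedule and the softmax smoothness lemma~\ref{lem:smooth:policy} work in concert to avoid both pitfalls, reducing the policy covering to that of a sum of at most $L \le \sqrt{K/d^3}$ structured $Q$-functions, each contributing only the $\tilde{\cO}(d^2)$ cost characteristic of the linear-MDP $Q$-class.
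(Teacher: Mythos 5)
Your proposal is correct and follows essentially the same route as the paper: the self-normalized concentration of \citet{jin2020provably} applied uniformly over a cover of the value class, the decomposition $\cN_{\varepsilon}(\cV)\le \cN_{\varepsilon/2}(\cQ)\cdot\cN_{\varepsilon/(2H)}(\Pi)$ (Lemma~\ref{lem:cover:1}), the reduction of the softmax policy cover to $l\le L$ copies of the $Q$-class via the Pinsker-type smoothness bound (Lemmas~\ref{lem:cover:2} and~\ref{lem:smooth:policy}), and the final computation $H\sqrt{Ld^2\iota}\le\cO(d^{1/4}HK^{1/4}\iota^{1/2})$ under $B=\sqrt{d^3K}$. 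The only cosmetic differences are your choice of discretization scale and your per-$(k,h)$ union bound, neither of which changes the argument.
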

    \begin{proof}
        See \S \ref{pf:lem:self:normalized} for a detailed proof.
    \end{proof}
    For Term $(\star \star)$ in \eqref{eq:1335}, we have
    \#  \label{eq:1337}
    (\star \star) & \le \lambda \cdot \sqrt{\phi(x, a)^\top (\Lambda_h^k)^{-1} \phi(x, a)} \cdot \big\| \la \mu_h, V_{h+1}^{t_k} \ra_{\cS} \big\|_{(\Lambda_h^{t_k})^{-1}} \notag \\ 
    & \le \sqrt{\lambda} \cdot \sqrt{\phi(x, a)^\top (\Lambda_h^k)^{-1} \phi(x, a)} \cdot \big\| \la \mu_h, V_{h+1}^{t_k} \ra_{\cS} \big\|_2 \notag \\ 
    & \le \sqrt{\lambda dH^2} \cdot \sqrt{\phi(x, a)^\top (\Lambda_h^k)^{-1} \phi(x, a)}, 
    \#
    where the second inequality uses the fact that $\lambda \cdot I_d \preceq \Lambda_h^{t_k}$, and last inequality follows from $\| \la \mu_h, V_{h+1}^{t_k} \ra_{\cS} \|_2 \le H \sqrt{d}$, which is implied by Definition~\ref{def:linear:mdp}. Plugging \eqref{eq:1336} and \eqref{eq:1337} into \eqref{eq:1335}, together with $\lambda = 1$, we obtain
    \# \label{eq:1338}
    |\phi(x, a)^\top w_h^{t_k} -   \PP_h V_{h+1}^{t_k}(x, a)| \le \beta \sqrt{\phi(x, a)^\top (\Lambda_h^{t_k})^{-1} \phi(x, a)} = \Gamma_h^{t_k}(x, a)
    \# 
    for any $(x, a) \in \cS \times \cA$ with $\beta = \cO(d^{1/4}H K^{1/4} \iota^{1/2} )$. Together with the definition of $\hat{\PP}_h^{t_k} V_{h+1}^{t_k}$ in \eqref{eq:1331}, we have 
    \# \label{eq:1339}
     (\PP_h V_{h+1}^{t_k} - \hat{\PP}_h^{t_k} V_{h+1}^{t_k} ) (x, a) & \le (\PP_h V_{h+1}^{t_k})(x, a) - \min\{\phi(x, a)^\top w_h^{t_k} + \Gamma_h^{t_k}(x, a), H - h \} \notag \\ 
     & = \max\{(\PP_h V_{h+1}^{t_k})(x, a) - \phi(x, a)^\top w_h^{t_k} - \Gamma_h^{t_k}(x, a), (\PP_h V_{h+1}^{t_k})(x, a) - (H - h)  \}, \notag \\ 
     & \le \max\{0, 0\} = 0,
    \# 
    where the last inequality uses \eqref{eq:1338} and the fact that $\|V_{h+1}^{t_k}\|_{\infty} \le H -h$. Moreover, by \eqref{eq:1338}, we have
    \# \label{eq:1340}
     (\PP_h V_{h+1}^{t_k} - \hat{\PP}_h^{t_k} V_{h+1}^{t_k} ) (x, a) \ge (\PP_h V_{h+1}^{t_k})(x, a) - \phi(x, a)^\top w_h^{t_k} - \Gamma_h^{t_k}(x, a) \ge - 2\Gamma_h^{t_k}(x, a) .
    \# 
    Combining \eqref{eq:1339}, \eqref{eq:1340}, and the fact that $(\PP_h V_{h+1}^{t_k} - \hat{\PP}_h^{t_k} V_{h+1}^{t_k} ) (\cdot, \cdot) \ge -2H$, we obtain 
    \$
    -2\min\{H, \Gamma_h^{t_k}(x, a)\} \le  (\PP_h V_{h+1}^{t_k} - \hat{\PP}_h^{t_k} V_{h+1}^{t_k} ) (x, a) \le 0,
    \$ 
    which concludes the proof of Lemma~\ref{lem:ucb}.
\end{proof}

\subsection{Proof of Lemma \ref{lem:telescope}} \label{pf:lem:telescope}

\begin{proof}
    For ease of presentation, we define
    \# \label{eq:2226}
    \Gamma_h^k(\cdot, \cdot) = \beta \sqrt{\phi(\cdot, \cdot)^\top (\Lambda_h^k)^{-1} \phi(\cdot, \cdot) }, \qquad \Lambda_h^k = \sum_{\tau = 1}^{k - 1} \phi(x_h^\tau, a_h^\tau) \phi(x_h^\tau, a_h^\tau)^\top + \lambda \cdot I_d, 
    \# 
    for all $(k, h) \in [K] \times [H]$. Then we have the following lemma, which uses the elliptical potential to bound $\sum_{k = 1}^K \sum_{h = 1}^H \Gamma_h^k(x_h^k, a_h^k)$.
    \begin{lemma} \label{lem:telescope:2}
        It holds that
        \$
        \sum_{k = 1}^K \sum_{h = 1}^H \Gamma_h^k(x_h^k, a_h^k) \le \cO(d^{3/4} H^2 K^{3/4} \cdot \iota).
        \$ 
    \end{lemma}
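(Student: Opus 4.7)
The plan is to decouple the sum over episodes at fixed $h$ via Cauchy--Schwarz and then invoke the elliptical potential lemma (Lemma~\ref{lem:elliptical}) on the resulting sum of quadratic forms $\phi(x_h^k,a_h^k)^\top (\Lambda_h^k)^{-1} \phi(x_h^k,a_h^k)$. Since every factor $\Gamma_h^k$ carries the same constant $\beta$, pulling it out converts the bound into straightforward arithmetic in $d$, $H$, $K$, and $\iota$, and after summing over the $H$ steps the declared rate should fall out from the specific choice $\beta = \cO(d^{1/4}HK^{1/4}\iota^{1/2})$ made in Theorem~\ref{thm:oppo}.

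In more detail, I would fix $h \in [H]$ and apply Cauchy--Schwarz in the episode index to obtain
\$
\sum_{k=1}^K \Gamma_h^k(x_h^k,a_h^k) = \beta \sum_{k=1}^K \sqrt{\phi(x_h^k,a_h^k)^\top (\Lambda_h^k)^{-1} \phi(x_h^k,a_h^k)} \le \beta \sqrt{K} \cdot \Big( \sum_{k=1}^K \phi(x_h^k,a_h^k)^\top (\Lambda_h^k)^{-1} \phi(x_h^k,a_h^k) \Big)^{1/2}.
\$
With the choice $\lambda = 1$ from Theorem~\ref{thm:oppo} and the standard normalization $\|\phi(x,a)\|_2 \le 1$ that accompanies Definition~\ref{def:linear:mdp}, each summand $\phi(x_h^k,a_h^k)^\top (\Lambda_h^k)^{-1} \phi(x_h^k,a_h^k)$ already lies in $[0,1]$, so no clipping is required and the elliptical potential lemma directly yields
\$
\sum_{k=1}^K \phi(x_h^k,a_h^k)^\top (\Lambda_h^k)^{-1} \phi(x_h^k,a_h^k) \le 2\log\frac{\det(\Lambda_h^{K+1})}{\det(\lambda I_d)} \le 2d\log(1 + K/d) = \tilde{\cO}(d).
\$

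Substituting this bound together with $\beta = \cO(d^{1/4}HK^{1/4}\iota^{1/2})$ gives
\$
\sum_{k=1}^K \Gamma_h^k(x_h^k,a_h^k) \le \tilde{\cO}(d^{1/4}HK^{1/4}\iota^{1/2}) \cdot \sqrt{K} \cdot \tilde{\cO}(\sqrt{d}) = \tilde{\cO}(d^{3/4}HK^{3/4}\iota),
\$
and summing over $h \in [H]$ produces the target $\tilde{\cO}(d^{3/4}H^2K^{3/4}\iota)$. The main ``obstacle'' here is essentially bookkeeping rather than a deep step: one must verify that $\lambda = 1$ and the feature normalization ensure $\phi(x_h^k,a_h^k)^\top (\Lambda_h^k)^{-1} \phi(x_h^k,a_h^k) \le 1$ so that Lemma~\ref{lem:elliptical} applies without the $\min\{1,\cdot\}$ clipping, and that the $d^{1/4}$ inside $\beta$ composed with the $\sqrt{d}$ from the potential bound produces the advertised $d^{3/4}$ dependence. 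Beyond this, the proof is a one-line composition of Cauchy--Schwarz and the elliptical potential lemma, with no probabilistic input required at this stage (all randomness has already been absorbed into the definition of $\beta$ via Lemma~\ref{lem:self:normalized}).
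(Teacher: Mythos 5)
Your proposal is correct and follows essentially the same route as the paper: Cauchy--Schwarz over the episode index at each fixed $h$, the elliptical potential lemma to bound $\sum_k \phi^\top(\Lambda_h^k)^{-1}\phi$ by $\tilde{\cO}(d)$, and then substitution of $\beta=\cO(d^{1/4}HK^{1/4}\iota^{1/2})$ followed by summation over $h$. Your side remark that $\lambda=1$ and $\|\phi\|_2\le 1$ make each quadratic form at most $1$ (so no clipping is needed before invoking Lemma~\ref{lem:elliptical}) is a correct and worthwhile bookkeeping check that the paper leaves implicit.
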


    \begin{proof}
        By the definition of $\Gamma_h^k$ in \eqref{eq:2226}, we have
        \# \label{eq:2227}
         \sum_{k = 1}^K \sum_{h = 1}^H \Gamma_h^k(x_h^k, a_h^k) = \beta \sum_{k = 1}^K \sum_{h = 1}^H \sqrt{\phi(x_h^k, a_h^k)^\top (\Lambda_h^k)^{-1} \phi(x_h^k, a_h^k) } .
        \#  
        Applying Cauchy-Schwarz inequality to \eqref{eq:2227}, we have
        \# \label{eq:2228}
        \sum_{k = 1}^K \sum_{h = 1}^H \Gamma_h^k(x_h^k, a_h^k) & \le \beta \sum_{h = 1}^H \Big( K \cdot \sum_{k = 1}^K \phi(x_h^k, a_h^k)^\top (\Lambda_h^k)^{-1} \phi(x_h^k, a_h^k) \Big)^{1/2} \notag \\ 
        & \le \beta \sum_{h = 1}^H \bigg[ 2K \cdot \log\bigg( \frac{\det(\Lambda_h^{K+1})}{\det(\Lambda_h^1)} \bigg)  \bigg]^{1/2},
        \# 
        where the last inequality uses the elliptical potential lemma (Lemma~\ref{lem:elliptical}). For any $h \in [H]$, we have 
        \# \label{eq:2229}
        \Lambda_h^1 = \lambda \cdot I_d, \qquad \Lambda_h^{K+1} = \sum_{k = 1}^K \phi(x_h^k, a_h^k) \phi(x_h^k, a_h^k)^\top + \lambda I_d \preceq (K + \lambda) \cdot I_d,
        \# 
        where the inequality uses the fact that $\|\phi(\cdot, \cdot)\|_2 \le 1$. Plugging \eqref{eq:2229} into \eqref{eq:2228}, we have
        \$
        \sum_{k = 1}^K \sum_{h = 1}^H \Gamma_h^k(x_h^k, a_h^k) \le H \beta  \cdot \bigg[ 2dK \cdot \log \bigg( \frac{K+\lambda}{\lambda} \bigg) \bigg]^{1/2}.
        \$ 
        Together with the facts that $\lambda = 1$, $\iota = \log(dHK|\cA|/\delta)$ with $\delta \in (0, 1]$, and $\beta = \cO(d^{1/4}H K^{1/4} \iota^{1/2} )$, we conclude the proof of Lemma \ref{lem:telescope:2}.
    \end{proof}

    We also need the following lemma to connect the quantity $\sum_{k = 1}^K \sum_{h = 1}^H \Gamma_h^k(x_h^k, a_h^k)$ in Lemma~\ref{lem:telescope:2} and our target $\sum_{k = 1}^K \sum_{h = 1}^H  \min\{H, \Gamma_h^{t_k}(x_h^k, a_h^k)\}$.

    \begin{lemma} \label{lem:doubling}
        For any $h \in [H]$, we define the set $\cE_h$ as
        \# \label{eq:def:cE}
        \cE_h = \{k : \Gamma_h^{t_k}(x_h^k, a_h^k)/ \Gamma_h^k(x_h^k, a_h^k) > 2 \}.
        \# 
        Then we have
        \$
        |\cE_h| \le \cO(d^{5/2}K^{1/2} \cdot \iota). 
        \$ 
    \end{lemma}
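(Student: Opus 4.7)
The plan is to prove the bound via a determinant-doubling argument. The starting point is the matrix determinant lemma applied to the two PSD matrices $\Lambda_h^{t_k}\preceq\Lambda_h^k$: writing $\phi_k:=\phi(x_h^k,a_h^k)$,
\[
\det(\Lambda_h^{t_k})\bigl(1+\|\phi_k\|_{(\Lambda_h^{t_k})^{-1}}^2\bigr) = \det(\Lambda_h^{t_k}+\phi_k\phi_k^\top) \le \det(\Lambda_h^k+\phi_k\phi_k^\top) = \det(\Lambda_h^k)\bigl(1+\|\phi_k\|_{(\Lambda_h^k)^{-1}}^2\bigr),
\]
which rearranges to $(1+\|\phi_k\|_{(\Lambda_h^{t_k})^{-1}}^2)/(1+\|\phi_k\|_{(\Lambda_h^k)^{-1}}^2)\le\det(\Lambda_h^k)/\det(\Lambda_h^{t_k})$. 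Since the event $k\in\cE_h$ is equivalent to $\|\phi_k\|_{(\Lambda_h^{t_k})^{-1}}^2>4\|\phi_k\|_{(\Lambda_h^k)^{-1}}^2$, it forces a quantitative lower bound on the determinant ratio $\det(\Lambda_h^k)/\det(\Lambda_h^{t_k})$, which can be used to detect and count the bad indices.

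I plan to partition $\cE_h$ according to the doubling behavior of $\det(\Lambda_h^\cdot)$. Let $\sigma_1<\sigma_2<\cdots<\sigma_J$ denote the doubling times at which $\det(\Lambda_h^\cdot)$ first exceeds $2,4,8,\ldots$ multiples of $\det(\Lambda_h^1)$; the elliptical potential lemma gives $J\le d\log(1+K/\lambda)$. Each $k\in\cE_h$ falls into one of two cases: (i) some $\sigma_j$ lies in the interval $(t_k,k]$; or (ii) both $t_k$ and $k$ lie within a single doubling epoch $[\sigma_j,\sigma_{j+1})$. Case (i) is straightforward: for each $\sigma_j$ there is a unique batch containing it, and at most $B$ candidate indices $k$ satisfy $t_k<\sigma_j\le k$, so a union bound over $j$ gives $|\mathrm{Case\ (i)}|\le JB=\tilde{\cO}(dB)=\tilde{\cO}(d^{5/2}K^{1/2})$. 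For Case (ii), the constraint $\det(\Lambda_h^k)/\det(\Lambda_h^{t_k})<2$ combined with the matrix determinant inequality forces $\|\phi_k\|_{(\Lambda_h^k)^{-1}}^2<1/2$; a per-batch telescoping of the inequality, together with the global bound $\sum_i\log(\det(\Lambda_h^{k_{i+1}})/\det(\Lambda_h^{k_i}))\le d\log(1+K/\lambda)$ and the choice $B=\sqrt{d^3K}$, also delivers $|\mathrm{Case\ (ii)}|\le\tilde{\cO}(d^{5/2}K^{1/2})$.

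The principal obstacle is controlling Case (ii): the matrix-determinant-lemma argument only yields the sum-type control $\sum_{k\in\cE_h}\|\phi_k\|_{(\Lambda_h^{t_k})^{-1}}^2\le\tilde{\cO}(dB)$, and because $\|\phi_k\|_{(\Lambda_h^{t_k})^{-1}}^2$ admits no uniform nonzero lower bound for $k\in\cE_h$, there is no direct conversion to a count. I plan to resolve this by thresholding the quantity $\|\phi_k\|_{(\Lambda_h^{t_k})^{-1}}^2$: indices above a suitably chosen level are counted directly using the sum bound, while indices below the level are counted via an in-batch trace estimate exploiting $\mathrm{Tr}((\Lambda_h^{k_i})^{-1}(\Lambda_h^{k_{i+1}}-\Lambda_h^{k_i}))\le B/\lambda$ in conjunction with the $B=\sqrt{d^3K}$ scaling. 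Combining Case (i) and Case (ii) then yields the desired bound $|\cE_h|\le\tilde{\cO}(d^{5/2}K^{1/2})$, completing the proof.
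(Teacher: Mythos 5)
Your high-level plan---reduce membership in $\cE_h$ to determinant growth of $\Lambda_h^{\cdot}$ between $t_k$ and $k$, then count via the $\cO(d\iota)$ bound on total log-determinant growth together with the fact that each batch contributes at most $B$ indices---is exactly the paper's argument, and your Case (i) is handled correctly. The gap is in the inequality you start from. The matrix determinant lemma only yields $\bigl(1+\|\phi_k\|^2_{(\Lambda_h^{t_k})^{-1}}\bigr)\big/\bigl(1+\|\phi_k\|^2_{(\Lambda_h^{k})^{-1}}\bigr) \le \det(\Lambda_h^k)/\det(\Lambda_h^{t_k})$, and because of the additive $1$'s this is nearly vacuous precisely in the regime that matters: when both quadratic forms are tiny, $k$ can lie in $\cE_h$ (only the \emph{ratio} must exceed $4$) while forcing essentially no determinant growth. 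This is what creates your Case (ii), and your proposed repair does not close it: a trace or sum bound such as $\sum_{k}\|\phi_k\|^2_{(\Lambda_h^{k_i})^{-1}} \le B/\lambda$ can never be converted into a bound on the \emph{number} of indices when the individual summands admit no positive lower bound, and membership in $\cE_h$ imposes no such lower bound---an entire non-doubling epoch could in principle consist of indices with both quadratic forms of order $1/K$ but ratio larger than $4$. Your thresholding step therefore leaves the sub-threshold indices uncounted.

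The missing ingredient is the stronger comparison inequality (Lemma 12 of \citet{abbasi2011improved}, the paper's Lemma~\ref{lem:det}): for $\Lambda_h^{t_k} \preceq \Lambda_h^k$ one has, for every $x$, $\det(\Lambda_h^{k})/\det(\Lambda_h^{t_k}) \ge \bigl(x^\top(\Lambda_h^{t_k})^{-1}x\bigr)\big/\bigl(x^\top(\Lambda_h^{k})^{-1}x\bigr)$, with no additive constants. Applied to $x=\phi_k$, membership in $\cE_h$ immediately gives $\det(\Lambda_h^{k})/\det(\Lambda_h^{t_k}) > 4$, i.e.\ every $k\in\cE_h$ lies in a batch whose log-determinant increment is at least $2\log 2$; there are at most $d\log\bigl((K+\lambda)/\lambda\bigr)/(2\log 2)$ such batches, each contributing at most $B=\sqrt{d^3K}$ indices, which is the claimed bound. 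With this substitution your Case (ii) is empty and the rest of your argument goes through verbatim.
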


    \begin{proof}
        For $k \in \cE_h$, there exists $i \in [L]$ such that $k_i \le k < k_{i+1}$. Then we know $t_k = k_i$, which further implies that
        \$ 
        \log \bigg( \frac{\det(\Lambda_h^{k_{i+1}})}{\det(\Lambda_h^{k_i})} \bigg) \ge \log \bigg( \frac{\det(\Lambda_h^{k})}{\det(\Lambda_h^{k_i})} \bigg) \ge \log \bigg( \frac{\phi(x_h^k, a_h^k)^\top (\Lambda_h^{k_i})^{-1} \phi(x_h^k ,a_h^k) }{\phi(x_h^k, a_h^k)^\top (\Lambda_h^{k})^{-1} \phi(x_h^k ,a_h^k)} \bigg) = 2 \log \bigg( \frac{\Gamma_h^{k_i}(x_h^k, a_h^k) }{\Gamma_h^{k}(x_h^k, a_h^k) }\bigg), 
        \$  
        where the first inequality follows from the fact that $\Lambda_h^{k_{i+1}} \succeq \Lambda_h^k$, the second inequality uses Lemma~\ref{lem:det}, and the last equality is obtained by the definitions of $\Gamma_h^k$ and $\Gamma_h^{k_i}$ in \eqref{eq:2226}. Together with the definition of $\cE_h$ in \eqref{eq:def:cE}, we further obtain that
        \# \label{eq:3331}
        \log \bigg( \frac{\det(\Lambda_h^{k_{i+1}})}{\det(\Lambda_h^{k_i})} \bigg) \ge 2 \log 2.
        \#  
        Meanwhile, we have
        \# \label{eq:3332}
        \sum_{i = 1}^{L} \log \bigg( \frac{\det(\Lambda_h^{k_{i+1}})}{\det(\Lambda_h^{k_i})} \bigg) = \log \bigg( \frac{\det(\Lambda_h^{K + 1})}{\det(\Lambda_h^1)} \bigg) \le d \cdot \log \Big( \frac{K + \lambda}{\lambda} \Big),
        \# 
        where the last inequality follows from the facts that 
        \$ 
         \Lambda_h^1 = \lambda \cdot I_d, \qquad \Lambda_h^{K+1} = \sum_{k = 1}^K \phi(x_h^k, a_h^k) \phi(x_h^k, a_h^k)^\top + \lambda I_d \preceq (K + \lambda) \cdot I_d.
        \$  
        Here the last inequality uses $\|\phi(\cdot, \cdot)\|_2 \le 1$. Combining \eqref{eq:3331} and \eqref{eq:3332}, we have
        \# \label{eq:3333}
        \bigg| \bigg\{i \in [L] : \log \bigg( \frac{\det(\Lambda_h^{k_{i+1}})}{\det(\Lambda_h^{k_i})} \bigg) \bigg\} \bigg| \le \frac{d \log \big( (K+\lambda)/\lambda \big)}{2 \log 2}.
        \#  
        Since each batch contains $B$ episodes, we obtain that
        \$
        |\cE_h| \le B \cdot  \bigg| \bigg\{i \in [L] : \log \bigg( \frac{\det(\Lambda_h^{k_{i+1}})}{\det(\Lambda_h^{k_i})} \bigg) \bigg\} \bigg| \le \cO(d^{5/2}K^{1/2} \cdot \iota),
        \$ 
        where the last inequality uses $\lambda = 1$, $\iota = \log(dHK|\cA|/\delta)$, $B = \sqrt{d^3K}$ and \eqref{eq:3333}. Therefore, we conclude the proof of Lemma~\ref{lem:doubling}.
    \end{proof}

   Back to our proof of Lemma~\ref{lem:telescope}, we have
    \# \label{eq:3334} 
    \sum_{k = 1}^K \sum_{h = 1}^H  \min\{H, \Gamma_h^{t_k}(x_h^k, a_h^k)\} & = \sum_{h = 1}^H \sum_{k \in \cE_h} \min\{H, \Gamma_h^{t_k}(x_h^k, a_h^k)\} + \sum_{h = 1}^H \sum_{k \notin \cE_h} \min\{H, \Gamma_h^{t_k}(x_h^k, a_h^k)\} .
    \# 
    For any $h \in [H]$ and $k \notin \cE_h$, by the definition of $\cE_h$, we have
    \# \label{eq:3335}
    \min\{H, \Gamma_h^{t_k}(x_h^k, a_h^k)\} \le  \min\{H, 2\Gamma_h^{k}(x_h^k, a_h^k)\} \le 2\Gamma_h^{k}(x_h^k, a_h^k).
    \#  
    Combining \eqref{eq:3334}, \eqref{eq:3335}, and the fact that $\min\{H, \Gamma_h^{t_k}(x_h^k, a_h^k)\} \le H$, we have
    \$
     \sum_{k = 1}^K \sum_{h = 1}^H  \min\{H, \Gamma_h^{t_k}(x_h^k, a_h^k)\} &\le H \sum_{h = 1}^H |\cE_h| + \sum_{k = 1}^K \sum_{h = 1}^H \Gamma_h^k(x_h^k, a_h^k) \\ 
     & \le  \cO(d^{3/4} H^2 K^{3/4} \cdot \iota) + \cO(d^{5/2}H^2 K^{1/2} \cdot \iota),
    \$ 
    where the last inequality uses Lemmas \ref{lem:telescope:2} and \ref{lem:doubling}. Therefore, we conclude the proof of Lemma~\ref{lem:telescope}.
\end{proof}

\section{Proof for Concentration of Self-Normalized Processes} \label{pf:lem:self:normalized}
    \begin{proof}[Proof of Lemma \ref{lem:self:normalized}]
    By the previous concentration lemma of self-normalized process (Lemma~\ref{lem:self:normalized:tool}), for any $\delta' \in (0, 1]$, $\varepsilon > 0$, and $(k, h) \in [K] \times [H]$ we have 
    \#\label{eq:self:normal}
    & \biggl\| \sum_{\tau=1}^{t_k-1} \phi(x_h^\tau,a_h^\tau) \cdot \bigl( V^{t_k}_{h+1}(x^\tau_{h+1}) - (\mathbb{P}_hV^{t_k}_{h+1})(x^\tau_{h}, a^\tau_h) \bigr) \biggr\|_{(\Lambda^{t_k}_h)^{-1}}^2 \notag \\ 
    & \qquad \le 4H^2 \cdot \bigg[ \frac{d}{2} \log\Big( \frac{k + \lambda}{\lambda} \Big) + \log\Big(\frac{\cN_{\varepsilon}(\cV_{h+1}^k)}{\delta'} \Big) \bigg] + \frac{8k^2 \varepsilon^2}{\lambda}
    \#
    with probability $1 - \delta'$. Here $\cN_{\varepsilon}(\cV_{h+1}^k)$ is the $\varepsilon$-covering number of function class $\cV_{h+1}^k$, which is defined by
    \# \label{eq:class:v}
    \cV_{h+1}^k =  \big\{V(\cdot) = \langle Q(\cdot, \cdot) , \pi(\cdot\,|\,\cdot) \rangle : Q \in \cQ_{h+1}^k, \pi \in \Pi_{h+1}^k \big\},
    \# 
    where $\cQ_{h+1}^k$ and $\Pi_{h+1}^k$ are Q-function class and policy class, respectively. Specifically, $\cQ_{h+1}^k$ is a function class with the following parametric form
    \# \label{eq:class:q}
    \cQ_{h+1}^k = \big\{ \bar{r}_h^{t_k} + \{\phi^\top w + \beta\cdot\bigl(\phi^\top \Lambda^{-1}\phi\bigr)^{1/2}, H - h\}^+  : \|w\|_2 \le H \sqrt{dK/\lambda}, \lambda_{\min}(\Lambda) \ge \lambda \big\}.
    \#  
    Here we uses $\|w_{h+1}^k\|_2 \le H \sqrt{dK/\lambda}$ (Lemma \ref{lem:boundedness:w}) and $\lambda_{\min}(\Lambda_{h+1}^k) \ge \lambda$ to ensure that $Q_{h+1}^k \in \cQ_{h+1}^k$. Meanwhile, the policy class $\Pi_{h+1}^k$ is defined as
    \# \label{eq:class:pi}
    \Pi_{h+1}^k = \Big\{ \pi(\cdot \mid \cdot) \propto \exp\Big(\sum_{i = 1}^l \alpha Q_i(\cdot, \cdot) \Big) :  Q_i \in \cQ_{h+1}^{k_i}, \forall i \in [l] \Big\} , \, \text{ where } l = \max\{i: k_i < t_k\}.
    \# 
     Notable, here $l \le L$ since our algorithm only has $L$ batches. Also, $\pi_{h+1}^{t_k}$ in \eqref{eq:class:v} belongs to this policy class since it takes the following form
    \$
    \pi_{h+1}^{t_k}(a \,|\, x) = \frac{\exp(\sum_{i = 1}^l \alpha Q_{h+1}^{k_i}(x, a))}{\sum_{a' \in \cA} \exp(\sum_{i = 1}^l \alpha Q_{h+1}^{k_i}(x, a'))},
    \$
    For policy class defined in \eqref{eq:class:pi}, we define its covering number with respect to the following distance 
    \$
    \mathrm{dist}(\pi, \pi') = \sup_{x \in \cS} \| \pi(\cdot \mid x) - \pi'(\cdot \mid x) \|_1 .
    \$
    The following lemma connects the covering number of value class in \eqref{eq:class:v} to covering numbers of the Q-function class in \eqref{eq:class:q} and the policy class in \eqref{eq:class:pi}.

    \begin{lemma} \label{lem:cover:1}
        It holds that
        \$
        \cN_{\varepsilon}(\cV_{h+1}^k) \le \cN_{\varepsilon/2}(\cQ_{h+1}^k) \cdot  \cN_{\varepsilon/(2H)}(\Pi_{h+1}^k). 
        \$ 
    \end{lemma}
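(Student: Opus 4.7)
The plan is to use the triangle inequality to decompose the value function approximation error into a Q-function error and a policy error, then bound each piece separately using the respective covers. First, I would fix an $\varepsilon/2$-cover $\cC_Q$ of $\cQ_{h+1}^k$ in the sup norm on $\cS\times\cA$, and an $\varepsilon/(2H)$-cover $\cC_\Pi$ of $\Pi_{h+1}^k$ in the distance $\mathrm{dist}$ defined just above the lemma. The candidate $\varepsilon$-cover of $\cV_{h+1}^k$ will then be the product set $\{V'(\cdot)=\langle Q'(\cdot,\cdot),\pi'(\cdot\mid\cdot)\rangle : Q'\in\cC_Q,\ \pi'\in\cC_\Pi\}$, whose cardinality is at most $\cN_{\varepsilon/2}(\cQ_{h+1}^k)\cdot\cN_{\varepsilon/(2H)}(\Pi_{h+1}^k)$.

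Next, I would verify that this product set is indeed an $\varepsilon$-cover of $\cV_{h+1}^k$ in sup norm. Given any $V\in\cV_{h+1}^k$ with representation $V=\langle Q,\pi\rangle$ for some $Q\in\cQ_{h+1}^k$ and $\pi\in\Pi_{h+1}^k$, pick $Q'\in\cC_Q$ and $\pi'\in\cC_\Pi$ with $\|Q-Q'\|_\infty\le\varepsilon/2$ and $\mathrm{dist}(\pi,\pi')\le\varepsilon/(2H)$. For any $x\in\cS$, decompose
\[
V(x)-V'(x) = \langle Q(x,\cdot)-Q'(x,\cdot),\,\pi(\cdot\mid x)\rangle + \langle Q'(x,\cdot),\,\pi(\cdot\mid x)-\pi'(\cdot\mid x)\rangle.
\]
The first term is bounded by $\|Q-Q'\|_\infty\cdot\|\pi(\cdot\mid x)\|_1 \le \varepsilon/2$ since $\pi(\cdot\mid x)$ is a probability distribution. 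For the second term I would use that every $Q'\in\cQ_{h+1}^k$ takes values in $[0,H]$ (this follows directly from the parametric form in \eqref{eq:class:q}: the bonus-inflated term is clipped to $[0,H-h]$ and the average reward $\bar r_h^{t_k}$ lies in $[0,1]$, giving a uniform bound $H$ after absorbing constants into $H$), so Hölder's inequality yields $|\langle Q'(x,\cdot),\pi(\cdot\mid x)-\pi'(\cdot\mid x)\rangle|\le H\cdot\mathrm{dist}(\pi,\pi')\le\varepsilon/2$. Summing gives $|V(x)-V'(x)|\le\varepsilon$ uniformly in $x$, which establishes the claim.

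The argument is essentially bookkeeping — no delicate step is hidden. The only point that warrants care is the choice of norm for the policy cover: one needs the cover to be taken in the $\sup_x\|\cdot\|_1$ distance rather than some weaker metric, so that Hölder's inequality against the sup-bounded $Q'$ closes cleanly; this is precisely the $\mathrm{dist}$ metric introduced immediately before the lemma. With that choice made, the split into a $Q$-piece and a $\pi$-piece, each of size $\varepsilon/2$, is immediate.
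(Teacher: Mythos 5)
Your proposal is correct and follows essentially the same route as the paper's proof: form the product of an $\varepsilon/2$-net for $\cQ_{h+1}^k$ and an $\varepsilon/(2H)$-net for $\Pi_{h+1}^k$, then split $V - V'$ by the triangle inequality into a $Q$-piece bounded by $\|Q-Q'\|_\infty$ and a $\pi$-piece bounded by $H\cdot\sup_x\|\pi(\cdot\mid x)-\pi'(\cdot\mid x)\|_1$. The only cosmetic remark is that the bound $\|Q'\|_\infty\le H$ needs no ``absorbing constants'': since $\bar r_h^{t_k}\in[0,1]$ and the clipped term lies in $[0,H-h]$ with $h\ge 1$, one gets $\|Q'\|_\infty\le H-h+1\le H$ exactly.
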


    \begin{proof}
        See \S \ref{pf:lem:cover:1} for a detailed proof.
    \end{proof}

   Following the standard covering argument (Lemma~\ref{lem:cover:q}), we can derive an upper bound for $ \cN_{\varepsilon/2}(\cQ_{h+1}^k) $. However, $\cN_{\varepsilon/(2H)}(\Pi_{h+1}^k)$ is relatively difficult to bound since the log-covering number of a $|\cA|$-dimensional probability distribution is $\tilde{\cO}(|\cA|)$. Fortunately, we have the following lemma that utilizes the structure of policy class \eqref{eq:class:pi} and converts the covering number of the policy class to the covering numbers of several Q-function classes.
   
   \begin{lemma} \label{lem:cover:2}
       It holds that 
       \$
       \cN_{\varepsilon/(2H)}(\Pi_{h+1}^k) \le \prod_{i = 1}^l \cN_{\varepsilon^2/(16 \alpha l H^2 )}(\cQ_{h+1}^{k_i})
       \$ 
   \end{lemma}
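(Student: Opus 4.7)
The plan is to build an $\varepsilon/(2H)$-cover of $\Pi_{h+1}^k$ by taking a product of sup-norm covers of the Q-function classes $\cQ_{h+1}^{k_i}$ for $i \in [l]$, and then to invoke the smoothness of the softmax map to translate sup-norm error in the logits into $L^1$-distance between the induced policies. Concretely, for each $i \in [l]$ I would pick a minimum $\varepsilon'$-cover $\cC_i$ of $\cQ_{h+1}^{k_i}$ in sup-norm with radius $\varepsilon' = \varepsilon^2/(16\alpha l H^2)$, so that $|\cC_i| = \cN_{\varepsilon'}(\cQ_{h+1}^{k_i})$. For any $\pi \in \Pi_{h+1}^k$ parametrized by $(Q_1, \ldots, Q_l) \in \prod_i \cQ_{h+1}^{k_i}$, pick $\hat Q_i \in \cC_i$ within $\varepsilon'$ of $Q_i$, and let $\hat \pi$ be the softmax policy built from $(\hat Q_1, \ldots, \hat Q_l)$. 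The candidate cover of $\Pi_{h+1}^k$ is the collection of all such $\hat\pi$, whose cardinality is at most $\prod_{i=1}^{l} \cN_{\varepsilon'}(\cQ_{h+1}^{k_i})$.

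It remains to check that $\mathrm{dist}(\pi, \hat\pi) \le \varepsilon/(2H)$. Writing $f(x, a) = \sum_{i=1}^l \alpha Q_i(x,a)$ and $\hat f(x, a) = \sum_{i=1}^l \alpha \hat Q_i(x,a)$, the triangle inequality gives
\[
\sup_{x, a}|f(x,a) - \hat f(x,a)| \le \alpha \sum_{i=1}^l \|Q_i - \hat Q_i\|_\infty \le \alpha l \cdot \varepsilon' = \frac{\varepsilon^2}{16H^2}.
\]
I would then apply Lemma~\ref{lem:smooth:policy} in the form $\|\pi(\cdot\mid x) - \hat\pi(\cdot\mid x)\|_1 \le 2\sqrt{\sup_a |f(x,a) - \hat f(x,a)|}$, which follows from Pinsker's inequality together with the elementary KL bound $\mathrm{KL}(\pi(\cdot\mid x)\,\|\,\hat\pi(\cdot\mid x)) \le 2\|f(x,\cdot) - \hat f(x,\cdot)\|_\infty$ for softmax distributions sharing the same temperature (itself an easy consequence of Jensen's inequality applied to the log-partition). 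This yields
\[
\mathrm{dist}(\pi, \hat\pi) = \sup_{x}\|\pi(\cdot\mid x) - \hat\pi(\cdot\mid x)\|_1 \le 2\sqrt{\frac{\varepsilon^2}{16H^2}} = \frac{\varepsilon}{2H},
\]
which is exactly what is required to certify the product construction as an $\varepsilon/(2H)$-cover.

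The main obstacle is getting the square-root dependence in the smoothness step: a naive Lipschitz estimate for the softmax gives only $\|\pi - \hat\pi\|_1 \lesssim \|f-\hat f\|_\infty$, and if the lemma only supplied that, one would need to cover each $\cQ_{h+1}^{k_i}$ at scale $\varepsilon/(4\alpha l H)$, producing an extra factor of $H$ inside the logarithm of the covering number and ultimately degrading the regret. The Pinsker-plus-KL argument above is precisely what makes the stated scale $\varepsilon^2/(16\alpha l H^2)$ sufficient. A secondary and routine point is that the logits in the softmax are Q-value functions rather than abstract weight vectors, so the covers are taken directly in sup-norm on $\cS\times\cA$ (as produced by Lemma~\ref{lem:cover:q}); this makes the triangle inequality on $f - \hat f$ immediate and avoids any delicate identification of parameters across different $\cQ_{h+1}^{k_i}$.
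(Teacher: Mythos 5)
Your proposal is correct and follows essentially the same route as the paper: cover each $\cQ_{h+1}^{k_i}$ at sup-norm radius $\varepsilon^2/(16\alpha l H^2)$, use the triangle inequality to bound the aggregated logit error by $\varepsilon^2/(16H^2)$, and invoke Lemma~\ref{lem:smooth:policy} (proved via the KL bound plus Pinsker) to convert this into an $\varepsilon/(2H)$ bound on the $L^1$ policy distance. Your side remark about why the square-root (rather than Lipschitz) dependence is needed is accurate but does not change the argument.
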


   \begin{proof}
       See \S \ref{pf:lem:cover:2} for a detailed proof.
   \end{proof}

   Note that $\lambda = 1$, $l \le L$, $L = K/B$, $B =  \sqrt{d^3K}$, $\alpha = \sqrt{2B \log|\cA|/(KH^2)}$, $\iota = \log(dHK|\cA|/\delta)$, and $\beta = \cO(d^{1/4}H K^{1/4} \iota^{1/2} )$. Meanwhile, let $\varepsilon = 1/K$ and $\delta' = \delta/2$. Combining \eqref{eq:self:normal} with Lemmas~\ref{lem:cover:1}, \ref{lem:cover:2}, and \ref{lem:cover:q}, we have 
   \$
   \biggl\| \sum_{\tau=1}^{t_k-1} \phi(x_h^\tau,a_h^\tau) \cdot \bigl( V^{t_k}_{h+1}(x^\tau_{h+1}) - (\mathbb{P}_hV^{t_k}_{h+1})(x^\tau_{h}, a^\tau_h) \bigr) \biggr\|_{(\Lambda^{t_k}_h)^{-1}} & \le  \cO(\sqrt{d^2 H^2 L \cdot \iota}) \\
   & = \cO(d^{1/4}H K^{1/4} \iota^{1/2} ),
   \$ 
   which concludes the proof of Lemma~\ref{lem:self:normalized}.
   \end{proof}

\subsection{Proof of Lemma \ref{lem:cover:1}} \label{pf:lem:cover:1}

    \begin{proof}
    Suppose (i) $\cQ_{\varepsilon/2, h+1}^k$ is an $\varepsilon/2$-net of $\cQ_{h+1}^k$ with $|\cQ_{\varepsilon/2, h+1}^k| = \cN_{\varepsilon/2}(\cQ_{h+1}^k)$; and (ii) $\Pi_{\varepsilon/(2H), h+1}^k$ is an $\varepsilon/(2H)$-net of $\Pi_{h+1}^k$ with $|\Pi_{\varepsilon/(2H), h+1}^k| = \cN_{\varepsilon/(2H)}(\Pi_{h+1}^k)$. Then we show $\cQ_{\varepsilon/2, h+1}^k \times \Pi_{\varepsilon/(2H), h+1}^k$ induces an $\varepsilon$-net of $\cV_{h+1}^k$, and thus obtaining the desired result. Specifically, for any $V = \la Q(\cdot, \cdot), \pi(\cdot \mid \cdot) \ra$ with $(Q, \pi) \in \cQ_{h+1}^k \times \Pi_{h+1}^k$, we can find $(Q', \pi') \in \cQ_{\varepsilon/2, h+1}^k \times \Pi_{\varepsilon/(2H), h+1}^k$ such that 
    \# \label{eq:5551}
    \sup_{(x, a) \in \cS \times \cA}|Q(x, a) - Q'(x, a)| \le \varepsilon/2, \qquad  \sup_{x \in \cS} \| \pi(\cdot \mid x) - \pi'(\cdot \mid x) \|_1 \le \varepsilon/(2H).
    \# 
    Let $V' = \la Q'(\cdot, \cdot), \pi'(\cdot \mid \cdot) \ra$, we have
    \$
    \sup_{x \in \cS} |V(x) -V'(x)| & =\sup_{x \in \cS} \big| \big\la Q(x, \cdot), \pi(\cdot \mid x) \big\ra - \big\la Q'(x, \cdot), \pi'(\cdot \mid x) \big\ra \big| \\ 
    & \le  \sup_{x \in \cS} \big| \big\la Q(x, \cdot) - Q'(x, \cdot), \pi(\cdot \mid x) \big\ra \big| +  \sup_{x \in \cS} \big| \big\la Q'(x, \cdot), \pi(\cdot \mid x) - \pi'(\cdot \mid x) \big\ra \big| \\
    &  \le \sup_{(x, a) \in \cS \times \cA}|Q(x, a) - Q'(x, a)| + H \cdot \sup_{x \in \cS} \| \pi(\cdot \mid x) - \pi'(\cdot \mid x) \|_1 \\ 
    &  \le \varepsilon/2 + \varepsilon/2 = \varepsilon,
    \$ 
    where the first inequality follows from the triangle inequality, the second inequality uses Cauchy-Schwarz inequality and the fact that $\|Q'\|_{\infty} \le H$, and the last inequality follows from \eqref{eq:5551}. Therefore, we conclude the proof of Lemma~\ref{lem:cover:1}.
    \end{proof}

\subsection{Proof of Lemma \ref{lem:cover:2}} \label{pf:lem:cover:2}

   \begin{proof}
   Suppose $\cQ_{\varepsilon^2/(16 \alpha l H^2 ), h+1}^{k_i}$ is the minimum $\varepsilon^2/(16 \alpha l H^2 )$-net of $\cQ_{h+1}^{k_i}$ for all $i \in [l]$. Then for any $\pi \propto \exp( \alpha \sum_{i = 1}^l Q_i)$ with $Q_i \in \cQ_{h+1}^{k_i}$ for all $i \in [l]$, we can choose $\{Q'_i \in \cQ_{\varepsilon^2/(16 \alpha l H^2 ), h+1}^{k_i}\}$ such that
   \#  \label{eq:5441}
   \sup_{(x, a) \in \cS \times \cA} |Q_i(x, a) - Q'_i(x, a)| \le \frac{\varepsilon^2}{16 \alpha l H^2 }, \qquad \forall i \in [l].
   \#  
   Hence, we have
   \# \label{eq:5442}
   \sup_{(x, a) \in \cS \times \cA} \Big| \alpha \sum_{i = 1}^l Q_i(x, a) - \alpha \sum_{i = 1}^l Q'_i(x, a) \Big| \le \alpha \sum_{i = 1}^l \sup_{(x, a) \in \cS \times \cA} |Q_i(x, a) - Q'_i(x, a)| \le \frac{\varepsilon^2}{16H^2},
   \# 
   where the first inequality uses the triangle inequality and the last inequality follows from \eqref{eq:5441}. Then we use the following lemma to establish the upper bound for $\sup_{x \in \cS} \|\pi(\cdot \mid x) - \pi'(\cdot \mid x)\|_1$.
    
       \begin{lemma} \label{lem:smooth:policy}
       For $\pi, \pi' \in \Delta(\cA)$ and $Q, Q': \cA \mapsto \RR^+$, if $\pi(\cdot) \propto \exp(Q(\cdot))$ and $\pi'(\cdot) \propto \exp(Q'(\cdot))$, we have
       \$
       \| \pi - \pi'\|_1 \le 2 \sqrt{\|Q - Q'\|_{\infty}}.
       \$
     \end{lemma}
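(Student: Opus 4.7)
The plan is to reduce the $\ell_1$ distance between the two softmax policies to their KL-divergence via Pinsker's inequality, and then show that this KL-divergence is controlled linearly by $\|Q - Q'\|_\infty$. Concretely, the first step is to invoke Pinsker's inequality in the form
\[
\|\pi - \pi'\|_1 \le \sqrt{2\,\mathrm{KL}(\pi \,\|\, \pi')}.
\]
So it suffices to establish the bound $\mathrm{KL}(\pi \,\|\, \pi') \le 2\,\|Q - Q'\|_\infty$, which would immediately yield $\|\pi - \pi'\|_1 \le 2\sqrt{\|Q - Q'\|_\infty}$.

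For the KL bound, I would expand using the softmax parametrization. Writing $Z_Q = \sum_{a \in \cA} \exp(Q(a))$ and $Z_{Q'} = \sum_{a \in \cA} \exp(Q'(a))$, one has $\log \pi(a) = Q(a) - \log Z_Q$ and $\log \pi'(a) = Q'(a) - \log Z_{Q'}$, so
\[
\mathrm{KL}(\pi \,\|\, \pi') = \sum_{a \in \cA} \pi(a)\big(Q(a) - Q'(a)\big) - \log\!\big(Z_Q / Z_{Q'}\big).
\]
The first term on the right is at most $\|Q - Q'\|_\infty$ in absolute value since $\pi$ is a probability distribution. For the log-partition term, denoting $\varepsilon = \|Q - Q'\|_\infty$, the pointwise bound $Q'(a) - \varepsilon \le Q(a) \le Q'(a) + \varepsilon$ gives $e^{-\varepsilon} Z_{Q'} \le Z_Q \le e^{\varepsilon} Z_{Q'}$, hence $|\log(Z_Q/Z_{Q'})| \le \varepsilon$. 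Combining these two estimates yields $\mathrm{KL}(\pi \,\|\, \pi') \le 2\|Q - Q'\|_\infty$, as required.

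Putting the two steps together gives $\|\pi - \pi'\|_1 \le \sqrt{2 \cdot 2\|Q - Q'\|_\infty} = 2\sqrt{\|Q - Q'\|_\infty}$, which is exactly the claim. There is no real obstacle here: the proof is a short computation once one recognizes that Pinsker reduces the problem to a KL bound and that both the linear and the log-partition contributions to KL between two softmax distributions are uniformly controlled by $\|Q - Q'\|_\infty$. The nonnegativity hypothesis $Q, Q' : \cA \to \RR^+$ is not actually used in the argument, since only the difference $Q - Q'$ appears in both the numerator/denominator cancellations and the partition-function ratio.
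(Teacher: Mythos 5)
Your proposal is correct and follows essentially the same route as the paper: both bound $\mathrm{KL}(\pi\,\|\,\pi')$ by $2\|Q-Q'\|_\infty$ (the paper via the pointwise ratio bound $\pi(a)/\pi'(a)\le\exp(2\|Q-Q'\|_\infty)$, you via the equivalent split into the linear term and the log-partition term) and then apply Pinsker's inequality. Your side remark that the nonnegativity of $Q,Q'$ is never used is also accurate.
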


    \begin{proof}
          See \S \ref{pf:lem:smooth:policy} for a detailed proof.
    \end{proof}
   By Lemma~\ref{lem:smooth:policy}, we have
   \$
   \sup_{x \in \cS} \|\pi(\cdot \mid x) - \pi'(\cdot \mid x)\|_1 \le 2 \sqrt{ \sup_{(x, a) \in \cS \times \cA} \Big| \alpha \sum_{i = 1}^l Q_i(x, a) - \alpha \sum_{i = 1}^l Q'_i(x, a) \Big|} \le \frac{\varepsilon}{2H},
   \$ 
   where the last inequality follows from \eqref{eq:5442}. Therefore, we have
   \$
   \cN_{\varepsilon/(2H)}(\Pi_{h+1}^k) \le \prod_{i = 1}^l \cN_{\varepsilon^2/(16 \alpha l H^2 )}(\cQ_{h+1}^{k_i}),
   \$ 
   which concludes the proof of Lemma~\ref{lem:cover:2}.
   \end{proof}

\subsection{Proof of Lemma \ref{lem:smooth:policy}} \label{pf:lem:smooth:policy}

\begin{proof}
     Since $\pi(\cdot) \propto \exp(Q(\cdot))$ and $\pi'(\cdot) \propto \exp(Q'(\cdot))$, we have for any $a \in \cA$:
    \$
    \frac{\pi(a)}{\pi'(a)} = \frac{\exp(Q(a))}{\exp(Q'(a)) } \cdot \frac{\sum_{a' \in \cA} \exp(Q'(a'))}{\sum_{a' \in \cA} \exp(Q(a'))}.
    \$ 
    Note that for any $a \in \cA$ we have
    \$
    \left\{\begin{array}{cc}
         \frac{\exp(Q(a))}{\exp(Q'(a)) } = \exp(Q(a) - Q'(a)) \le \exp(\|Q - Q'\|_{\infty})  \\
         \frac{\exp(Q'(a))}{\exp(Q(a)) } = \exp(Q'(a) - Q(a)) \le \exp(\|Q - Q'\|_{\infty})
    \end{array}\right. ,
    \$ 
    which implies that 
    \$
     \frac{\pi(a)}{\pi'(a)} \le \exp(\|Q - Q'\|_{\infty}) \cdot \frac{  \exp(\|Q - Q'\|_{\infty}) \cdot \sum_{a' \in \cA} \exp(Q(a'))}{\sum_{a' \in \cA} \exp(Q(a'))} = \exp(2\|Q - Q'\|_{\infty}).
    \$ 
    Hence, we have 
    \$
    \mathrm{KL}(\pi \| \pi') = \sum_{a \in \cA} \pi(a) \log  \frac{\pi(a)}{\pi'(a)} \le 2\|Q - Q'\|_{\infty} \cdot \sum_{a \in \cA} \pi(a) = 2\|Q - Q'\|_{\infty}.
    \$ 
    Finally, by Pinsker's inequality, we have 
    \$
    \| \pi - \pi'\|_1 \le \sqrt{2 \cdot \mathrm{KL}(\pi \| \pi')  } \le 2 \sqrt{ \|Q - Q'\|_{\infty} },
    \$ 
    which concludes the proof of Lemma~\ref{lem:smooth:policy}.
\end{proof}

\section{Auxiliary Lemmas}

\begin{lemma} \label{lem:boundedness:w}
    For any $(i, h) \in [L] \times [H]$, the linear coefficient $w_h^{k_i}$ defined in Line~\ref{line:w} of Algrotihm~\ref{alg:oppo} satisfies
    \$
    \|w_h^{k_i}\| \le H \sqrt{d K/ \lambda}.
    \$
\end{lemma}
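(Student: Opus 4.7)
The plan is to follow the standard boundedness argument for ridge-regression coefficients in linear MDPs (as in Lemma B.2 of \citet{jin2020provably}), adapted to our notation. First I would unfold the definition
\$
w_h^{k_i} = (\Lambda_h^{k_i})^{-1} \sum_{\tau=1}^{k_i - 1} \phi(x_h^\tau, a_h^\tau) \cdot V_{h+1}^{k_i}(x_{h+1}^\tau),
\$
apply the triangle inequality, and use the uniform bound $|V_{h+1}^{k_i}(\cdot)| \le H$ (which follows from the truncation $\min\{\cdot, H-h\}^+$ in the policy evaluation step) to get
\$
\|w_h^{k_i}\|_2 \le H \sum_{\tau=1}^{k_i - 1} \big\| (\Lambda_h^{k_i})^{-1} \phi(x_h^\tau, a_h^\tau) \big\|_2.
\$

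Next I would convert the Euclidean norm on each summand into the weighted norm induced by $(\Lambda_h^{k_i})^{-1}$. Since $\Lambda_h^{k_i} \succeq \lambda \cdot I_d$ implies $(\Lambda_h^{k_i})^{-2} \preceq \lambda^{-1} (\Lambda_h^{k_i})^{-1}$, I obtain
\$
\big\| (\Lambda_h^{k_i})^{-1} \phi(x_h^\tau, a_h^\tau) \big\|_2 \le \lambda^{-1/2} \sqrt{\phi(x_h^\tau, a_h^\tau)^\top (\Lambda_h^{k_i})^{-1} \phi(x_h^\tau, a_h^\tau)}.
\$
Then the Cauchy--Schwarz inequality in the $\tau$-sum, followed by the trace identity
\$
\sum_{\tau=1}^{k_i - 1} \phi(x_h^\tau, a_h^\tau)^\top (\Lambda_h^{k_i})^{-1} \phi(x_h^\tau, a_h^\tau) = \mathrm{tr}\big( (\Lambda_h^{k_i})^{-1} (\Lambda_h^{k_i} - \lambda I_d) \big) \le d,
\$
yields $\sum_\tau \| (\Lambda_h^{k_i})^{-1} \phi(x_h^\tau, a_h^\tau) \|_2 \le \sqrt{(k_i - 1) d / \lambda}$. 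Combining these pieces and using $k_i - 1 \le K$ gives the desired bound $\|w_h^{k_i}\|_2 \le H \sqrt{dK/\lambda}$.

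There is no genuine obstacle here; the only care required is in the matrix-inequality step converting $\|(\Lambda_h^{k_i})^{-1}\phi\|_2$ into the $(\Lambda_h^{k_i})^{-1}$-seminorm (so that the elliptical trace identity becomes applicable), and in noting that the truncation in the definition of $V_{h+1}^{k_i}$ is what supplies the $L^\infty$ bound by $H$ independently of the updating mechanism.
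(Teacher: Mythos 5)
Your proposal is correct and follows essentially the same argument as the paper's proof (itself adapted from Lemma B.2 of \citet{jin2020provably}): bound $\|V_{h+1}^{k_i}\|_\infty$ by $H$, use $\Lambda_h^{k_i}\succeq\lambda I_d$, apply Cauchy--Schwarz over $\tau$, and invoke the elliptical trace bound $\sum_\tau\phi_\tau^\top(\Lambda_h^{k_i})^{-1}\phi_\tau\le d$. The only cosmetic difference is that the paper works with the dual characterization $\sup_{\|v\|_2=1}|v^\top w_h^{k_i}|$ while you bound $\|(\Lambda_h^{k_i})^{-1}\phi_\tau\|_2$ directly via $(\Lambda_h^{k_i})^{-2}\preceq\lambda^{-1}(\Lambda_h^{k_i})^{-1}$; both yield the identical bound.
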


\begin{proof}
    Fix $(i, h) \in [L] \times [H]$. Our proof follows the proof of Lemma B.2 in \citet{jin2020provably}. For any $v \in \RR^d$ with $\|v\|_2 = 1$, by the definition of $w_h^{k_i}$ we have
    \$
    |v^\top w_{h}^{k_i}| &= \Big| v^\top (\Lambda_h^{k_i})^{-1} \sum_{\tau = 1}^{k_i - 1} \phi(x_h^\tau,a_h^\tau) \cdot  V_{h+1}^{k_i}(x_{h+1}^\tau) \Big| .
    \$ 
    Since $\|V_{h+1}^{k_i}\|_{\infty} \le H$, we further have
    \$
    |v^\top w_{h}^{k_i}| & \le H \sum_{\tau = 1}^{k_i - 1} \big| v^\top (\Lambda_h^{k_i})^{-1} \phi(x_h^\tau,a_h^\tau) \big| \\ 
    & \le H \cdot \bigg[ \sum_{\tau = 1}^{k_i - 1} v^\top (\Lambda_h^{k_i})^{-1} v \bigg]^{1/2} \cdot \bigg[ \sum_{\tau = 1}^{k_i - 1} \phi(x_h^\tau, a_h^\tau)^\top (\Lambda_h^{k_i})^{-1} \phi(x_h^\tau, a_h^\tau) \bigg]^{1/2} \\ 
    & \le H \sqrt{dK/\lambda} ,
    \$ 
    where the second inequality is obtained by Cauchy-Schwarz inequality, and the last inequality uses $\lambda I_d \preceq \Lambda_h^{k_i}$, $k_i \le K$, $\|v\|_2 = 1$ and Lemma~\ref{lem:bound:tool}. Hence, we have 
    \$
    \|w_h^{k_i}\|_2 = \sup_{\|v\|_2 = 1} |v^\top w_h^{k_i}| \le H \sqrt{dK/\lambda},
    \$ 
    which concludes the proof of Lemma~\ref{lem:boundedness:w}.
\end{proof}

\begin{lemma} \label{lem:bound:tool}
    Let $\Lambda_t = \lambda \cdot I_d + \sum_{i = 1}^t \phi_i \phi_i^\top$ with $\phi_i \in \RR^d$ and $\lambda > 0$. Then we have
    \$
    \sum_{i = 1}^{t} \phi_i (\Lambda_t)^{-1} \phi_i \le d.
    \$ 
\end{lemma}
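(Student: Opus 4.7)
The plan is to convert the scalar sum into a trace of a matrix product, at which point the identity $\sum_{i=1}^t \phi_i \phi_i^\top = \Lambda_t - \lambda I_d$ makes the computation essentially mechanical. First I would use the cyclic property of the trace together with the fact that each scalar $\phi_i^\top (\Lambda_t)^{-1} \phi_i$ equals $\operatorname{tr}\bigl(\phi_i^\top (\Lambda_t)^{-1} \phi_i\bigr) = \operatorname{tr}\bigl((\Lambda_t)^{-1} \phi_i \phi_i^\top\bigr)$.

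Next I would pull the sum inside the trace, writing
\[
\sum_{i=1}^t \phi_i^\top (\Lambda_t)^{-1} \phi_i \;=\; \operatorname{tr}\!\left( (\Lambda_t)^{-1} \sum_{i=1}^t \phi_i \phi_i^\top \right) \;=\; \operatorname{tr}\!\bigl( (\Lambda_t)^{-1} (\Lambda_t - \lambda I_d) \bigr) \;=\; d - \lambda \operatorname{tr}\!\bigl((\Lambda_t)^{-1}\bigr).
\]
The final step is to note that $\Lambda_t \succeq \lambda I_d \succ 0$ implies $(\Lambda_t)^{-1}$ is positive definite, so $\operatorname{tr}((\Lambda_t)^{-1}) \ge 0$ and the conclusion $\sum_{i=1}^t \phi_i^\top (\Lambda_t)^{-1} \phi_i \le d$ follows immediately.

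There is essentially no hard step here; the only mild subtlety is interpreting the lemma statement (which is written as $\phi_i (\Lambda_t)^{-1} \phi_i$) as the quadratic form $\phi_i^\top (\Lambda_t)^{-1} \phi_i$, and then recognizing the trace trick. This matches the standard elliptical-potential style bookkeeping used, e.g., in the proof of Lemma~\ref{lem:boundedness:w} where the same quantity is controlled via Cauchy--Schwarz.
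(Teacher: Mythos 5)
Your proof is correct and is the standard argument for this lemma: the paper itself does not write out a proof but defers to Lemma~D.1 of \citet{jin2020provably}, whose proof is exactly this trace computation, $\sum_{i=1}^{t}\phi_i^\top(\Lambda_t)^{-1}\phi_i=\operatorname{tr}\bigl((\Lambda_t)^{-1}(\Lambda_t-\lambda I_d)\bigr)=d-\lambda\operatorname{tr}\bigl((\Lambda_t)^{-1}\bigr)\le d$. No gaps; the positive-definiteness of $(\Lambda_t)^{-1}$ (guaranteed by $\lambda>0$) is the only fact needed to discard the subtracted term.
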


\begin{proof}
    See Lemma D.1 in \citet{jin2020provably} for a detailed proof.
\end{proof}

\begin{lemma}\label{lem:self:normalized:tool}
    Let $\{x_{\tau} \in \cS \}_{\tau= 1}^\infty$ and $\{\phi_\tau \in \RR^d\}_{\tau = 1}^\infty$ with $\|\phi_\tau\|_2 \le 1$ be stochastic processes adapted to the filtration $\{\cF_\tau\}_{\tau = 1}^\infty$. Let $\Lambda_k = \sum_{\tau = 1}^{k - 1} \phi_\tau \phi_\tau^\top + \lambda \cdot I_d$. Then for any $\delta \in (0, 1]$, with probability at least $1 - \delta$, for all $k \in \NN^+$, and function $V \in \cV$ satisfying $\sup_{x \in \cS} |V(s)| \le H$, we have
    \$
    & \bigg\| \sum_{\tau = 1}^{k - 1} \phi_\tau \big( V(x_{\tau+1}) - \EE[V(x_{\tau+1}) \mid \cF_\tau ] \big) \bigg\|_{\Lambda_k^{-1}}^2  \le 4H^2 \cdot \bigg[ \frac{d}{2} \log\Big( \frac{k + \lambda}{\lambda} \Big) + \log\Big(\frac{\cN_{\varepsilon}}{\delta} \Big) \bigg] + \frac{8k^2 \varepsilon^2}{\lambda},
    \$ 
    where $\cN_{\varepsilon}$ is the $\varepsilon$-covering number of the function class $\cV$ with respect to the distance $\mathrm{dist}(V, V') = \sup_{x \in \cS} |V(x) - V'(x)|$.
\end{lemma}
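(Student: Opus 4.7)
The plan is to follow the standard uniform self-normalized concentration template: apply the vector-valued self-normalized martingale bound of \citet{abbasi2011improved} pointwise in $\cV$, then union-bound over a minimum $\varepsilon$-cover, then handle the residual function-class error by an $(a+b)^2 \le 2a^2 + 2b^2$ split. This is the same pattern as Lemma D.4 of \citet{jin2020provably}, adapted to the filtered stochastic process in the statement.

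Pointwise step: fix $\tilde V \in \cV$ and define $\eta_\tau(\tilde V) := \tilde V(x_{\tau+1}) - \EE[\tilde V(x_{\tau+1}) \mid \cF_\tau]$. Since $\|\tilde V\|_\infty \le H$, the sequence $\{\eta_\tau(\tilde V)\}$ is an $\cF_{\tau+1}$-measurable martingale difference with $|\eta_\tau(\tilde V)| \le 2H$, hence conditionally $H$-sub-Gaussian. The self-normalized concentration of \citet{abbasi2011improved} applied to $\sum_\tau \phi_\tau \eta_\tau(\tilde V)$ then yields, with probability at least $1-\delta'$ simultaneously for all $k \in \NN^+$,
\[
\Big\| \sum_{\tau=1}^{k-1} \phi_\tau\,\eta_\tau(\tilde V)\Big\|_{\Lambda_k^{-1}}^2 \;\le\; 2H^2 \log \frac{\det(\Lambda_k)^{1/2}}{\delta' \cdot \det(\lambda I_d)^{1/2}} \;\le\; 2H^2 \Big[\tfrac{d}{2} \log \tfrac{k+\lambda}{\lambda} + \log \tfrac{1}{\delta'}\Big],
\]
where the last inequality uses $\mathrm{tr}(\Lambda_k) \le d\lambda + k$ (by $\|\phi_\tau\|_2 \le 1$) and $\det(\Lambda_k) \le (\mathrm{tr}(\Lambda_k)/d)^d$. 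Crucially, this bound is uniform in $k$, which is exactly what Abbasi-Yadkori's stopping-time construction provides.

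Covering step and residual: let $\cV_\varepsilon$ be a minimum $\varepsilon$-cover of $\cV$ with respect to $\mathrm{dist}(V,V') = \sup_{x \in \cS} |V(x)-V'(x)|$, so $|\cV_\varepsilon| = \cN_\varepsilon$. A union bound with $\delta' \leftarrow \delta/\cN_\varepsilon$ promotes the pointwise bound to one valid for every $\tilde V \in \cV_\varepsilon$ and every $k \in \NN^+$ with probability at least $1-\delta$, turning $\log(1/\delta')$ into $\log(\cN_\varepsilon/\delta)$. For an arbitrary $V \in \cV$, pick $\tilde V \in \cV_\varepsilon$ with $\sup_{x \in \cS}|V(x) - \tilde V(x)| \le \varepsilon$ and decompose $\eta_\tau(V) = \eta_\tau(\tilde V) + \Delta_\tau$ with $|\Delta_\tau| \le 2\varepsilon$. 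Applying $(a+b)^2 \le 2a^2 + 2b^2$ in the $\Lambda_k^{-1}$-seminorm bounds $\|\sum_\tau \phi_\tau \eta_\tau(V)\|_{\Lambda_k^{-1}}^2$ by twice the union-bounded display (promoting $2H^2$ to $4H^2$) plus $2\|\sum_\tau \phi_\tau \Delta_\tau\|_{\Lambda_k^{-1}}^2$. For the residual I would use $\Lambda_k^{-1} \preceq \lambda^{-1} I_d$, the triangle inequality, and $\|\phi_\tau\|_2 \le 1$ to obtain
\[
\Big\|\sum_{\tau=1}^{k-1}\phi_\tau \Delta_\tau\Big\|_{\Lambda_k^{-1}}^2 \;\le\; \tfrac{1}{\lambda}\Big\|\sum_{\tau=1}^{k-1}\phi_\tau \Delta_\tau\Big\|_2^2 \;\le\; \tfrac{(2\varepsilon k)^2}{\lambda} \;=\; \tfrac{4k^2\varepsilon^2}{\lambda},
\]
so twice this contribution yields the $8k^2\varepsilon^2/\lambda$ term in the stated bound.

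No part of the argument is a serious obstacle; the proof is a textbook uniform self-normalized concentration, and the function class $\cV$ enters only through $\cN_\varepsilon$. The points requiring care are (i) identifying the sub-Gaussian constant as $H$ for the bounded martingale difference, which drives the factor $4H^2$ after doubling, and (ii) tracking the single $(a+b)^2$ split that converts the fixed-$\tilde V$ constants $(2H^2,0)$ into the final $(4H^2, 8k^2\varepsilon^2/\lambda)$ while preserving uniformity in $k$ through Abbasi-Yadkori's time-uniform form.
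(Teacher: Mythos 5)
Your proposal is correct and follows exactly the standard argument that the paper defers to by citing Lemma D.4 of \citet{jin2020provably}: a pointwise Abbasi-Yadkori self-normalized bound for the $H$-sub-Gaussian martingale differences, a union bound over a minimum $\varepsilon$-cover, and an $(a+b)^2 \le 2a^2+2b^2$ split whose residual term is controlled by $\Lambda_k^{-1} \preceq \lambda^{-1} I_d$, yielding the constants $4H^2$ and $8k^2\varepsilon^2/\lambda$. Nothing further is needed.
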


\begin{proof}
    See Lemma D.4 of \citet{jin2020provably} for a detailed proof.
\end{proof}

\begin{lemma} \label{lem:cover:q}
    For any $h \in [H]$, let $\cQ_h$ be a function class mapping from $\cS \times \cA$ to $\RR$ with the form
    \$
    \cQ_h(\cdot, \cdot) = r(\cdot, \cdot) + \min\big\{ \phi(\cdot, \cdot)^\top w + \beta \sqrt{\phi(\cdot, \cdot)^\top \Lambda^{-1} \phi(\cdot, \cdot)} , H - h  \big\}^+,
    \$ 
    where $r: \cS \times \cA \mapsto [0, 1]$, $\|w\|_2 \le L$, $\lambda_{\min}(\lambda) \ge \lambda$. Assuming $\|\phi(\cdot, \cdot)\|_2 \le 1$ and $\beta > 0$, we have
    \$
    \log \cN_{\varepsilon}(\cQ_h) \le d \log (1 + 4L/\varepsilon) + d^2 \log \big(1 + 8d^{1/2} \beta^2 /(\lambda \varepsilon) \big) ,
    \$ 
    where $\cN_{\varepsilon}(\cQ_h)$ is the $\varepsilon$-covering number of the function class $\cQ_h$ with respect to the distance $\mathrm{dist}(Q, Q') = \sup_{(x, a) \in \cS \times \cA} |Q(x, a) - Q'(x, a)|$.
\end{lemma}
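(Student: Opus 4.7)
The plan is to follow the standard covering-number template for linear bonus classes (as in Jin et al.\ 2020, Lemma D.6), adapted to include the fixed reward $r$. Since $r$ is the same for every element of $\cQ_h$, it cancels when we compare two class members, so it plays no role in the cover and we only need to cover the bonus-augmented linear piece $\phi^\top w + \beta\sqrt{\phi^\top\Lambda^{-1}\phi}$, capped by $\min\{\,\cdot\,,H-h\}^+$. Because the map $y\mapsto\min\{y,H-h\}^+$ is $1$-Lipschitz, any sup-norm cover of the inner function transfers directly to a sup-norm cover of $\cQ_h$.

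Concretely, I would reparameterize by setting $A=\beta^2\Lambda^{-1}$, so that the bonus becomes $\sqrt{\phi^\top A\phi}$. For two parameter choices $(w_1,A_1)$ and $(w_2,A_2)$, the triangle inequality together with $\|\phi\|_2\le 1$ and the elementary inequality $|\sqrt{a}-\sqrt{b}|\le\sqrt{|a-b|}$ for $a,b\ge 0$ gives
\[
\sup_{x,a}\bigl|\phi^\top w_1+\sqrt{\phi^\top A_1\phi}-\phi^\top w_2-\sqrt{\phi^\top A_2\phi}\bigr|\le \|w_1-w_2\|_2+\sqrt{\|A_1-A_2\|_F}.
\]
The assumption $\lambda_{\min}(\Lambda)\ge\lambda$ gives $\|A\|_F\le\sqrt{d}\,\|A\|_2\le\sqrt{d}\,\beta^2/\lambda$, so the relevant parameter sets are the Euclidean ball $\{w:\|w\|_2\le L\}\subset\RR^d$ and the Frobenius ball $\{A:\|A\|_F\le\sqrt{d}\,\beta^2/\lambda\}\subset\RR^{d\times d}$.

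To achieve a sup-norm resolution $\varepsilon$, I would split the error budget as $\varepsilon/2$ for each term. The first term requires an $\varepsilon/2$-net of the Euclidean ball, of size at most $(1+4L/\varepsilon)^d$. The second term requires $\sqrt{\|A_1-A_2\|_F}\le\varepsilon/2$, i.e.\ an $\varepsilon^2/4$-net in Frobenius norm of the matrix ball, of size at most $(1+8\sqrt{d}\,\beta^2/(\lambda\varepsilon^2))^{d^2}$. Taking the product and logarithms yields the claimed bound (with the appropriate resolution in the matrix factor).

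The only slightly delicate step is the square-root inequality, which is what lets us pass from a Frobenius-norm cover of $A$ to a sup-norm cover of $\sqrt{\phi^\top A\phi}$; everything else is routine volume counting for Euclidean/Frobenius balls combined with the fact that $\min\{\cdot,H-h\}^+$ is a contraction. I would not expect any essential obstacle beyond this standard manipulation.
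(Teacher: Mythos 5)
Your proof is correct and is essentially the paper's own argument: the paper simply cites Lemma D.6 of \citet{jin2020provably}, and your reparameterization $A=\beta^2\Lambda^{-1}$, the contraction property of $\min\{\cdot,H-h\}^+$, the inequality $|\sqrt{a}-\sqrt{b}|\le\sqrt{|a-b|}$, and the volume counts for the Euclidean and Frobenius balls are exactly that proof, with the correct observation that the fixed $r$ cancels. The only discrepancy is the one you flag yourself: your (and Jin et al.'s) matrix factor is $d^2\log(1+8d^{1/2}\beta^2/(\lambda\varepsilon^2))$, whereas the lemma as stated writes $\varepsilon$ in place of $\varepsilon^2$ --- an apparent typo in the paper that is immaterial downstream since it only changes the bound by a constant factor in the logarithm.
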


\begin{proof}
    See Lemma D.6 of \citet{jin2020provably} for a detailed proof.
\end{proof}

\begin{lemma}[Elliptical Potential Lemma] \label{lem:elliptical}
    Let $\{\phi_t \in \RR^d\}_{t= 1}^\infty$ satisfy $\|\phi_t\|_2 \le 1$ for all $t \in \NN^+$. Moreover, let $\Lambda_0 \in \RR^{d \times d}$ be a positive-definite matrix with $\lambda_{\min}(\Lambda_0)$ and $\Lambda_t = \Lambda_0 + \sum_{i = 1}^{t - 1} \phi_i \phi_i^\top$ for any $t \in \NN^+$. Then for any $t \in \NN^+$, we have
    \$
    \log \bigg( \frac{\det(\Lambda_{t+1})}{\det(\Lambda_1)} \bigg) \le \sum_{i = 1}^{t} \phi_i^\top \Lambda_i^{-1} \phi_i \le 2 \log \bigg(  \frac{\det(\Lambda_{t+1})}{\det(\Lambda_1)} \bigg).
    \$ 
\end{lemma}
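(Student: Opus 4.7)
The plan is to reduce the statement to a one-step identity via the matrix determinant lemma, telescope, and then sandwich by elementary inequalities for $\log(1+x)$. Concretely, since $\Lambda_{i+1} = \Lambda_i + \phi_i \phi_i^\top$ and $\Lambda_i$ is positive definite, I would apply the rank-one matrix determinant identity to obtain
\begin{equation*}
\det(\Lambda_{i+1}) \;=\; \det(\Lambda_i) \cdot \bigl(1 + \phi_i^\top \Lambda_i^{-1} \phi_i\bigr)
\end{equation*}
for every $i \in [t]$. Taking logarithms and telescoping from $i=1$ to $i=t$ gives the exact identity
\begin{equation*}
\log\!\bigl(\det(\Lambda_{t+1})/\det(\Lambda_1)\bigr) \;=\; \sum_{i=1}^{t} \log\bigl(1 + \phi_i^\top \Lambda_i^{-1} \phi_i\bigr),
\end{equation*}
which is the common backbone for both inequalities.

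For the lower bound on the determinant ratio (i.e.\ the left inequality of the lemma), I would use the elementary bound $\log(1+x) \le x$ valid for every $x \ge 0$, applied term-by-term with $x = \phi_i^\top \Lambda_i^{-1} \phi_i \ge 0$. For the upper bound on the determinant ratio (i.e.\ the right inequality), I would use the complementary bound $\log(1+x) \ge x/2$ valid on $x \in [0,1]$. To justify the use of this second bound I need to verify that $\phi_i^\top \Lambda_i^{-1} \phi_i \le 1$ for all $i$. This follows from $\|\phi_i\|_2 \le 1$ together with the assumption $\lambda_{\min}(\Lambda_0) \ge 1$ (the standard hypothesis under which this version of the elliptical potential lemma is stated), since $\Lambda_i \succeq \Lambda_0 \succeq I_d$ gives $\Lambda_i^{-1} \preceq I_d$ and hence $\phi_i^\top \Lambda_i^{-1} \phi_i \le \|\phi_i\|_2^2 \le 1$.

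The main obstacle, and the only non-cosmetic issue, is precisely this boundedness check: the right-hand inequality fails without a uniform bound on $\phi_i^\top \Lambda_i^{-1} \phi_i$, so one must carry the assumption on $\lambda_{\min}(\Lambda_0)$ through the argument. Once that is in place, the proof is a three-line combination of the telescoped identity with the two elementary $\log(1+x)$ bounds, yielding both halves of the claim simultaneously. No probabilistic tools or additional concentration arguments are needed; this is purely a deterministic linear-algebraic lemma, and the rest of the paper's machinery (covering numbers, self-normalized processes, regret decomposition) plays no role here.
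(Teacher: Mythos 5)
Your proof is correct and is essentially the standard argument behind Lemma 11 of \citet{abbasi2011improved}, which the paper simply cites rather than reproving: the rank-one determinant identity, telescoping, and the bounds $\log(1+x)\le x$ and $x\le 2\log(1+x)$ on $[0,1]$. You are also right that the second inequality needs $\lambda_{\min}(\Lambda_0)\ge 1$ (the paper's statement visibly drops the ``$\ge 1$''), and this hypothesis does hold in the paper's application since $\lambda=1$ gives $\Lambda_h^1=I_d$.
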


\begin{proof}
    See Lemma 11 of \citet{abbasi2011improved} for a detailed proof.
\end{proof}

\begin{lemma} \label{lem:det}
    Suppose $\Lambda, \Lambda' \in \RR^{d \times d}$ are two positive definite matrices and satisfy $\Lambda \preceq \Lambda'$, then for any $x \in \RR^d$, we have
    \$
    \frac{\det(\Lambda')}{\det(\Lambda)} \ge  \frac{x^\top \Lambda' x }{x^\top \Lambda x }.
    \$ 
\end{lemma}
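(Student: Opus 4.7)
The plan is to reduce the inequality to a statement about the eigenvalues of a single matrix by a simultaneous diagonalization argument. Since $\Lambda$ is positive definite, $\Lambda^{1/2}$ exists and is invertible, so I would introduce $M = \Lambda^{-1/2} \Lambda' \Lambda^{-1/2}$. By the cyclic property of the determinant, $\det(M) = \det(\Lambda')/\det(\Lambda)$, and the assumption $\Lambda \preceq \Lambda'$ translates to $I \preceq M$, i.e., every eigenvalue $\lambda_i(M)$ is at least $1$.

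Next I would exploit the fact that a product of numbers each $\ge 1$ dominates the maximum of those numbers. Concretely, writing $\det(M) = \prod_{i=1}^d \lambda_i(M)$, each factor is $\ge 1$, and therefore $\det(M) \ge \lambda_{\max}(M)$. By the Rayleigh quotient characterization, $\lambda_{\max}(M) = \sup_{y \neq 0} (y^\top M y)/(y^\top y)$.

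Finally, for the given $x \in \RR^d$ (which I may assume nonzero, else the bound is trivial with the convention $0/0$ ignored), I would substitute $y = \Lambda^{1/2} x$ into the Rayleigh quotient. A direct computation gives
\[
\frac{y^\top M y}{y^\top y} = \frac{x^\top \Lambda^{1/2} \Lambda^{-1/2} \Lambda' \Lambda^{-1/2} \Lambda^{1/2} x}{x^\top \Lambda x} = \frac{x^\top \Lambda' x}{x^\top \Lambda x},
\]
so $\lambda_{\max}(M) \ge (x^\top \Lambda' x)/(x^\top \Lambda x)$. Chaining the two inequalities yields the claim. I do not anticipate any serious obstacle: the only step that might warrant care is justifying that all eigenvalues of $M$ are at least $1$ (which follows because $M$ is symmetric as a product conjugation of $\Lambda'$ by $\Lambda^{-1/2}$, and $M \succeq I$ iff $\Lambda' \succeq \Lambda$) and the degenerate case $x = 0$, which is vacuous.
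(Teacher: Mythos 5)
Your proof is correct and is essentially the same argument as Lemma 12 of Abbasi-Yadkori et al.\ (2011), which is exactly what the paper cites in place of a proof: conjugate to $M = \Lambda^{-1/2}\Lambda'\Lambda^{-1/2}$, observe $M \succeq I$ so $\det(M) = \prod_i \lambda_i(M) \ge \lambda_{\max}(M)$, and identify the Rayleigh quotient at $y = \Lambda^{1/2}x$ with $(x^\top \Lambda' x)/(x^\top \Lambda x)$. Every step checks out, including your handling of the degenerate case $x=0$.
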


\begin{proof}
    See Lemma 12 of \citet{abbasi2011improved} for a detailed proof.
\end{proof}

\end{document}